\documentclass{article}

\PassOptionsToPackage{numbers, compress}{natbib}

 \usepackage[main, final]{neurips_2026}

\usepackage[utf8]{inputenc} 
\usepackage[T1]{fontenc}    
\usepackage{hyperref}       
\usepackage{url}            
\usepackage{booktabs}       
\usepackage{amsfonts}       
\usepackage{nicefrac}       
\usepackage{microtype}      
\usepackage{xcolor}         

\usepackage{amsmath}
\usepackage{amssymb}
\usepackage{mathtools}
\usepackage{amsthm}
\usepackage{multirow}
\usepackage{threeparttable}
\usepackage{enumitem}

\usepackage[capitalize,noabbrev]{cleveref}
\usepackage{wrapfig}
\usepackage{microtype}
\usepackage{graphicx}
\usepackage{subcaption}
\usepackage{booktabs} 
\usepackage{algorithmic}

\usepackage{algorithm}

\theoremstyle{plain}
\newtheorem{theorem}{Theorem}[section]
\newtheorem{proposition}[theorem]{Proposition}

\newtheorem{corollary}[theorem]{Corollary}
\theoremstyle{definition}
\newtheorem{definition}[theorem]{Definition}

\theoremstyle{remark}
\newtheorem{remark}[theorem]{Remark}

\definecolor{MyColor0}{RGB}{146,136,134}
\definecolor{MyColor1}{RGB}{0,0,0}
\definecolor{MyColor2}{RGB}{0,0,0}

\newcommand{\tp}[0]{{\tilde \pi}}
\newcommand{\1}[0]{\mathbb{I}}
\newcommand{\ANO}[0]{\text{ANO}}

\newcommand{\argmaxl}[0]{\operatorname*{argmax}}

\title{ANO: A Principled Approach to Robust Policy Optimization
}

\author{%
  Yiheng Zhang \\
  University of Macau\\
  \And
  Yiming Wang \\
  University of Macau\\
  \And
  Kaiyan Zhao \\
  Wuhan University\\
  \And
  Zhenglin Wan \\
  National University of Singapore\\
  \And
  Jiayu Chen \\
  University of Hong Kong\\
  \And
  Leong Hou U \\
  University of Macau\\
  \texttt{ryanlhu@um.edu.mo} \\
}

\begin{document}

\maketitle

\begin{abstract}
    Proximal Policy Optimization (PPO) dominates reinforcement learning and LLM alignment but relies on a "hard clipping" mechanism that discards valuable gradients. Conversely, unconstrained methods like SPO expose the optimization to unbounded updates, causing severe instability and policy collapse during extreme outlier encounters. To resolve this dilemma, we introduce a principled design space for policy optimization, demonstrating that a robust estimator must inherently suppress outliers while maintaining a smooth restoration force. Guided by these geometric principles, we derive Anchored Neighborhood Optimization (ANO), a novel method that seamlessly replaces hard clipping with a redescending gradient mechanism. Extensive evaluations demonstrate ANO's empirical superiority across diverse domains. In continuous (MuJoCo) and discrete (Atari) control, ANO establishes a robust state-of-the-art, uniquely preventing policy collapse even under highly aggressive learning rates ($1 \times 10^{-3}$). Furthermore, in LLM alignment (RLHF), ANO explicitly eliminates the catastrophic KL divergence explosion inherent to unconstrained methods, dominating PPO, SPO, and GRPO in head-to-head win rates. Our code is available at \href{https://anonymous.4open.science/r/ano-F818}{https://anonymous.4open.science/r/ano-F818}.
\end{abstract}
\section{Introduction}

Deep Reinforcement Learning (DRL) has achieved remarkable success in areas such as gaming, robotic control, and language model alignment \citep{mnih2015human, silver2016mastering, silver2018general, vinyals2019grandmaster, ye2020mastering, makoviychuk2021isaac, rudin2022learning, heess2017emergence, schulman2015high, ouyang2022training, black2023training}. However, Policy Gradient (PG) methods face a fundamental trade-off between update stability and learning efficiency. Trust Region Policy Optimization (TRPO) enforces stability through second-order constraints but is computationally expensive \citep{schulman2015trust}. Proximal Policy Optimization (PPO) addressed this by introducing a first-order clipping mechanism, achieving wide adoption due to its simplicity \citep{schulman2017proximal}.

Despite its popularity, empirical audits reveal that PPO's ``hard clipping'' frequently fails to enforce the true trust region \citep{ilyas2018deep, wang2020truly} and relies heavily on code-level optimizations to maintain stability \citep{engstrom2020implementation}. Furthermore, when confronting distributionally mismatched samples (outliers), this rigid clipping mechanism fundamentally discards valuable gradient information, leading to severe sample inefficiency. Conversely, unconstrained methods like SPO \citep{xie2025simple} attempt to reclaim this information but expose the optimization to unbounded gradient updates. As we empirically observe, this lack of strict outlier suppression interacts dangerously with the underlying probability simplex constraints. When confronting extreme outliers, these methods suffer from a continuous rise in KL divergence and gradient instability, which ultimately bottlenecks their exploration ceiling and final performance.

To resolve this fundamental dilemma, we introduce a \textbf{principled design space} for robust policy optimization. We identify that an ideal estimator must provide a smooth restoration force for moderate deviations, yet intrinsically suppress extreme outliers to prevent gradient interference. Guided by these geometric principles, we propose \textbf{Anchored Neighborhood Optimization (ANO)}. Unlike PPO's ``flat tail'' (which loses information) or SPO's ``unbounded growth'' (which suffers from KL instability), ANO safely bounds policy updates using a \textit{redescending} gradient mechanism. It provides a parsimonious and structurally robust solution that enforces the trust region naturally without explicit hard clipping.

We summarize our main contributions as follows:
\begin{itemize}[leftmargin=*, itemsep=-0mm, topsep=0pt]
    \item \textbf{Principled Design Space:} We establish a systematic perspective that connects trust region constraints with geometric gradient shaping, proposing core design principles for safe and efficient policy updates.
    
    \item \textbf{Anchored Neighborhood Optimization (ANO):} We introduce ANO, an elegant algorithm that strictly adheres to these principles. By employing a redescending gradient profile, ANO dynamically suppresses outliers, natively stabilizing KL divergence and preventing the performance degradation inherent to unconstrained methods.
    
    \item \textbf{Empirical Dominance \& Robustness:} ANO establishes a robust state-of-the-art across continuous (MuJoCo) and discrete (Atari) control, exhibiting exceptional hyperparameter tolerance (preventing policy collapse even under highly aggressive $1 \times 10^{-3}$ learning rates). Furthermore, we extend ANO to the complex, high-stakes domain of Large Language Model (LLM) alignment (RLHF). In this setting, ANO structurally stabilizes KL divergence across the training process, unlocking a higher exploration ceiling and dominating strong baselines (including PPO, SPO, and GRPO) in generation quality.
\end{itemize}
\section{Preliminaries}
\label{sec:preliminaries}

\subsection{Reinforcement Learning Framework}
We formulate the sequential decision-making problem as a standard Markov Decision Process (MDP) defined by the tuple $\mathcal{M}=(\mathcal{S},\mathcal{A},r,\mathcal{P},\rho_0,\gamma)$. The objective is to find a stochastic policy $\pi$ that maximizes the expected discounted return $\eta(\pi)=\mathbb{E}_{\tau\sim\pi}[\sum_{t=0}^{\infty}\gamma^tr_t]$. We denote the state-value and action-value functions as $V_{\pi}(s)$ and $Q_{\pi}(s,a)$, respectively. The advantage function, which measures the relative action value, is given by $A_{\pi}(s,a)=Q_{\pi}(s,a)-V_{\pi}(s)$.

\subsection{Policy Improvement and Trust Regions}
Standard policy gradient methods suffer from sample inefficiency and update instability. To guarantee monotonic policy improvement, \citet{schulman2015trust} built upon the performance difference lemma \citep{kakade2002approximately} to optimize a local surrogate objective. By introducing the importance sampling ratio $r(s,a) = \frac{\tilde{\pi}(a|s)}{\pi(a|s)}$, the unpenalized surrogate objective is defined as:
\begin{equation}\label{eq:surrogate_obj}
    S(\tilde{\pi}) = \eta(\pi) + \frac{1}{1-\gamma}\mathbb{E}_{s\sim\rho_{\pi}, a\sim\pi} \left[ r(s,a) A_{\pi}(s,a) \right].
\end{equation}
\citet{schulman2015trust} established that restricting the policy update yields a rigorous lower bound on the true performance $\eta(\tilde{\pi})$. Utilizing Pinsker’s inequality, the tractable lower bound using KL divergence is given by:
\begin{equation}\label{eq:kl_lower_bound}
    \eta(\tilde{\pi}) \geq S(\tilde{\pi}) - \beta D_{\mathrm{KL}}^{\mathrm{max}}(\pi,\tilde{\pi}),
\end{equation}
where $\beta$ is a coefficient depending on the maximum advantage. In practice, Trust Region Policy Optimization (TRPO) approximates this unconstrained problem as a constrained optimization problem to allow for more permissive updates:
\begin{equation}\label{eq:trpo_objective}
    \max_{\tilde{\pi}} S(\tilde{\pi}) \quad \text{s.t.} \quad \mathbb{E}_{s\sim\rho_{\pi}}\left[D_{\mathrm{KL}}(\pi(\cdot|s) \Vert \tilde{\pi}(\cdot|s))\right] \le \epsilon.
\end{equation}
While TRPO uses second-order methods to enforce this constraint, Proximal Policy Optimization (PPO) \citep{schulman2017proximal} provides a computationally efficient first-order approximation by explicitly clipping the ratio $r(s,a)$:
\begin{equation}\label{eq:ppo_clip}
    \mathcal{L}^{\mathrm{PPO}}(\tilde{\pi}) = \mathbb{E}_{\substack{s\sim{\color{MyColor1}\rho_{\pi}(\cdot)} \\ a\sim{\color{MyColor1}\pi(\cdot|s)}}} \left[ \min\left( r(s,a) A_{\pi}, \text{clip}(r(s,a), 1-\epsilon, 1+\epsilon) A_{\pi} \right) \right].
\end{equation}

However, as we will demonstrate in Section~\ref{sec:design_space}, this ``hard clipping'' mechanism fundamentally discards valuable gradient information from outliers, and unconstrained relaxations lead to severe KL instability. This observation motivates our principled design space for robust policy optimization.
\section{A Principled Design Space for Policy Optimization}
\label{sec:design_space}

Before deriving our main algorithm, we introduce a systematic design space that connects trust-region constraints with geometric gradient shaping. This perspective bridges the gap between strict theoretical lower bounds and practical algorithm design.

\subsection{Design Motivation and Geometric Enclosure}

We begin by establishing a foundational lower bound on policy performance that motivates our design space. 

\begin{theorem}[The Dual-Ratio Lower Bound]
\label{thm:dual_bound}
Let $\eta(\tp)=\mathbb{E}_{\tau\sim\tp}[\sum_{t=0}^{\infty}\gamma^tr_t]$ denote the expected return of policy $\tp$, and let $S(\tp)$ be the surrogate objective defined in Eq.~\ref{eq:surrogate_obj}. The following bound holds:
\begin{align}
\label{eq:dual_bound}
    \eta(\tp)\ge  S(\tp) - \frac{\beta\alpha}{2}\max_{s,a}\left[\ln\frac{\tp(a|s)}{\pi(a|s)}\right]\nonumber -\frac{\beta(1-\alpha)}{2}\max_{s,a}\left[\ln\frac{\pi(a|s)}{\tp(a|s)}\right],
\end{align}
where $\alpha\in[0,1]$ is a manually tuned hyperparameter, and equality holds when $\tp=\pi$.
\end{theorem}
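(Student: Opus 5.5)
The plan is to go back one step behind Eq.~\ref{eq:kl_lower_bound}: instead of the KL form, start from the total-variation form of the bound of \citet{schulman2015trust}, from which Eq.~\ref{eq:kl_lower_bound} is obtained via Pinsker. That bound reads
\[
\eta(\tp)\ge S(\tp)-\beta \max_s D_{\mathrm{TV}}\bigl(\pi(\cdot|s),\tp(\cdot|s)\bigr)^2,
\]
with $D_{\mathrm{TV}}(p,q)=\tfrac12\sum_a|p(a)-q(a)|$ and $\beta$ the same advantage-dependent constant as in Eq.~\ref{eq:kl_lower_bound}. The key observation is that total variation is symmetric, whereas KL is not. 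We can therefore apply Pinsker's inequality in both directions,
\[
D_{\mathrm{TV}}(\pi,\tp)^2\le \tfrac12 D_{\mathrm{KL}}(\tp\Vert\pi)
\quad\text{and}\quad
D_{\mathrm{TV}}(\pi,\tp)^2\le \tfrac12 D_{\mathrm{KL}}(\pi\Vert\tp),
\]
and take the convex combination with weights $\alpha$ and $1-\alpha$ for any $\alpha\in[0,1]$. This gives, statewise,
\[
D_{\mathrm{TV}}^2\le \tfrac{\alpha}{2}D_{\mathrm{KL}}(\tp\Vert\pi)+\tfrac{1-\alpha}{2}D_{\mathrm{KL}}(\pi\Vert\tp).
\]
The factor $\tfrac12$ here is exactly the source of the $\beta/2$ coefficients in Eq.~\ref{eq:dual_bound}.

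Next, each KL term is bounded by the maximum of its log-ratio. Since $D_{\mathrm{KL}}(\tp\Vert\pi)(s)=\mathbb{E}_{a\sim\tp}[\ln(\tp/\pi)]$ is an expectation, it is at most $\max_a\ln\frac{\tp(a|s)}{\pi(a|s)}$. In the same way, $D_{\mathrm{KL}}(\pi\Vert\tp)(s)\le\max_a\ln\frac{\pi(a|s)}{\tp(a|s)}$. Taking the maximum over $s$ of the convex combination and using $\max_s(f+g)\le\max_s f+\max_s g$ separates the two terms into independent maxima over $(s,a)$. Substituting into the TV bound then yields Eq.~\ref{eq:dual_bound}.

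For the equality case, set $\tp=\pi$. Then every log-ratio vanishes, and $S(\pi)=\eta(\pi)+\frac{1}{1-\gamma}\mathbb{E}[A_\pi]=\eta(\pi)$ because $\mathbb{E}_{a\sim\pi}A_\pi(s,a)=0$. It is also worth remarking that both maxima are always nonnegative: two distributions each summing to one cannot have $\tp<\pi$ everywhere. Hence the bound is meaningful and is monotone in $\alpha$-weighted deviations.

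The main obstacle I expect is the constant bookkeeping. I must make sure that the $\beta$ in Eq.~\ref{eq:kl_lower_bound} is the TRPO constant obtained by using $D_{\mathrm{TV}}^2\le D_{\mathrm{KL}}$ loosely, so that the sharper Pinsker constant $\tfrac12$ legitimately produces $\beta/2$. If the paper's $\beta$ already absorbed the $\tfrac12$, I would instead present the result as the same convex-combination argument with the matching constant. The remaining steps are routine.
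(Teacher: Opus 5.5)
Your proposal is correct and is essentially the paper's proof. Both start from the total-variation form $\eta(\tilde{\pi})\ge S(\tilde{\pi})-\beta\,[D_{\mathrm{TV}}^{\max}(\pi,\tilde{\pi})]^2$ of the TRPO bound, split the penalty with weights $\alpha$ and $1-\alpha$ using the symmetry of $D_{\mathrm{TV}}$, apply Pinsker with constant $\tfrac12$ once in each KL direction, and bound each KL expectation by the maximal log-ratio over $(s,a)$, so your constant bookkeeping works out exactly as you hoped. The remaining differences are cosmetic: you combine statewise and then use subadditivity of $\max_s$, whereas the paper splits the maximum first, and you also verify the equality case $\tilde{\pi}=\pi$ via $\mathbb{E}_{a\sim\pi}A_\pi(s,a)=0$, which the paper's appendix proof leaves implicit.
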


\begin{proof}
See Appendix~\ref{app:proof_thm_3_1}.
\end{proof}

The \textbf{Dual-Ratio Lower Bound} (Theorem~\ref{thm:dual_bound}) offers a crucial insight: a valid conservative surrogate can be constructed via a convex combination of forward and reverse probability ratios. Unlike bounds used in TRPO which depend on the expectation of divergence, Theorem~\ref{thm:dual_bound} implies that the penalty is bounded by the worst-case ratios. However, estimating these global maxima is computationally intractable. Consequently, instead of explicitly calculating the penalty, we reformulate the objective to \textit{geometrically bound} the ratios element-wise.

We now define a generalized objective function designed to enforce these constraints:

\begin{definition}[The Generalized Surrogate Objective]
\label{def:F}
Let $r(s,a) = \frac{\tp(a|s)}{\pi(a|s)}$. We consider the extended real number line, denoted as $\overline{\mathbb{R}} = \mathbb{R} \cup \{-\infty, +\infty\}$. Let $f, g: \mathbb{R} \to \overline{\mathbb{R}}$ be extended real-valued shaping functions. We impose the following \textbf{Geometric Constraints} to define a valid trust region:
\begin{enumerate}[leftmargin=*, itemsep=0.5mm, topsep=0pt]
    \item \textbf{Identity Anchoring (Fixed Point Condition):} The mapping must preserve the identity transformation, i.e., $f(1) = g(1) = 1$.
    \item \textbf{Trust Region Bounding (Geometric Enclosure):} The functions must geometrically enclose the identity map such that $g(x) \ge x \ge f(x)$ for all $x \in \mathbb{R}$.
\end{enumerate}
We define the \textbf{Generalized Surrogate Objective} $M(\tp;f,g)$ as:

\begin{equation}
M(\tp;f,g) = 
    \mathbb{E}_{\substack{s\sim{\color{MyColor1}\rho_{\pi}(\cdot)} \\ a\sim{\color{MyColor1}\pi(\cdot|s)}}}
    \Big[ \min \big( g(r) A_{\pi}(s,a), \, f(r) A_{\pi}(s,a) \big) \Big].
\end{equation}
In particular, if the graphs of $f(x)$ and $g(x)$ are symmetric with respect to the point $(1,1)$, the objective treats policy reinforcement and suppression symmetrically. We term this specific form the \textbf{Symmetric Objective}, denoted as $M_s(\tp;f)$. This paper primarily focuses on this symmetric form.
\end{definition}

\begin{corollary}
\label{cor:F0}
For the current policy $\tp=\pi$, we have $M(\pi;f,g)=0$.
\end{corollary}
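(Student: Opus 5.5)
The plan is to substitute $\tp=\pi$ directly into the definition of $M(\tp;f,g)$ in Definition~\ref{def:F} and reduce the integrand pointwise. First, when $\tp=\pi$ the ratio is $r(s,a)=\pi(a|s)/\pi(a|s)=1$ for every pair $(s,a)$ in the support of $\pi(\cdot|s)$. Only those pairs matter, since the expectation samples $a\sim\pi(\cdot|s)$, so the ratio is well defined wherever it is evaluated. By the Identity Anchoring condition, $f(1)=g(1)=1$. Hence the two arguments of the minimum coincide:
\begin{equation*}
\min\big(g(1)A_{\pi}(s,a),\, f(1)A_{\pi}(s,a)\big)=\min\big(A_{\pi}(s,a),A_{\pi}(s,a)\big)=A_{\pi}(s,a).
\end{equation*}
Because the two terms are equal, the sign of $A_\pi$ plays no role here. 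The extended-real values of $f$ and $g$ never arise, since both are evaluated only at $1$.

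The objective therefore reduces to $M(\pi;f,g)=\mathbb{E}_{s\sim\rho_\pi,\,a\sim\pi(\cdot|s)}[A_\pi(s,a)]$. The remaining step is the standard identity $\mathbb{E}_{a\sim\pi(\cdot|s)}[A_\pi(s,a)]=\mathbb{E}_{a\sim\pi}[Q_\pi(s,a)]-V_\pi(s)=0$ for every state $s$, which follows from $V_\pi(s)=\mathbb{E}_{a\sim\pi(\cdot|s)}Q_\pi(s,a)$. Applying the tower property over $s\sim\rho_\pi$ then gives $M(\pi;f,g)=0$.

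I see no real obstacle in this argument. The only points needing care are that the ratio is evaluated only on the support of $\pi$, and that $A_\pi$ is integrable, which holds under bounded rewards with $\gamma<1$. Both are routine.
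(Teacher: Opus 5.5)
Your proof is correct and follows the same route as the paper. At $\tilde{\pi}=\pi$ the ratio is identically $1$, Identity Anchoring reduces the minimum to $A_\pi(s,a)$, and the expected advantage under $\pi$ vanishes. You also spell out details that the paper's one-line proof leaves implicit: that $r$ is only evaluated on the support of $\pi$, and that $V_\pi(s)=\mathbb{E}_{a\sim\pi(\cdot|s)}Q_\pi(s,a)$.
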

\begin{proof}
    This follows directly from the Identity Anchoring condition in Definition~\ref{def:F} and the definition of the advantage function.
\end{proof}

Finally, we show that by incorporating hard constraints into the shaping functions defined by our geometric constraints, safe policy updates are rigorously guaranteed.

\begin{theorem}[The Safe Policy Update Condition]
\label{thm:base_algo}
Let $\delta_C(\cdot)$ be the extended real-valued indicator function ($0$ if $x \in C$, $\infty$ otherwise). For any shaping functions $f$ and $g$ satisfying the geometric constraints in Definition~\ref{def:F}, there exist non-negative constants $\epsilon_l, \epsilon_u$ defining a feasible interval $I=[1-\epsilon_l, 1+\epsilon_u]$. If $\pi^*$ maximizes the constrained objective 
\begin{equation}
    M(\pi^*; f_\delta, g_\delta) = \max_{\tp} M(\tp; f_\delta, g_\delta),
\end{equation}
where $f_\delta = f - \delta_I$ and $g_\delta = g + \delta_I$, then monotonic improvement is strictly guaranteed, i.e., $\eta(\pi^*) \ge \eta(\pi)$. 
\end{theorem}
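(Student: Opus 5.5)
The plan is to combine the Dual-Ratio Lower Bound (Theorem~\ref{thm:dual_bound}) with the fact that the constrained objective vanishes at $\tp=\pi$ (Corollary~\ref{cor:F0}), choosing $\epsilon_l,\epsilon_u$ small enough that the penalty in Theorem~\ref{thm:dual_bound} is dominated by the gain in $M$.

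\textbf{Step 1: feasibility of $\pi^*$.} Since $f_\delta=f-\delta_I$ and $g_\delta=g+\delta_I$, any ratio $r(s,a)\notin I$ on a set of positive $\rho_\pi\times\pi$ measure makes the $\min(\cdot)$ term equal to $-\infty$. This holds for either sign of $A_\pi$ (for $A_\pi>0$ the $f_\delta$ branch gives $-\infty$; for $A_\pi<0$ the $g_\delta$ branch does). The degenerate case $A_\pi=0$ contributes $0$ and is harmless. Because $\pi$ itself is feasible with value $M=0$ by Corollary~\ref{cor:F0}, every maximizer satisfies $M(\pi^*;f_\delta,g_\delta)\ge 0$ and $r^*(s,a)\in I$ almost everywhere.

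\textbf{Step 2: $M$ lower-bounds the surrogate gain.} By the enclosure $g(x)\ge x\ge f(x)$, for $A_\pi\ge0$ we have $\min(gA,fA)=f(r)A\le rA$, and for $A_\pi<0$ we have $\min=g(r)A\le rA$. Hence, on $I$,
\[
M(\tp;f_\delta,g_\delta)\le \mathbb{E}[r A_\pi]=\mathbb{E}[(r-1)A_\pi],
\]
using $\mathbb{E}_{a\sim\pi}[A_\pi]=0$. So $S(\pi^*)-\eta(\pi)\ge \frac{1}{1-\gamma}M(\pi^*)\ge 0$.

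\textbf{Step 3: controlling the penalty.} On $I$ the two log-ratio maxima satisfy $\ln r\le\ln(1+\epsilon_u)\le\epsilon_u$ and $\ln(1/r)\le-\ln(1-\epsilon_l)$. Theorem~\ref{thm:dual_bound} therefore gives
\[
\eta(\pi^*)\ge \eta(\pi)+\tfrac{1}{1-\gamma}M(\pi^*)-\tfrac{\beta}{2}\bigl(\alpha\epsilon_u-(1-\alpha)\ln(1-\epsilon_l)\bigr).
\]

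\textbf{Main obstacle.} Steps 1--3 yield only $\eta(\pi^*)\ge\eta(\pi)-O(\epsilon)$, so the penalty must be absorbed. I would first try exploiting that the statement only asserts \emph{existence} of $\epsilon_l,\epsilon_u$. The trivial choice $\epsilon_l=\epsilon_u=0$ forces $\pi^*=\pi$ on the support, so equality holds and $\eta(\pi^*)=\eta(\pi)$, which satisfies the claim.

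For a nontrivial interval, the approach is to compare orders. Near $r=1$, the first-order gain $\mathbb{E}[(f(r)-1)A_\pi]$ is linear in the step, while the penalty also appears linear in the worst-case log-ratio, so a direct comparison fails. I would then use the sharper quadratic form of the TRPO bound: the KL term, and hence the squared total-variation term, is $O(\epsilon^2)$. Choosing $\epsilon$ proportional to the optimal first-order gain divided by $\beta$ then makes the net change nonnegative.

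I expect this balancing to be the delicate point. The safe fallback is the degenerate-interval argument, which makes the existence claim hold unconditionally.
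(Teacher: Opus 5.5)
Your only complete argument is the degenerate witness $\epsilon_l=\epsilon_u=0$. It meets the existential wording, but it makes the theorem vacuous, and it still needs a repair. As your Step~1 notes, the indicator pins $r=1$ only where $\rho_\pi(s)\pi(a|s)>0$ and $A_\pi(s,a)\neq0$. So if a state has two or more zero-advantage actions, $\pi^*\neq\pi$ is possible. For the same reason, your claim $r^*\in I$ a.e.\ fails on $\{A_\pi=0\}$, and Step~3 is unjustified, since the maxima in Theorem~\ref{thm:dual_bound} range over all $(s,a)$. What does hold is $\mathbb{E}_{a\sim\pi^*(\cdot|s)}[A_\pi(s,a)]=\sum_{a:A_\pi(s,a)\neq0}\pi(a|s)A_\pi(s,a)=0$ at every $\pi$-visited state. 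The performance difference lemma then gives $\eta(\pi^*)=\eta(\pi)$, assuming $\pi$ has full support so that $\pi^*$ reaches no new states.

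Your nontrivial route has a genuine gap: the gain of a constrained maximizer has no lower bound of order $\epsilon$. Take $f(r)=\min(r,1)$ and $g(r)=\max(r,1)$. Every policy with ratios in $I$, $r\ge1$ where $A_\pi>0$ and $r\le1$ where $A_\pi<0$, maximizes $M$ with value $0$. Yet any penalty, including TRPO's squared-TV one, is strictly positive once $\pi^*\neq\pi$, so the lower bound certifies nothing. Choosing $\epsilon$ proportional to the optimal gain is also circular, because that gain depends on $I$.

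The missing idea is to choose $I$ a posteriori instead of comparing orders. The paper forms $L(\tilde\pi)=\eta(\pi)+\frac{1}{1-\gamma}M(\tilde\pi;f,g)-\frac{\beta\alpha}{2}\max_{s,a}\ln r-\frac{\beta(1-\alpha)}{2}\max_{s,a}\ln\frac{1}{r}$. This satisfies $\eta\ge L$ (Theorem~\ref{thm:dual_bound} plus the enclosure, i.e.\ your Step~2) and $L(\pi)=\eta(\pi)$. It then uses the penalty--constraint correspondence: take $\tilde\pi^\dagger$ maximizing $L$ and let $1+\epsilon_u$, $1-\epsilon_l$ be its extreme ratios. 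Since $\tilde\pi^\dagger$ is feasible, any constrained maximizer satisfies $M(\pi^*;f,g)\ge M(\pi^*;f_\delta,g_\delta)\ge M(\tilde\pi^\dagger;f,g)$ and has no larger penalty. Hence $\eta(\pi^*)\ge L(\pi^*)\ge L(\tilde\pi^\dagger)\ge L(\pi)=\eta(\pi)$, with an interval that is nontrivial whenever $L$ is maximized away from $\pi$. This step inherits the same caveat about where the indicator actually enforces $r\in I$.

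Alternatively, for unrestricted tabular policies you never need the penalty. A maximizer of $M$ must maximize each visited state's term, so your Step~2 applied state by state gives $\mathbb{E}_{a\sim\pi^*(\cdot|s)}[A_\pi(s,a)]\ge0$ at every visited state. The performance difference lemma then yields $\eta(\pi^*)\ge\eta(\pi)$ for every $I$, again for full-support $\pi$. This decoupling is unavailable for parametric policy classes, which the paper's argument still covers.
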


\begin{proof}
See Appendix~\ref{app:proof_thm_3_4}. The proof utilizes the Dual-Ratio Lower Bound (Theorem~\ref{thm:dual_bound}) and indicator function properties.
\end{proof}

\subsection{Practical Algorithmic Relaxation}
\label{sec:practical_design}

Directly solving the exact constrained optimization problem (e.g., Eq.15 in \citep{xie2025simple}), or deriving the strict adaptive bounds $\epsilon_l$ and $\epsilon_u$ from Theorem~\ref{thm:base_algo} at every step, is computationally intractable and typically leads to prohibitively small policy updates \citep{schulman2015trust}. Consequently, established methods adopt a \textit{relaxation strategy}, transforming hard constraints into tractable surrogate objectives, such as TRPO's local approximation \citep{schulman2015trust}, or the clipping and quadratic penalties seen in PPO and SPO \citep{xie2025simple}. Following this historical paradigm, our framework introduces a \textit{smooth relaxation}. Having reformulated the global trust region penalty into element-wise geometric constraints via $f$ and $g$, we treat the boundaries $\epsilon_l$ and $\epsilon_u$ as tunable hyperparameters. This practical approach circumvents the computational burden, enabling more permissive and efficient updates while maintaining gradient continuity and robustness against outliers.

In general, the optimal bounds $\epsilon_u$ and $\epsilon_l$ are not necessarily symmetric (Theorem~\ref{thm:dual_bound} \& \ref{thm:base_algo}). \textbf{However, for conceptual simplicity and standard practice, we assume $\epsilon_u = \epsilon_l = \epsilon$ in the remainder of this paper.} We demonstrate in Remark~\ref{rmk:alpha_adjust} that there exists a mechanism where adjusting the hyperparameter $\alpha$ naturally yields symmetric constraints.

\begin{remark}[Symmetric Bounds via $\alpha$-Adjustment]
\label{rmk:alpha_adjust}
Consider a single-state MDP with $\mathcal{A}=\{a_1,a_2,a_3\}$, policy $\pi=[0.2, 0.7, 0.1]$, and advantages $A=[10, -2, -6]$. Let $\beta=8$ and the base shaping functions be linear, $f(r)=g(r)=r$. By setting the convex combination coefficient $\alpha=0.96$ in Theorem~\ref{thm:dual_bound}, the optimal constraints derived for maximizing the objective become perfectly symmetric with $\epsilon_u=\epsilon_l=0.6$. See Appendix~\ref{app:proof_alpha} for the detailed derivation.
\end{remark}

Under the assumption of symmetric bounds and shaping functions, the objective reduces to the \textbf{Symmetric Objective} $M_s(\tp;f_\delta)$. It allows us to focus exclusively on the design of a single shaping function $f$.

In practical implementations, we typically relax the constraints based on the sign of the advantage function. \textbf{Following this intuition, the constrained function $f_\delta$ can be simplified to a one-sided constraint form: $f_u(r) = f(r) - \delta_{[0, 1+\epsilon]}(r)$.}

Furthermore, to encourage more aggressive exploration, we can soften the hard indicator penalty $\delta_{[0, 1+\epsilon]}(r)$ into a continuous saturation function. For instance, using a linear base function $f(r)=r$, we obtain the PPO clipping function:
\begin{equation}
\label{eq:f_ppo}
    f_{\text{PPO}}(r) = r - \1_{r>1+\epsilon}(r-(1+\epsilon)) = \min(r, 1+\epsilon).
\end{equation}
Substituting $f_{\text{PPO}}$ into our symmetric objective recovers the standard PPO-Clip objective.

Alternatively, if we choose a quadratic base function, we can uniquely determine its form by imposing the geometric requirements from Definition~\ref{def:F} ($f(1)=1$ and $f(r) \le r$), yielding:
\begin{equation}
\label{eq:f_spo}
    f_{\text{SPO}}(r) = -\frac{1}{2\epsilon}(r-1-\epsilon)^2 + \frac{\epsilon}{2} + 1.
\end{equation}
Substituting this into $M_s(\tp; f_{\text{SPO}})$ recovers the objective of SPO~\citep{xie2025simple}.

\subsection{Geometric Principles for Stable Gradient Shaping}
\label{sec:principles}

To guide the design of an \textit{optimal relaxation}, we establish a set of geometric criteria for the shaping function. These criteria bridge the gap between rigorous optimization bounds and practical stability.

Although PPO and SPO can be derived from our framework, they exhibit notable limitations. PPO employs a ``hard clipping'' mechanism (Eq.~\ref{eq:f_ppo}), introducing a zero-gradient plateau that discards valuable information from outliers, leading to sample inefficiency~\citep{wang2020truly}. Conversely, while SPO introduces a quadratic penalty (Eq.~\ref{eq:f_spo}) to ensure smoothness, its rigid shape leads to \textbf{linearly increasing gradients}. This unboundedness violates robustness requirements, creating stability risks.

To address these issues, we propose four geometric principles for an ideal shaping function $f(r)$:

\begin{enumerate}[leftmargin=*, itemsep=0mm, topsep=0mm]
    \item \textbf{Global Differentiability:} 
    Ideally, $f$ should be globally differentiable on $\mathbb{R}$. This property is essential for the convergence guarantees of first-order optimizers \citep{bottou2018optimization}, allowing for consistent gradient propagation without the instability introduced by non-differentiable points (kinks) typically found in hard-clipping objectives.
    
    \item \textbf{Bounded Maximization Region:} 
    The set of global maximizers of $f$ must be contained within a bounded interval. This geometric constraint ensures that the policy ratio is naturally confined within a safe region corresponding to $\epsilon$, mitigating the unconstrained updates often observed in PPO \citep{ilyas2018deep, wang2020truly}.
    
    \item \textbf{Boundedness of (Sub)gradients:} 
    Crucially, first-order information (gradients or subgradients) must be bounded. In the presence of extreme policy ratios (outliers), unbounded updates can lead to catastrophic divergence. Limiting gradient magnitude is a fundamental technique to prevent explosion \citep{pascanu2013difficulty, zhanggradient}. Furthermore, from the perspective of robust statistics, this acts as a safety mechanism ensuring a bounded influence function, a necessary condition for bias-robustness~\citep{huber1992robust}.

    \item \textbf{Structural Parsimony:} 
    The shaping function should possess the simplest geometric structure necessary to enforce the trust region (e.g., maintaining a consistent curvature profile). Excessive geometric complexity, such as unnecessary oscillations or intricate high-order derivatives, should be avoided as they introduce spurious local optima without theoretical justification.
\end{enumerate}
\section{Anchored Neighborhood Optimization}
\label{sec:ano}

In this section, we present our proposed algorithm, \textbf{Anchored Neighborhood Optimization (ANO)}. Designed strictly following the geometric principles established in Section~\ref{sec:principles}, ANO serves as a practical realization of the ideal estimator derived from our principled design space. See Appendix~\ref{app:sec:pseudocode} for pseudocode.

\subsection{Rationale: The Gradient Dilemma}
\label{sec:rationale}

A critical challenge in on-policy RL is leveraging information from off-distribution samples ($r \neq 1$) safely. PPO's ``hard clipping'' frequently fails to contain the probability ratio within safe bounds~\citep{ilyas2018deep, wang2020truly} and suffers from sample inefficiency by discarding outliers. Unconstrained methods (e.g., SPO) expose the optimization to unbounded gradients. We argue that an ideal estimator must adopt distinct behaviors across two regimes:

\paragraph{Regime 1: Moderate Deviation ($r \gtrsim 1+\epsilon$).} Here, samples are reliable enough to provide a corrective signal. A Restoration Force (negative gradient) is essential to actively pull the policy back into the trust region. This pre-emptive correction prevents error accumulation (especially for optimizers obtaining momentum), ensuring the policy remains where the surrogate approximation is valid.

\paragraph{Regime 2: Extreme Deviation ($r \gg 1+\epsilon$).} When the ratio is excessively large, linearly increasing penalties become counterproductive due to two distinct risks: \textbf{(1) Gradient Interference:} A massive gradient from a single outlier can numerically overshadow the constructive signals from the majority of valid samples, effectively acting as high-magnitude noise. \textbf{(2) Uncontrolled Distributional Shift:} Due to the probability constraint $\sum \pi(a|s)=1$, forcefully suppressing one probability via a massive gradient causes unpredictable inflation of others. This introduces severe variance and instability.

\paragraph{Principle of Redescending Gradients.} Based on this analysis, we propose \textbf{Principle A (Redescending Negative Gradient):} Drawing inspiration from redescending M-estimators in robust statistics~\citep{huber2011robust}, the shaping function should exert a negative restoration force when $r > 1+\epsilon$ to enforce constraints. Crucially, this force must gracefully decay to 0 as $r \to +\infty$ to prevent gradient interference and instability.

\subsection{The Algorithm}
\label{sec:ano_algo}

As discussed previously, neither PPO nor SPO satisfies all the proposed design principles, as they only address a subset of the necessary geometric constraints. Consequently, we propose ANO, which strictly adheres to all ideal properties. We define the ANO base kernel as:
\begin{equation}
\label{eq:ano_kernel}
    \phi(z) \coloneqq \ln(1+2^{-2z}) + \frac{4}{1+2^{-z}}.
\end{equation}

The shaping function $f_{\text{ANO}}(r)$ is defined by scaling and shifting $\phi(z)$ to satisfy the precisely anchoring requirements:
\begin{equation}
\label{eq:f_ano}
    f_{\text{ANO}}(r) = \frac{45\epsilon}{32\ln 2} \left[ \phi(-1)-\phi\left( \frac{r - 1 - \epsilon}{\epsilon} \right) \right] + 1.
\end{equation}

Here, the constant terms involving $\phi(-1)$ ensure the geometric conditions such as $f_{\text{ANO}}(1)=1$ and $f'_{\text{ANO}}(1+\epsilon)=0$.

\subsection{Properties and Analysis}
\label{sec:ano_analysis}

In this section, we verify that ANO strictly adheres to the Design Criteria established in Section~\ref{sec:principles}. Detailed proofs are provided in Appendix~\ref{app:proof_ano}. As summarized in Table~\ref{tab:comparison}, ANO exhibits superior geometric properties compared to existing baselines.

\begin{wrapfigure}{r}{0.58\textwidth}
    \centering
    \includegraphics[width=1\linewidth]{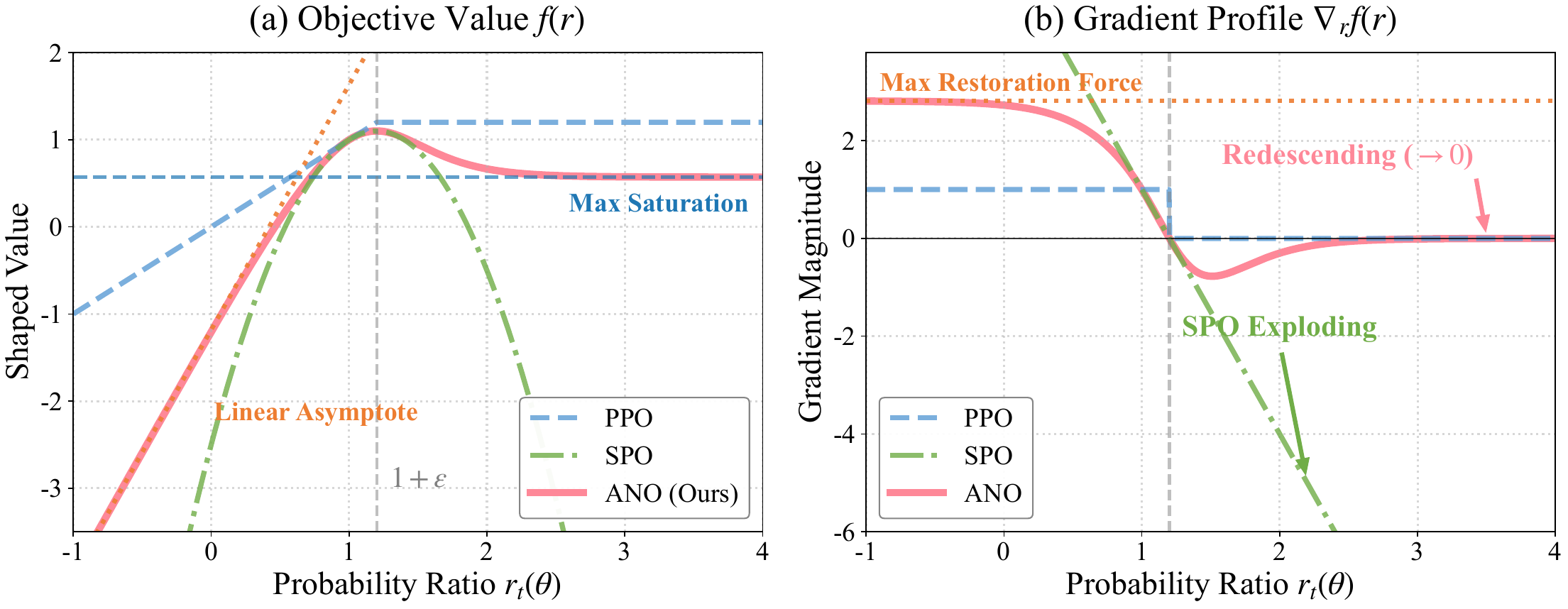}  
    \caption{\textbf{Geometry of Shaping Functions.} A visual comparison of PPO (Hard Clip), SPO (Quadratic), and ANO (Redescending). ANO offers a unique profile that is smooth, bounded, and redescending, effectively reclaiming information from outliers without gradient explosion.}
    \label{fig:shaping_functions}
    \vspace{-5mm}
\end{wrapfigure}

\begin{table*}[t]
\centering
\scriptsize
\caption{\textbf{Mathematical Properties of Policy Optimization Objectives.} ANO uniquely satisfies all principles. Note that PPO fails Principle 2 due to its unbounded plateau (Argmax region), while SPO fails Principle 3 due to unbounded gradients.}
\label{tab:comparison}
\begin{threeparttable}
        \sc
        \begin{tabular}{lcccc}
        \toprule
        \textbf{Property} & \textbf{PPO} & \textbf{SPO} & \textbf{ANO (Ours)} & \textbf{Principle} \\
        \midrule
        \multicolumn{5}{l}{\textit{\textbf{Gradient Asymptotics} (Tail Behavior)}} \\
        \quad $r \to +\infty$ & $0$ (Vanishing) & $\to -\infty$ (Exploding) & \textbf{$\to 0$ (Redescending)} & \multirow{2}{*}{Principle A} \\
        \quad \textit{(Outlier Handling)} & \textit{(Info Loss)} & \textit{(Instability)} & \textit{(Robustness)} & \\
        \quad $r \to -\infty$\tnote{*} & $1$ (Constant) & $\to +\infty$ (Exploding) & \textbf{$\to \mathbf{45/16}$ (Saturating)} & \multirow{2}{*}{Eq.~(48)} \\
        \quad \textit{(Restoration Force)} & \textit{(Linear)} & \textit{(Unsafe)} & \textit{(Stable Force)} & \\
        \midrule
        \textbf{Global Bound} & \multirow{2}{*}{\textbf{Yes} ($\checkmark$)} & \multirow{2}{*}{No ($\times$)} & \multirow{2}{*}{\textbf{Yes ($\checkmark$)}} & \multirow{2}{*}{Principle 3} \\
        {\small ($\sup |f'| < \infty$)} & & & & \\
        \midrule
        \textbf{Maximization Domain} & \textbf{Unbounded} ($\times$) & \textbf{Bounded} ($\checkmark$) & \textbf{Bounded} ($\checkmark$) & \multirow{2}{*}{Principle 2} \\
        {\small (Argmax Region)} & \textit{(Flat Plateau)} & \textit{(Unique Peak)} & \textit{(Unique Peak)} & \\
        \midrule
        \textbf{Structure} & Flat Tail & Global Convex & \textbf{Single Inflection} & \multirow{2}{*}{Principle 4} \\
        {\small (Geometry)} & \textit{(Trivial)} & \textit{(Rigid)} & \textit{(Parsimonious)} & \\
        \midrule
        \textbf{Smoothness} & $C^0$ (Kinks) & $\mathbf{C^\infty}$ & $\mathbf{C^\infty}$ & \multirow{2}{*}{Principle 1} \\
        {\small (Differentiability)} & \textit{(Noise)} & \textit{(Stable)} & \textit{(Stable)} & \\
        \bottomrule
        \end{tabular}
    \begin{tablenotes}
        \footnotesize
        \item[*] Analyzed to determine the right-tail behavior of the symmetric dual $g(r)$.
    \end{tablenotes}
\end{threeparttable}
\vspace{-4mm}
\end{table*}

\paragraph{1. $C^\infty$ Smoothness (Adhering to Principle 1).} 
Unlike PPO, which introduces non-differentiable ``kinks'' at $1+\epsilon$, $f_{\text{ANO}}$ is constructed from elementary exponential and logarithmic functions. Consequently, it is not only globally differentiable but $C^\infty$ smooth. This smoothness ensures a well-defined Hessian everywhere, facilitating stable higher-order optimization dynamics and preventing the oscillation often observed around hard clipping boundaries.

\paragraph{2. Implicit Trust Region (Adhering to Principle 2).}
We rigorously analyze the critical points of $f_{\text{ANO}}$. By setting the derivative to zero, it is straightforward to verify that $f_{\text{ANO}}(r)$ possesses a \textbf{unique global maximum} at exactly $r = 1+\epsilon$. See Appendix~\ref{app:proof_max} for proof. 
\begin{itemize}[leftmargin=*, itemsep=0.0mm, topsep=0pt]
    \item For $r < 1+\epsilon$, the function is strictly monotonically increasing ($f'>0$).
    \item For $r > 1+\epsilon$, the function is strictly monotonically decreasing ($f'<0$).
\end{itemize}
This geometry creates an \textbf{implicit trust region}: maximizing the objective naturally encourages the policy ratio to gravitate towards the peak at $1+\epsilon$, effectively bounding the policy update without requiring explicit brute-force clipping.

\paragraph{3. Robustness via Redescending Gradients (Adhering to Principle 3 \& A).}
A defining feature of ANO is its handling of outliers, strictly following the rationale in Section~\ref{sec:rationale}. PPO forces gradients to zero too early, while SPO allows gradients to grow unbounded. ANO strikes an optimal balance:
\begin{equation}
    \lim_{r \to +\infty} f'_{\text{ANO}}(r) = 0, \quad \text{and} \quad \sup_{r} |f'_{\text{ANO}}(r)| < \infty.
\end{equation}
Specifically, as $r \to +\infty$, the gradient gracefully decays to zero (the \textbf{Redescending property}), ensuring that extreme outliers do not destabilize the update via gradient interference. Conversely, for $r \to -\infty$, the gradient saturates to a constant factor ($\approx 45/16$), preserving a consistent restoration force. See Appendix~\ref{app:proof_bounded} for proof.

\paragraph{4. Structural Parsimony (Adhering to Principle 4).}
Finally, we analyze the structural elegance of ANO. We argue that ANO achieves robustness with the \textbf{minimal required geometric complexity}.

\begin{proposition}[Necessity of Convexity Change]
\label{prop:convexity}
Let $f$ be a continuous function on $\mathbb{R}$ satisfying \textbf{Bounded Maximization} and \textbf{Asymptotic Stability} (vanishing gradients). Then, $f$ cannot be globally concave on the tail interval; it must exhibit \textbf{at least one} change in convexity (inflection point).
\end{proposition}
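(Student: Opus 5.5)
We argue by contradiction, working on the right tail $[r_0,\infty)$, where $r_0$ is the right endpoint of the bounded maximization region. Under \textbf{Bounded Maximization}, the set of global maximizers of $f$ lies in a bounded interval. Let $r^\ast$ be a maximizer, e.g.\ $r^\ast=1+\epsilon$ for $f_{\text{ANO}}$. Restricting to $r>r^\ast$ gives $f(r)<f(r^\ast)$ there. We read \textbf{Asymptotic Stability} as requiring $f$ to be differentiable on the tail with $\lim_{r\to+\infty} f'(r)=0$, which is exactly the redescending property of Principle~A. The goal is to show that $f$ cannot be concave on the whole of $[r^\ast,\infty)$ unless it is eventually constant.

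First we extract a strictly negative slope. The tail is non-flat in the sense that $f(r_1)<f(r^\ast)$ for some $r_1>r^\ast$. By the mean value theorem there is $\xi\in(r^\ast,r_1)$ with
\[
f'(\xi)=\frac{f(r_1)-f(r^\ast)}{r_1-r^\ast}<0 .
\]

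Next we use concavity. Suppose $f$ is concave on $[r^\ast,\infty)$. Then $f'$ is non-increasing there (for concave functions this holds for one-sided derivatives even without twice-differentiability). Hence $f'(r)\le f'(\xi)<0$ for all $r\ge\xi$. This contradicts $f'(r)\to 0$ as $r\to\infty$. The same argument gives a stronger conclusion: concavity forces $f(r)\le f(\xi)+f'(\xi)(r-\xi)\to-\infty$ linearly, so the negative slope cannot decay. In other words, a globally concave tail is incompatible with a redescending gradient. This is precisely the SPO case, where $f'_{\text{SPO}}(r)=-(r-1-\epsilon)/\epsilon\to-\infty$.

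Finally we locate the convexity change. On $(r^\ast,\xi]$ near the peak, $f$ must be locally concave, since $f'$ decreases from $f'(r^\ast)=0$ (or from a nonnegative value) down to $f'(\xi)<0$. Afterwards $f'$ must increase back toward $0$. So $f'$ fails to be monotone on the tail, and $f$ changes from concave to convex at least once; at a point where $f'$ attains its minimum on $[r^\ast,\infty)$, the second derivative (if it exists) changes sign. This gives at least one inflection point.

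The main obstacle is pinning down the hypotheses so that the argument is not vacuous. Read loosely, the statement fails: a constant tail ($f'\equiv0$, as in PPO) is concave, and so are functions that are eventually constant. We would therefore state explicitly that the maximizer set is the only place where $f$ attains $\sup f$, with strict decrease somewhere on the tail, and that $f$ is differentiable there. With these assumptions the contradiction in the concavity step is immediate. Existence of a minimum of $f'$ for the last step follows because $f'<0$ somewhere on the tail and $f'\to0$ at both ends of $[r^\ast,\infty)$.
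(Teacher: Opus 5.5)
Your proof is correct and is essentially the paper's: assume concavity on the tail, use the strict decrease to get a strictly negative slope, then use the monotonicity of (super)gradients under concavity to keep every later slope below that value, which contradicts vanishing gradients. The paper phrases this with supergradients and secant slopes and so needs no differentiability, whereas you assume differentiability and use the mean value theorem. Your extra final step, using $f'(r^\ast)=0$ to show $f'$ is non-monotone on the tail, is a genuine bonus: the paper proves only non-concavity and treats that as a convexity change, and this can fail for a merely continuous $f$ with a kink at the peak (e.g.\ $1/(1+|r-r^\ast|)$ is convex on all of $(r^\ast,\infty)$).
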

\begin{proof}
See Appendix~\ref{app:proof_convex} for details.
\end{proof}

Structural Parsimony vs. Bell Curves. One might suggest utilizing standard bell-shaped kernels (e.g., Gaussian or Welsch) from robust statistics. However, these functions typically fail the consistency constraint $f(r) \le r$ (e.g., Gaussian exceeds $y=x$ as $r \to -\infty$) and require two inflection points. Note that evaluating $f(r)$ as $r \to -\infty$ is mathematically necessary: due to the symmetric construction about $(1,1)$, the left-tail behavior of $f$ strictly determines the right-tail robustness ($r \to +\infty$) of its dual function $g$ when handling negative advantages. ANO, by contrast, is purposefully designed to satisfy consistency on the left (via linearity) while providing the necessary convexity change on the right. It achieves robustness with the minimal structural requirement (exactly one inflection point).

\textbf{Conclusion.} 
By satisfying all constraints with minimal geometric complexity, ANO represents a highly parsimonious solution for robust policy optimization. With the structural properties established, we now turn to empirical validation to demonstrate ANO's performance in practice.
\section{Experiments}
\label{sec:experiments}

We empirically validate Anchored Neighborhood Optimization (ANO) by addressing three Research Questions: \textbf{RQ 1 (Performance):} Does ANO consistently outperform baselines across diverse domains? \textbf{RQ 2 (Robustness):} Is ANO tolerant to aggressive hyperparameters? \textbf{RQ 3 (Mechanism):} How do ANO's theoretical principles manifest in practical training dynamics and exploration efficiency?

\subsection{Traditional Reinforcement Learning Benchmarks}
\label{sec:trad_rl}

\textbf{Experimental Setup.} We evaluate ANO on the \textbf{MuJoCo} suite~\citep{todorov2012mujoco}, specifically focusing on $6$ diverse environments, and the full \textbf{Atari} benchmark containing $40$ games~\citep{bellemare2013arcade}. To ensure a rigorous and statistically meaningful comparison, we utilize \textit{rliable} metrics~\citep{agarwal2021deep} (e.g., IQM) aggregated across 5 seeds. We compare ANO against baselines including PPO, TRPO, and the recent state-of-the-art robust methods PAPO~\citep{zhao2024absolute} and SPO~\citep{xie2025simple}.

\begin{figure}[htbp]
    \centering
    \begin{subfigure}[b]{0.48\textwidth}
        \centering
        \includegraphics[width=\textwidth]{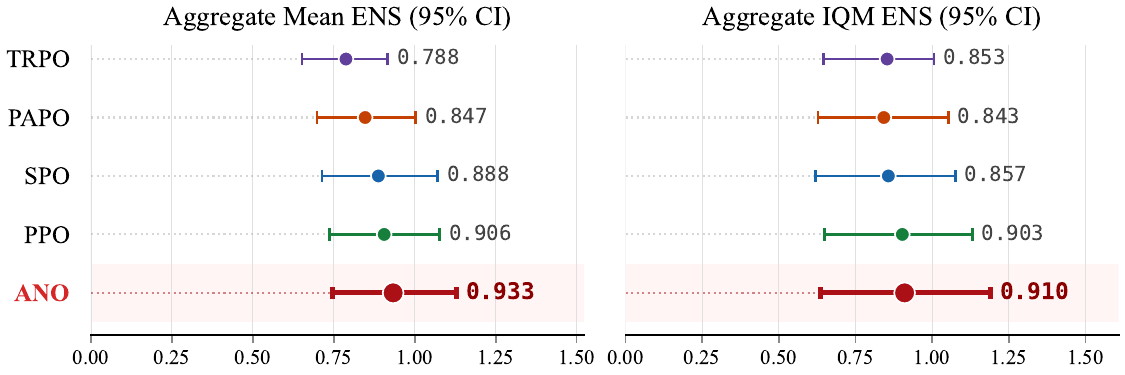}
        \caption{MuJoCo (6 Environments)}
        \label{fig:rl_results_mujoco}
    \end{subfigure}
    \hfill
    \begin{subfigure}[b]{0.48\textwidth}
        \centering
        \includegraphics[width=\textwidth]{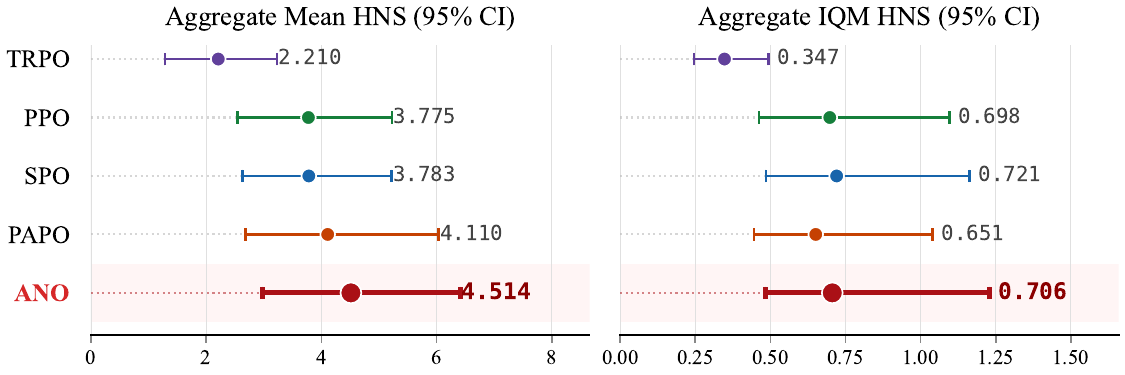}
        \caption{Atari (40 Environments)}
        \label{fig:rl_results_atari}
    \end{subfigure}
    \caption{\textbf{Aggregate Performance on Traditional RL Benchmarks (5 Seeds).} We report the Mean and Interquartile Mean (IQM) of normalized scores with 95\% stratified bootstrap confidence intervals. Across both domains, ANO (red) achieves state-of-the-art Mean scores while maintaining highly competitive IQM, demonstrating a superior exploration ceiling without sacrificing stability.}
    \label{fig:rl_results}
\end{figure}

\textbf{Overall Performance (answering RQ 1).} As shown in Figure~\ref{fig:rl_results}, ANO establishes a highly robust performance profile across both continuous and discrete control domains. In MuJoCo (Figure~\ref{fig:rl_results_mujoco}), ANO strictly dominates the baselines, achieving both the highest Mean and IQM. In the high-variance Atari domain (Figure~\ref{fig:rl_results_atari}), ANO achieves a massive breakthrough in the Mean Human Normalized Score ($4.514$ vs. SPO's $3.783$), unlocking a significantly higher exploration ceiling. Crucially, this peak performance does not come at the expense of statistical robustness: ANO maintains a highly competitive Interquartile Mean (IQM, $0.706$) comparable to SPO ($0.721$), while exhibiting a tighter confidence interval than standard PPO. This confirms that ANO effectively handles sparse rewards and prevents the structural collapse often observed in unconstrained methods, securely maintaining the safety floor.

\textbf{Robustness Analysis (answering RQ 2).} We stress-test robustness by increasing the learning rate to $1 \times 10^{-3}$. As illustrated in Figure~\ref{fig:robust_results}, while PPO suffers catastrophic degradation ($-37.3\%$), ANO maintains performance ($-7.1\%$), confirming its intrinsic regularization effectively safeguards updates.

\begin{figure}[ht]
  \centering
  \begin{minipage}[b]{0.48\textwidth}
    \centering
    \includegraphics[width=\linewidth]{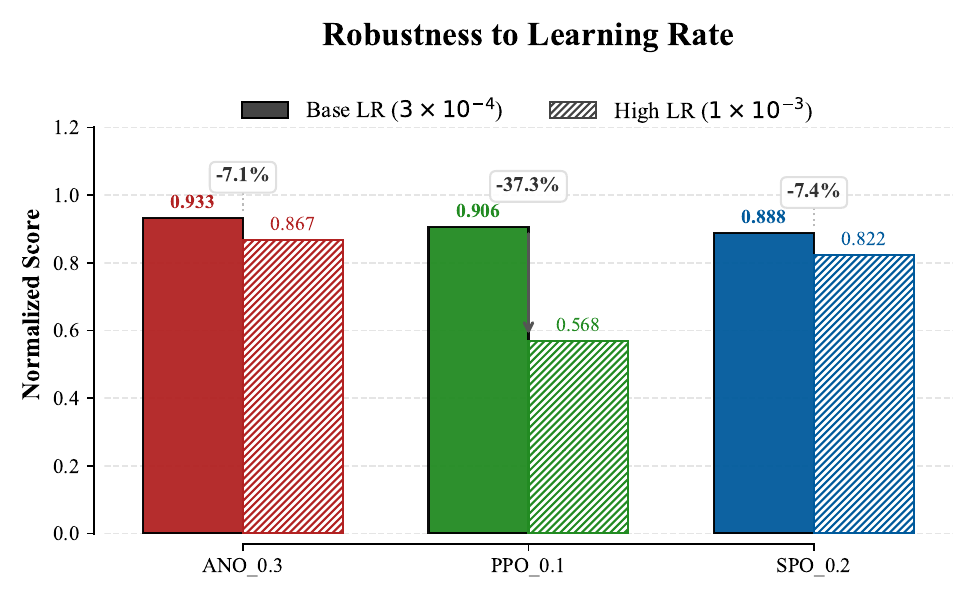}
    \caption{\textbf{Robustness Analysis on MuJoCo.} ANO demonstrates exceptional robustness under high learning rates (\texttt{1e-3}), whereas PPO suffers catastrophic performance degradation.}
    \label{fig:robust_results}
  \end{minipage}
  \hfill 
  \begin{minipage}[b]{0.48\textwidth}
    \centering
    \includegraphics[width=\linewidth]{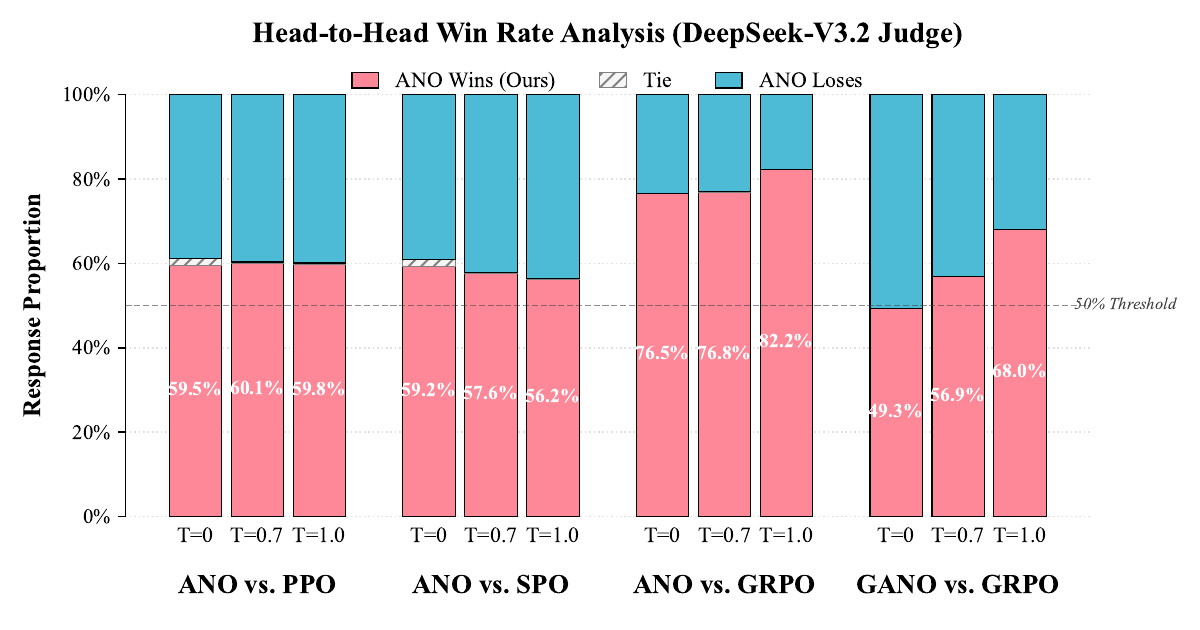}
    \caption{\textbf{Head-to-Head Win Rates.} ANO consistently outperforms PPO ($\sim 60\%$), SPO, and GRPO ($>76\%$) across all sampling temperatures ($T \in \{0, 0.7, 1.0\}\times 1000$ samples).}
    \label{fig:llm_winrate}
  \end{minipage}
\end{figure}

\subsection{Large Language Model Fine-tuning}
\label{sec:llm}

Beyond control tasks, we evaluate ANO on RLHF using the \textbf{TL;DR} summarization dataset~\citep{stiennon2020learning}. Following the \texttt{TRL} framework~\citep{vonwerra2020trl}, we fine-tune a \texttt{Pythia-1b-deduped} policy~\cite{biderman2023pythia} and compare it against PPO and GRPO~\cite{shao2024deepseekmath}.

\textbf{Win-Rate Analysis.}
We conduct a large-scale head-to-head evaluation using \textbf{DeepSeek-V3}~\cite{liu2024deepseek} as the judge. And we validate our strong baselines: PPO, SPO, and GRPO all exceed a $70\%$ win rate against the SFT model at $T=0$. 

As shown in Figure~\ref{fig:llm_winrate}, ANO demonstrates \textbf{consistent superiority across all temperatures}. Unlike standard methods that fluctuate, ANO maintains a robust win rate against PPO, ranging from \textbf{59.5\%} ($T=0$) to a peak of \textbf{60.1\%} ($T=0.7$). The margin is even more pronounced against GRPO, where ANO achieves up to \textbf{82.2\%} win rate ($T=1.0$). This confirms that ANO does not trade off diversity for quality; rather, it dominates the entire sampling spectrum.

\begin{figure}[ht]
\vspace{-2mm}
    \centering
    \includegraphics[width=1\textwidth]{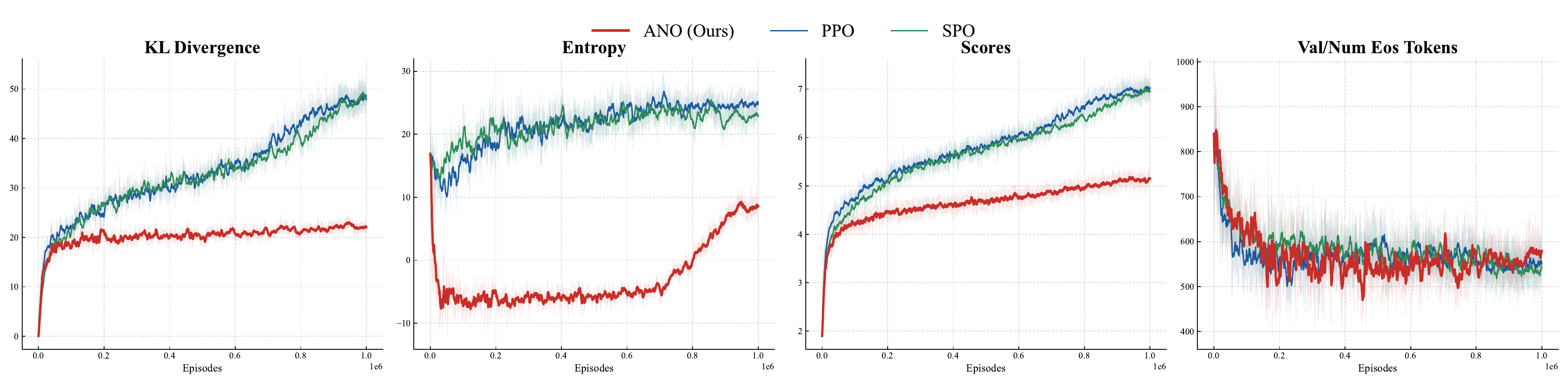}
    \caption{\textbf{Training Dynamics.} ANO maintains stable KL divergence and structured entropy.
    \vspace{-2mm}}
    \label{fig:llm_metrics}
\end{figure}

\textbf{Mechanism Analysis (answering RQ 3).} Analyzing training dynamics (Figure~\ref{fig:llm_metrics}) reveals:
\begin{itemize}[leftmargin=*, itemsep=0pt, topsep=0pt]
    \item \textbf{Mitigating Goodhart's Law:} PPO achieves higher proxy rewards ($\sim 7.0$) than ANO ($\sim 5.2$) but loses head-to-head, confirming PPO suffers from reward hacking while ANO prioritizes alignment fidelity.
    \item \textbf{Semantic Stability (KL):} PPO and SPO both breach the trust region ($KL > 50$) but for opposite reasons: PPO suffers \textit{drift} due to zero gradients on outliers, while SPO \textit{destabilizes} the simplex via unbounded gradients. In contrast, ANO maintains stability ($KL \approx 20$) by applying a \textit{redescending} force, effectively anchoring the policy to the knowledge manifold.
    \item \textbf{Structured Exploration (Entropy):} ANO exhibits a \textbf{``U-shaped'' dynamic} (rapid pruning then stabilization). Addressing potential concerns of mode collapse during the entropy dip, we found an \textbf{intermediate checkpoint} achieved a \textbf{$\sim 70\%$} win rate (vs. fully converged PPO) at $T=0.7$ (and comparable $\sim 45\%$ at $T=0$). This confirms the drop reflects the \textbf{filtering of low-quality tails}, concentrating mass on a high-quality core, rather than degenerate collapse.\vspace{-0mm}
\end{itemize}

\textbf{Computational Efficiency.}
Theoretically, ANO involves transcendental operations ($\exp, \log$) which are strictly costlier than PPO's primitive clipping. However, this element-wise overhead is negligible relative to the dominant costs of neural network forward-backward passes and attention mechanisms.
As shown in Table~\ref{tab:training_time}, ANO ($61.72$h) actually finished slightly faster than PPO ($64.08$h). We attribute this counter-intuitive result to normal hardware fluctuations and cluster load variability rather than algorithmic superiority. Fundamentally, both algorithms share the same time complexity $\mathcal{O}(N)$, confirming that ANO introduces little effective latency in practice.

\begin{table}[ht]
\centering
\caption{\textbf{Training Time.} All experiments were conducted with NVIDIA H100 GPUs.} 
\label{tab:training_time}
        \begin{tabular}{lccccc}
        \toprule
        \textbf{Algo}\hspace{-1mm} & \textbf{ANO (Ours)}\hspace{-1mm} & \textbf{PPO}\hspace{-1mm} & \textbf{SPO}\hspace{-1mm} &\textbf{GRPO}\hspace{-1mm} &\textbf{GANO (Ours)}\hspace{-1mm} \\
        \midrule
        \textbf{Time (h)}\hspace{-1mm} & $61.72$\hspace{-1mm} & $64.08$\hspace{-1mm} & $60.68$\hspace{-1mm} & $14.00$\hspace{-1mm} & $7.36$\hspace{-1mm}\\
        \bottomrule
        \end{tabular}
\end{table}

\textbf{Remark on Compatibility.}
It is worth noting that ANO's improvement (gradient shaping) is orthogonal to GRPO's structural innovation (group-based normalization without a value function). Consequently, ANO can be seamlessly integrated into the GRPO framework by simply replacing the clipped objective with the ANO shaping function. This suggests a promising avenue for future work to combine both strengths for even greater efficiency in LLM alignment.
\section{Conclusion and Future Work}
\label{sec:conclusion}

In this work, we introduced a principled design space for robust policy optimization, resolving the longstanding tension between update stability and sample efficiency. By rigorously establishing geometric constraints for safe gradient shaping, our framework decouples algorithmic stability from heuristic clipping. As a prime instantiation, Anchored Neighborhood Optimization (ANO) leverages a redescending gradient mechanism to structurally prevent interference from extreme outliers. Extensive evaluations across MuJoCo, Atari, and LLM alignment demonstrate that ANO provides a crucial safety floor against catastrophic policy collapse, even under aggressive hyperparameters. While ANO serves as a robust drop-in replacement for standard clipping, discovering alternative function families that perfectly satisfy all structural constraints remains an open mathematical challenge. Furthermore, our preliminary success with GANO highlights that this gradient-shaping methodology is strictly orthogonal to structural innovations like GRPO. Future work will systematically scale these principles to larger foundational models, exploring dynamic trust-region boundaries to further unlock efficient and stable alignment paradigms.

\bibliography{example_paper}
\bibliographystyle{plainnat}


\appendix

\newpage
\section{Experimental Details}
\label{app:sec:details}

In this section, we provide the comprehensive implementation details, hyperparameter settings, and computational resources used in our experiments to ensure reproducibility. We acknowledge the open-source community: Pythia models are licensed under Apache 2.0, the TL;DR dataset uses the MIT License, and DeepSeek-V3 is used for evaluation following its public API terms.

\subsection{Implementation Frameworks}
Our experiments are built upon established open-source libraries to ensure reproducibility and high efficiency:
\begin{itemize}[leftmargin=*, itemsep=0.5mm, topsep=0pt]
    \item \textbf{Traditional RL (MuJoCo \& Atari):} We adopt the high-quality single-file implementations from \texttt{CleanRL}~\citep{huang2022cleanrl}. To maximize training throughput, we utilize \texttt{EnvPool}~\cite{weng2022envpool} for massive parallel environment simulation, which significantly accelerates the wall-clock training time compared to standard \texttt{Gymnasium}~\cite{towers2024gymnasium} wrappers.
    \item \textbf{LLM Fine-tuning:} We use the \texttt{TRL} library~\citep{vonwerra2020trl} integrated with \texttt{Accelerate}~\cite{accelerate} and \texttt{DeepSpeed}~\cite{rasley2020deepspeed} for efficient training. The reward model is a fine-tuned Pythia-1B scoring model, consistent with the standard CleanRL setup for efficient 1B-scale experiments.
\end{itemize}

For MuJoCo, we apply Expert Normalized Score (ENS):

\begin{equation}
    \text{ENS}=\frac{\text{Agent Score} - \text{Random Score}}{\text{Expert Score} - \text{Random Score}},
\end{equation}
where the expert we used is TD3~\citep{pmlr-v80-fujimoto18a} just like~\citet{2025Hyperspherical} and the TD3 scores are from~\citet{tianshou} and random scores are from~\citet{2025Hyperspherical}.

For Atari, we apply Human Normalized Score (HNS):

\begin{equation}
    \text{HNS}=\frac{\text{Agent Score} - \text{Random Score}}{\text{Human Score} - \text{Random Score}},
\end{equation}

where the human scores and random scores are from~\citet{mnih2015human}.

To ensure a strictly fair comparison between ANO, PPO, and GRPO, we aligned the \textbf{Global Batch Size} and \textbf{Total Training Episodes} across all algorithms. Table~\ref{tab:llm_hyperparams} details the specific configurations used for the TL;DR summarization task.

\begin{table}[h]
    \centering
    \caption{\textbf{Hyperparameters for LLM Fine-tuning (TL;DR Task).} Note that we strictly align the Global Batch Size to 64 and Total Episodes to 1M across all methods to ensure fairness. These settings are recommended by \cite{vonwerra2020trl}. Note that we also tried applying $3$ update epochs for GRPO, but the performance is bad. And both $0.2$ and $0.3$ clip range make ANO outperform other methods.}
    \label{tab:llm_hyperparams}
    \begin{tabular}{@{}lcc@{}}
        \toprule
        \textbf{Hyperparameter} & \textbf{PPO/SPO} & \textbf{ANO (Ours)} \\
        \midrule
        Base Model          & Pythia-1b-deduped & Pythia-1b-deduped \\
        Dataset             & TRL-Lib/tldr      & TRL-Lib/tldr \\
        Optimizer           & AdamW             & AdamW \\
        Learning Rate       & $3\times10^{-6}$  & $3\times10^{-6}$ \\
        LR Scheduler        & Cosine Decay      & Cosine Decay \\
        \midrule
        \multicolumn{3}{@{}l}{\textit{Fairness Alignment Config}} \\
        Per-Device Batch Size      & 8  & 8 \\
        Gradient Accumulation      & 8  & 8 \\
        \textbf{Global Batch Size} & \textbf{64} & \textbf{64} \\
        Total Training Steps       & 15,625 & 15,625 \\
        \textbf{Total Episodes Seen} & \textbf{1,000,000} & \textbf{1,000,000} \\
        \midrule
        \multicolumn{3}{@{}l}{\textit{Algorithm Specifics}} \\
        Update Epochs          & 4 & 4 \\
        Clip Range ($\epsilon$) & 0.2 & 0.2/0.3 (both are the best) \\
        \midrule
        \textbf{Hyperparameter} & \textbf{GRPO} & \textbf{GANO (Ours)} \\
        \midrule
        Base Model          & Pythia-1b-deduped & Pythia-1b-deduped \\
        Dataset             & TRL-Lib/tldr      & TRL-Lib/tldr \\
        Optimizer           & AdamW             & AdamW \\
        Learning Rate       & $3\times10^{-6}$  & $3\times10^{-6}$ \\
        LR Scheduler        & Cosine Decay      & Cosine Decay \\
        \midrule
        \multicolumn{3}{@{}l}{\textit{Fairness Alignment Config}} \\
        Per-Device Batch Size      & 16 & 16 \\
        Gradient Accumulation      & 4  & 4 \\
        \textbf{Global Batch Size} & \textbf{64} & \textbf{64} \\
        Total Training Steps       & 15,625 & 15,625 \\
        \textbf{Total Episodes Seen} & \textbf{1,000,000} & \textbf{1,000,000} \\
        \midrule
        \multicolumn{3}{@{}l}{\textit{Algorithm Specifics}} \\
        Num. Generations ($G$)   & 4 (better than 8) & 4 \\
        Update Epochs            & 1 & 3 \\
        Clip Range ($\epsilon$)  & 0.2 & 0.2 \\
        \bottomrule
    \end{tabular}
\end{table}

\subsection{Hyperparameter Settings for Traditional RL}
For MuJoCo and Atari tasks, we adopted the standard hyperparameters recommended by \textit{rliable} baselines. For PPO, SPO and ANO, we performed a grid search for the $\epsilon \in \{0.1, 0.2, 0.3\}$. For TRPO, we performed a grid search for the $\delta \in \{0.01, 0.02, 0.03\}$. For PAPO, we performed a grid search for the $\omega \in \{0.001, 0.005, 0.01\}$.


\begin{figure}[ht]
    \centering
    \includegraphics[width=0.45\linewidth]{Fig/Analysis_WinRate_Stacked.pdf}
    \includegraphics[width=0.45\linewidth]{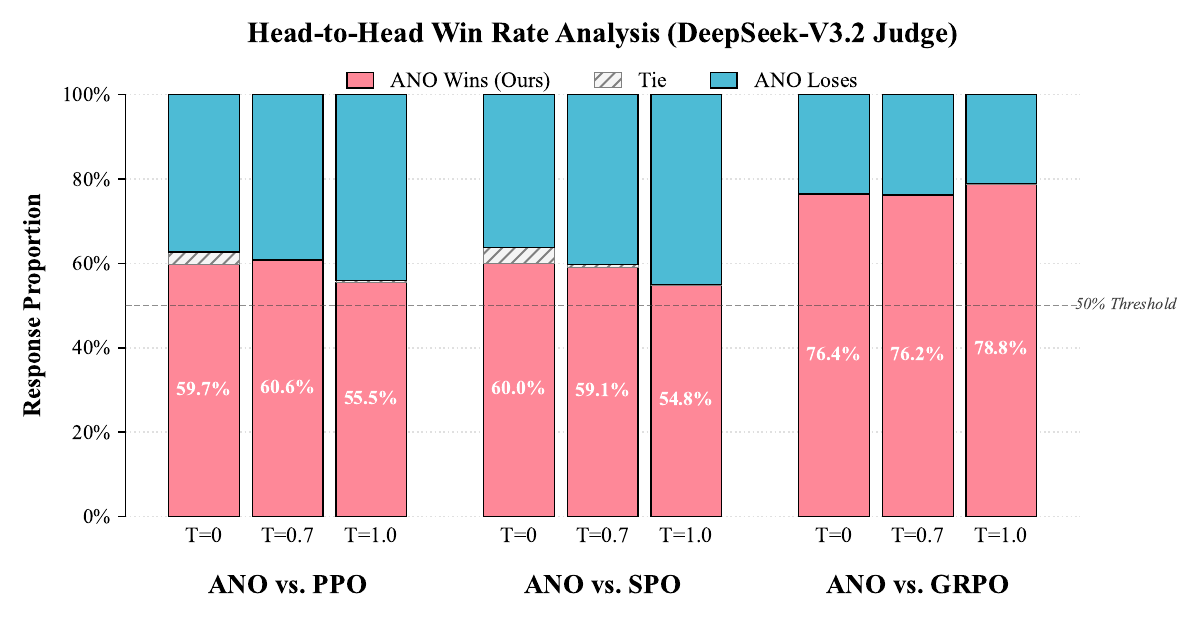}
    \caption{\textbf{Head-to-Head Win Rates.} The left and right panels display results for ANO with $\epsilon=0.2$ and $\epsilon=0.3$, respectively. ANO consistently outperforms all baselines across all sampling temperatures ($T \in \{0, 0.7, 1.0\}$) and $\epsilon$ settings, further demonstrating its robustness.}
    \label{app:fig:full_win_rate}
\end{figure}

\clearpage
\section{Additional Robustness Analysis}
\label{app:sec:robustness_eps}

In the main text, we demonstrated ANO's robustness to Learning Rate variations. Here, we further analyze the sensitivity of ANO to its specific hyperparameter: the neighborhood radius $\epsilon$.

As shown in Figure~\ref{app:fig:full_mujoco} and Figure~\ref{app:fig:full_atari}, ANO maintains high performance across a wide range of $\epsilon$ values ($0.1$ to $0.3$), indicating that the method is not brittle to hyperparameter tuning.

\section{Full Experimental Results}
\label{app:sec:full_experiments}

We provide the detailed breakdown of aggregate metrics for MuJoCo in Figure~\ref{app:fig:full_mujoco} and Atari in Figure~\ref{app:fig:full_atari}, complementing the summary figures in the main text. And the performance of every task is shown in Figure~\ref{app:fig:every_mujoco} and Figure~\ref{app:fig:every_atari}. We also show the full LLM fine-tuning results in Figure 

\begin{figure}[ht]
    \centering
    \includegraphics[width=0.95\textwidth]{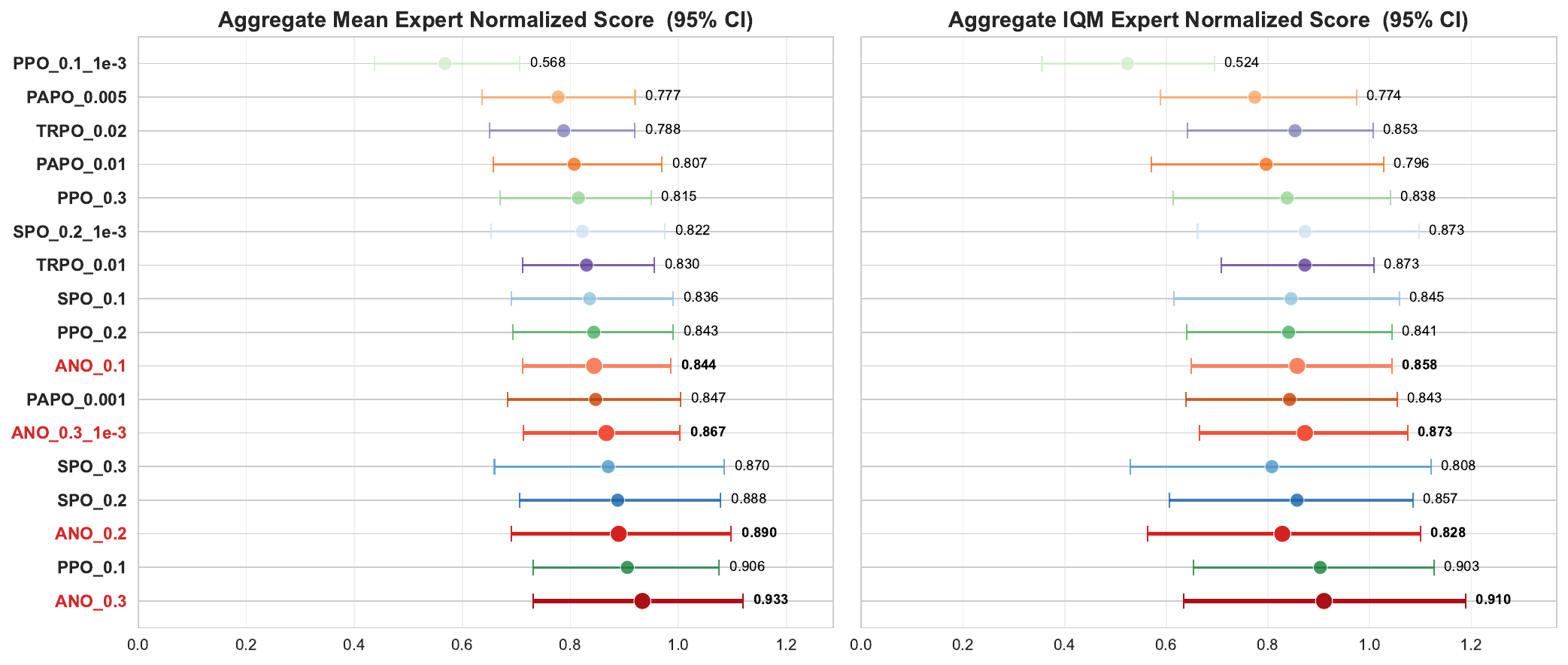}
    \caption{\textbf{Full Aggregate Performance on MuJoCo ($6$ Environments, $5$ Seeds).} We report both IQM and Mean of Expert Normalized Scores (ENS). ANO demonstrates consistent superiority over PPO, TRPO, and SPO across both metrics.}
    \label{app:fig:full_mujoco}
\end{figure}

\begin{figure}[ht]
    \centering
    \includegraphics[width=0.95\textwidth]{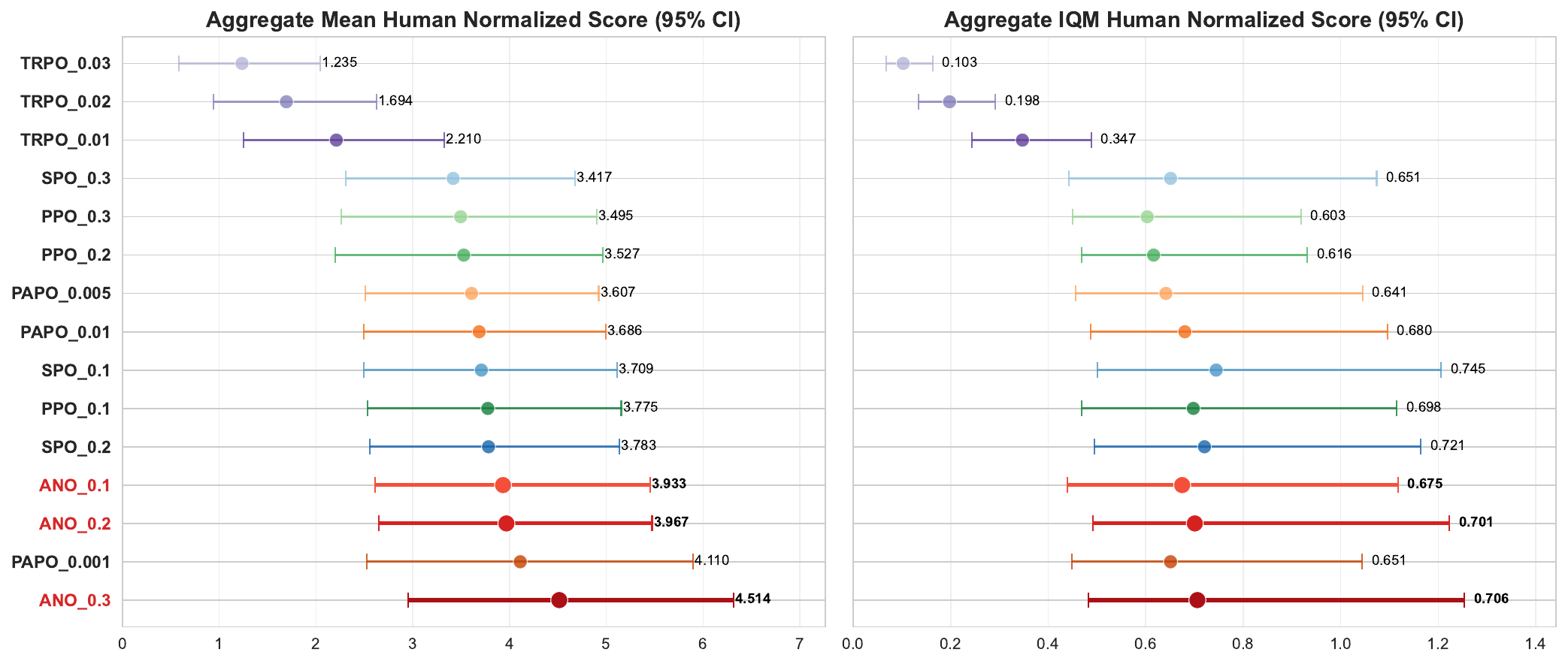}
    \caption{\textbf{Full Aggregate Performance on Atari ($40$ Environments, $5$ Seeds).} We report both IQM and Mean of Human Normalized Scores (HNS). ANO achieves the highest aggregate mean scores and great IQM scores, proving its effectiveness in high-dimensional discrete control.}
    \label{app:fig:full_atari}
\end{figure}

\begin{figure}[ht]
    \centering
    \includegraphics[width=0.32\textwidth]{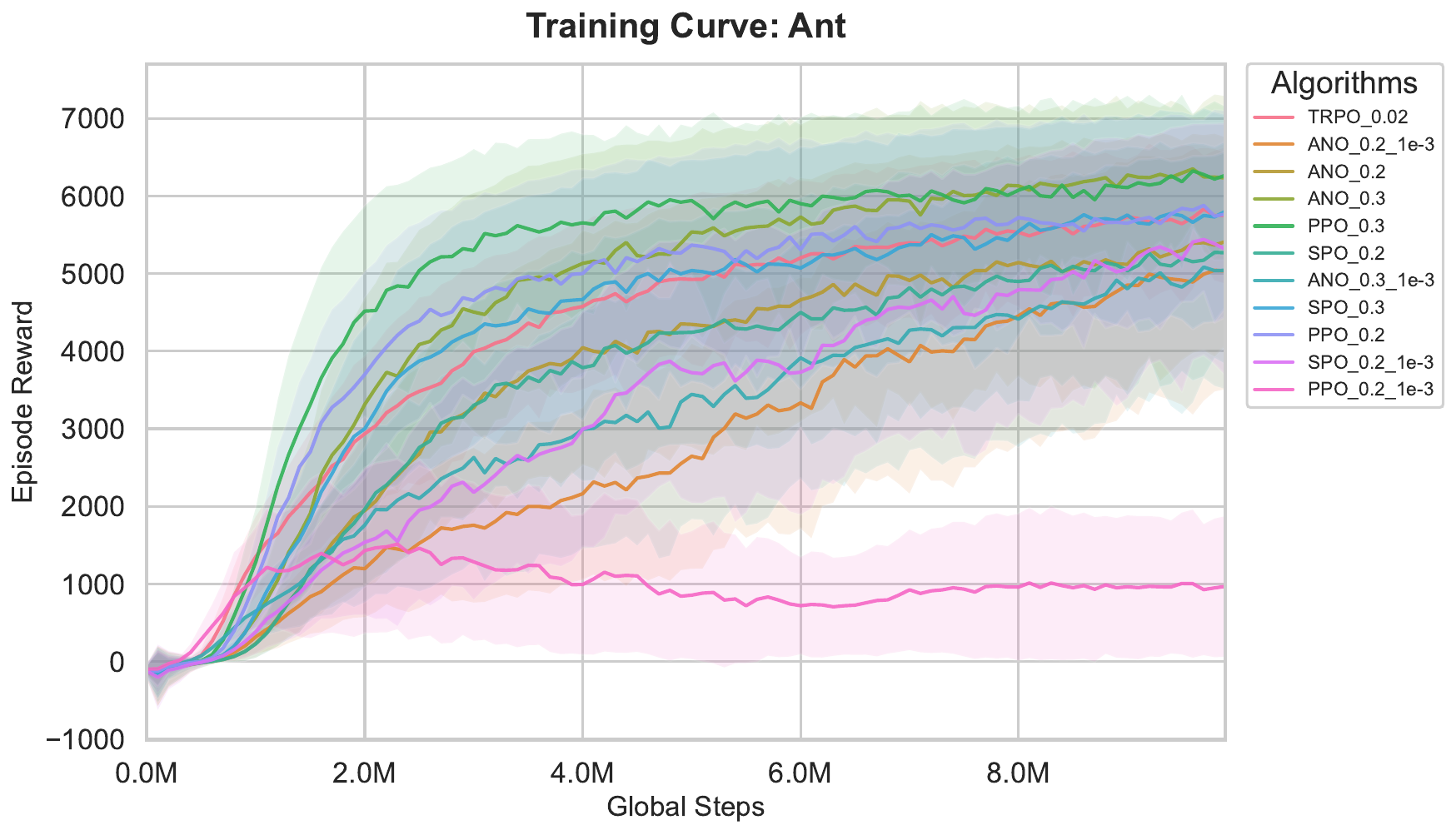}
    \includegraphics[width=0.32\textwidth]{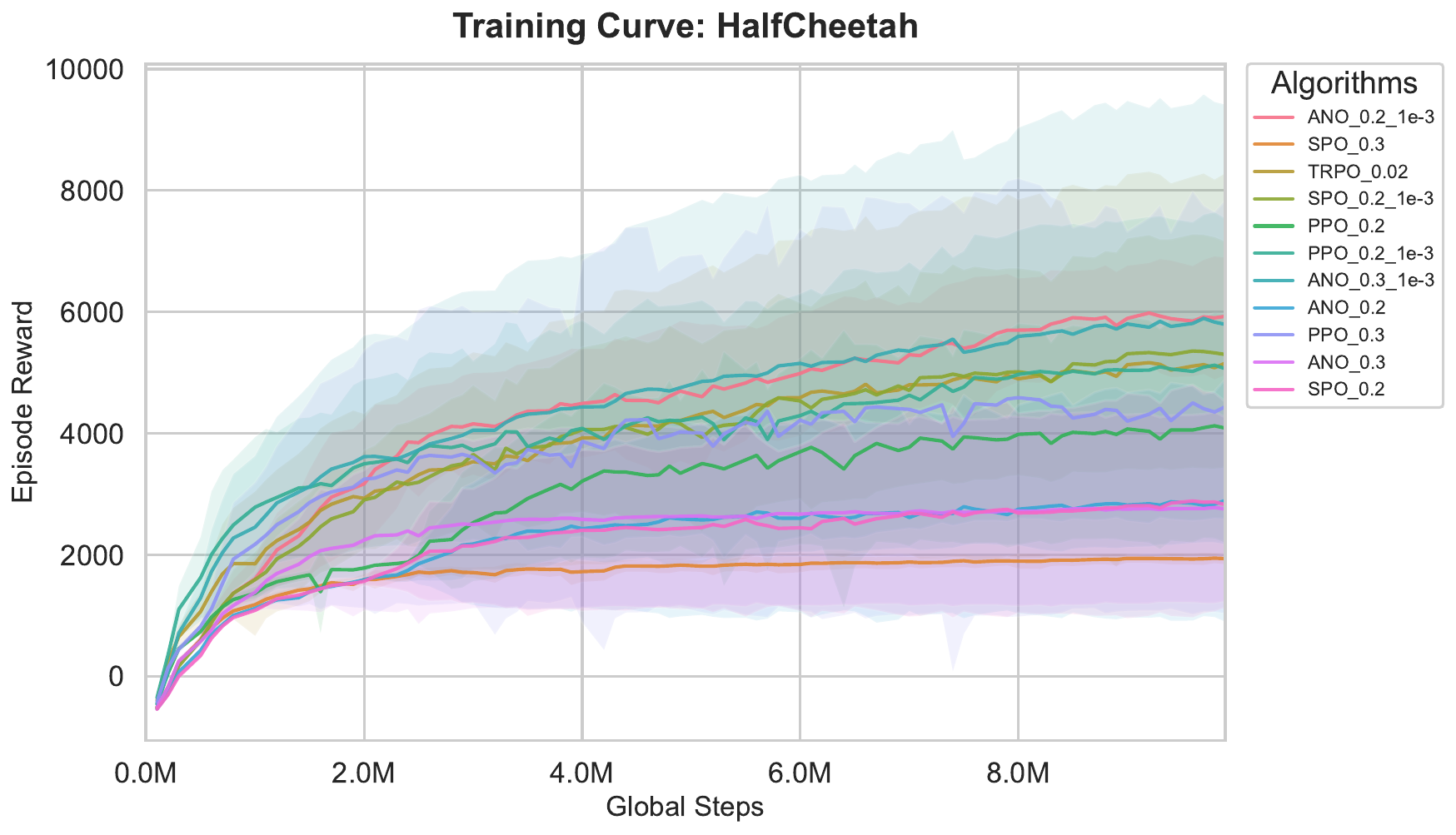}
    \includegraphics[width=0.32\textwidth]{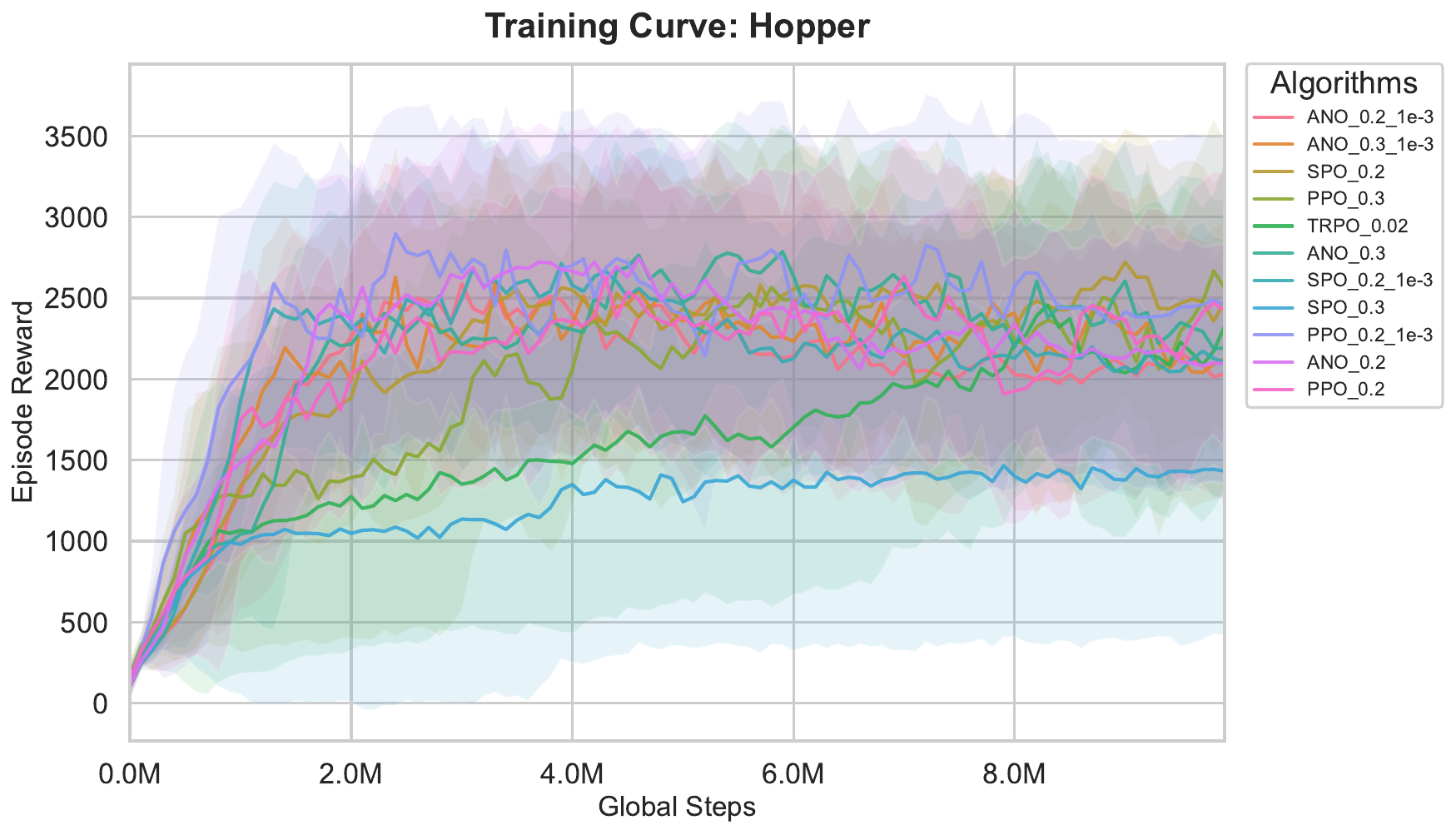}
    \includegraphics[width=0.32\textwidth]{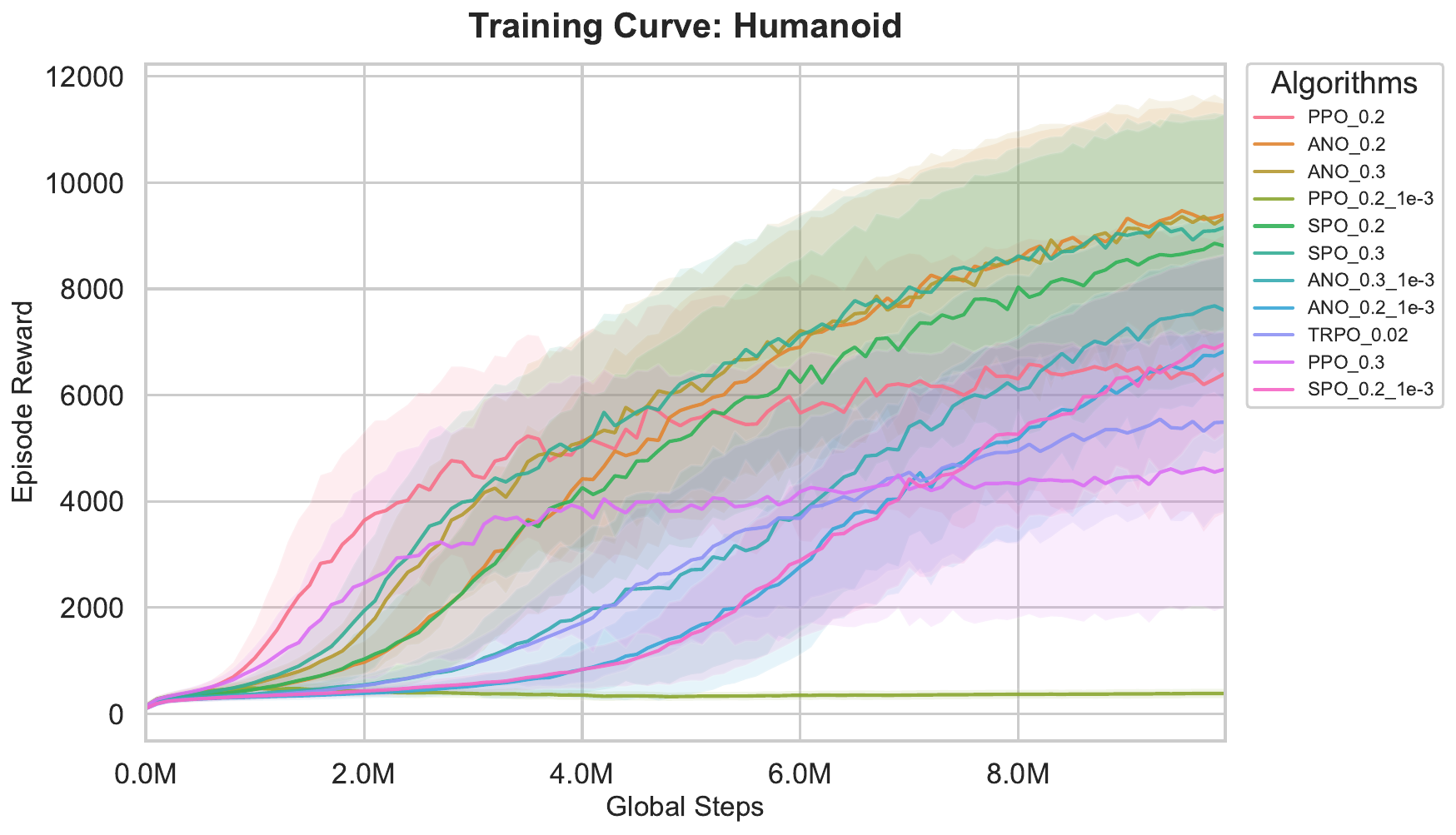}
    \includegraphics[width=0.32\textwidth]{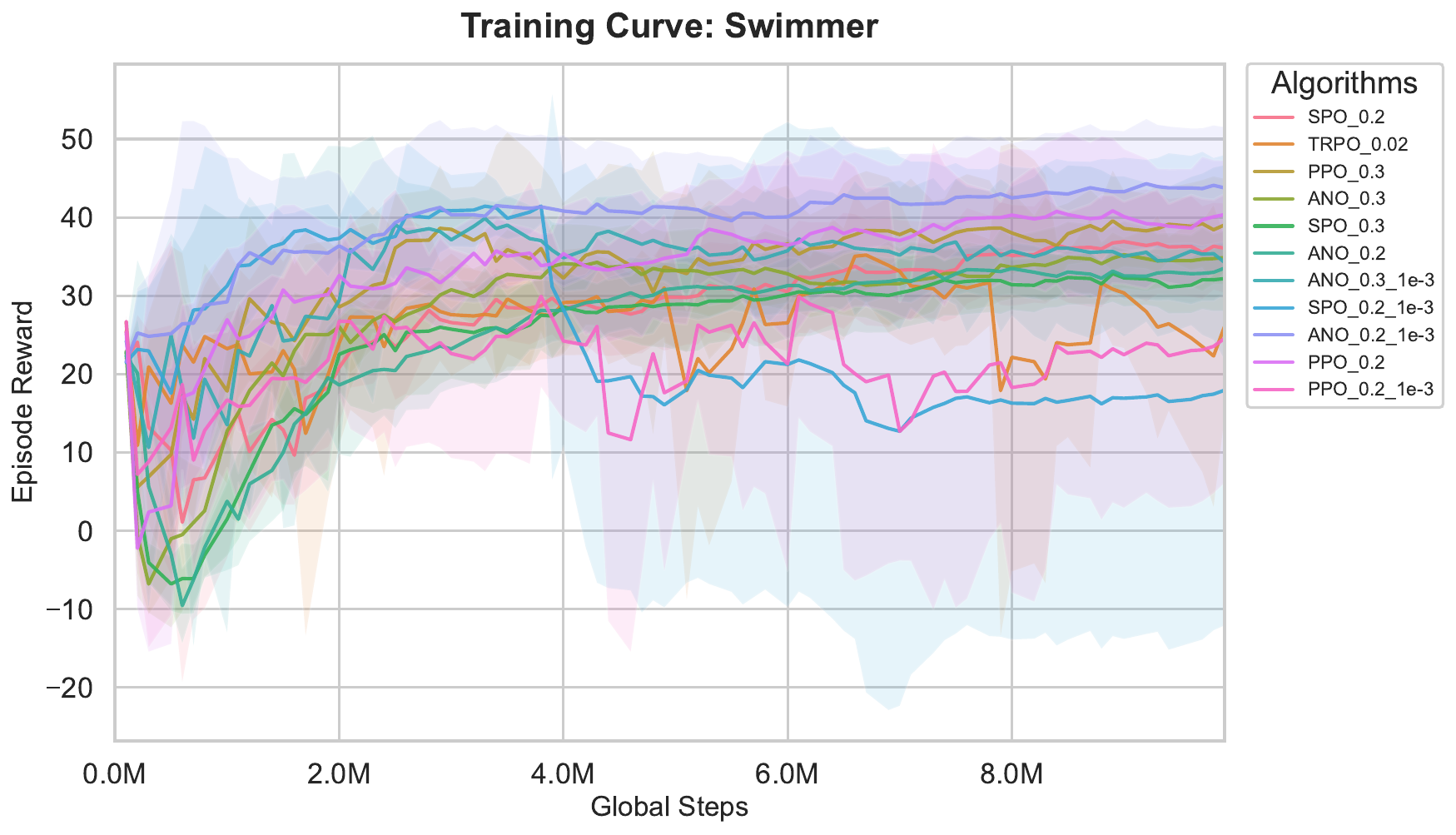}
    \includegraphics[width=0.32\textwidth]{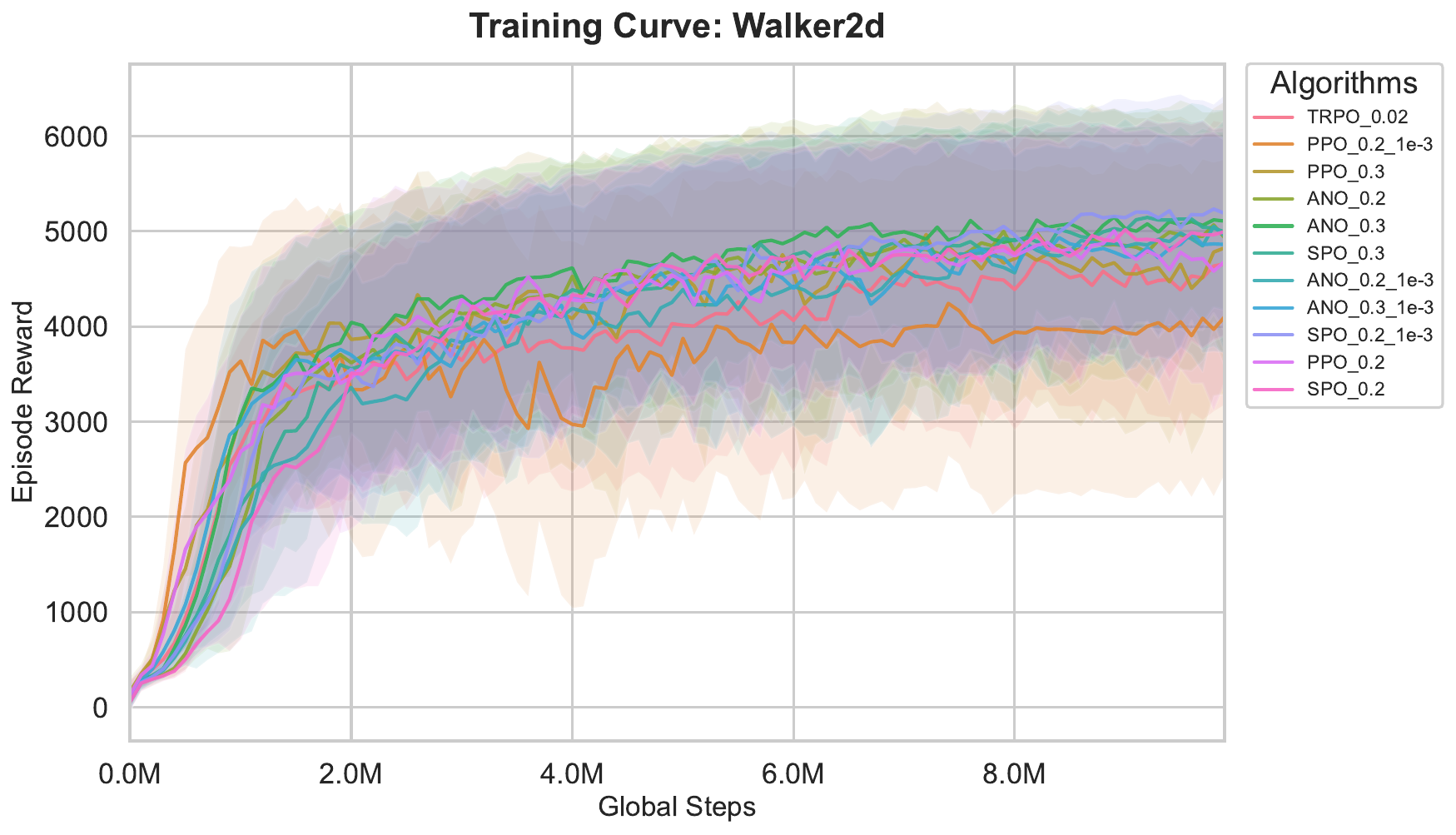}
    \caption{\textbf{Every Performance on MuJoCo ($6$ Environments, $5$ Seeds).}}
    \label{app:fig:every_mujoco}
\end{figure}

\begin{figure}[ht]
    \centering
    \includegraphics[width=0.24\textwidth]{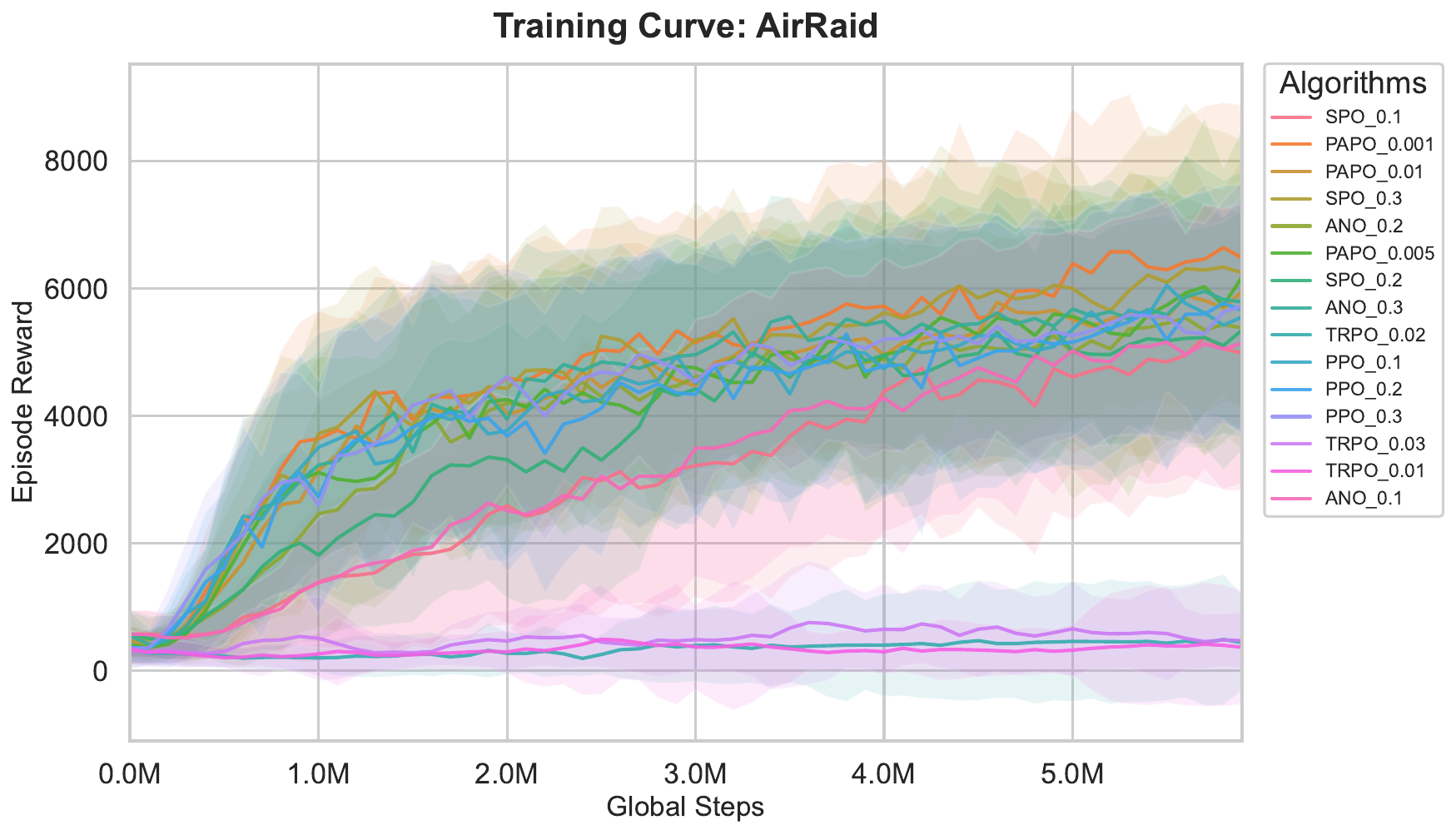}
    \includegraphics[width=0.24\textwidth]{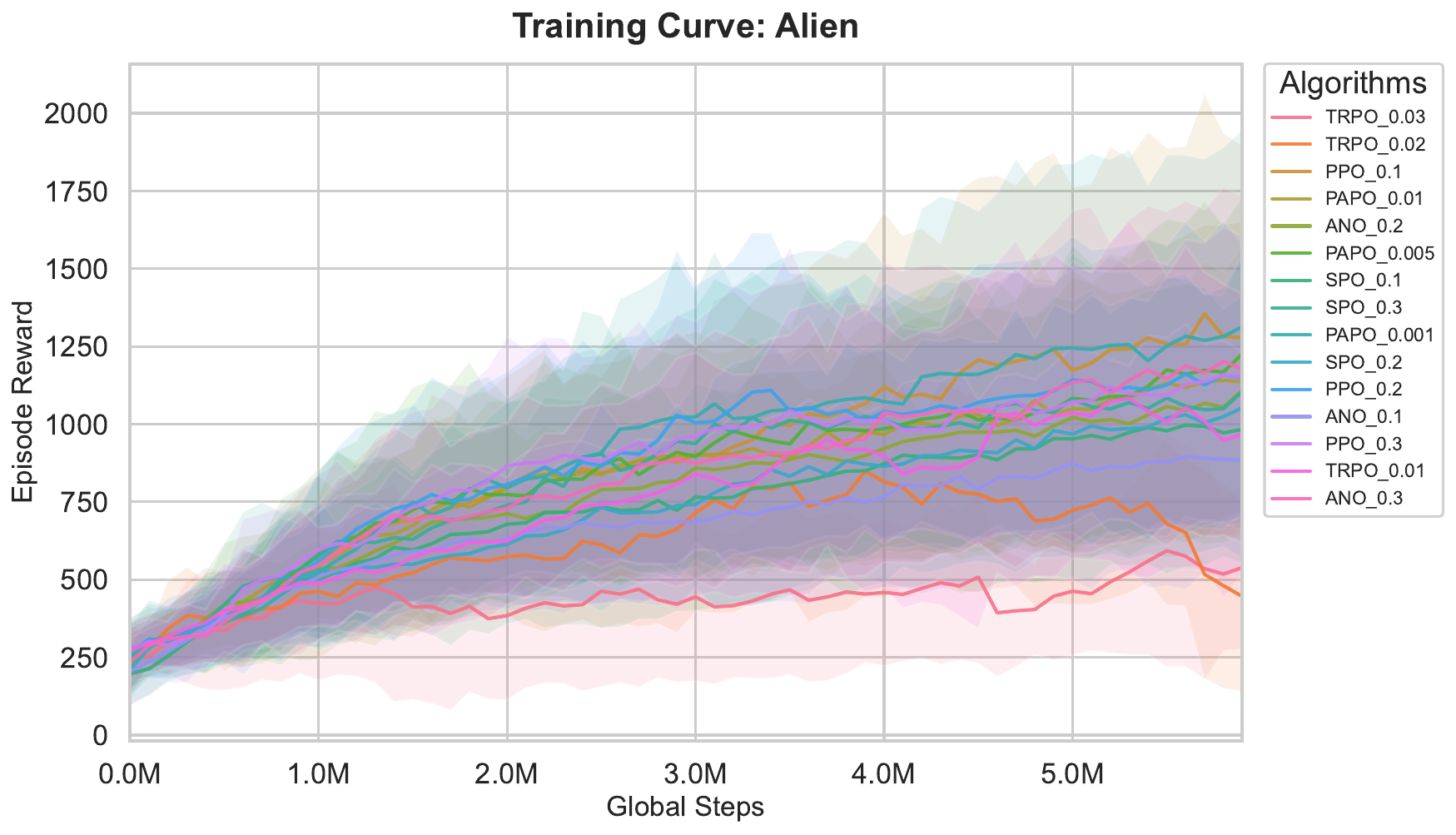}
    \includegraphics[width=0.24\textwidth]{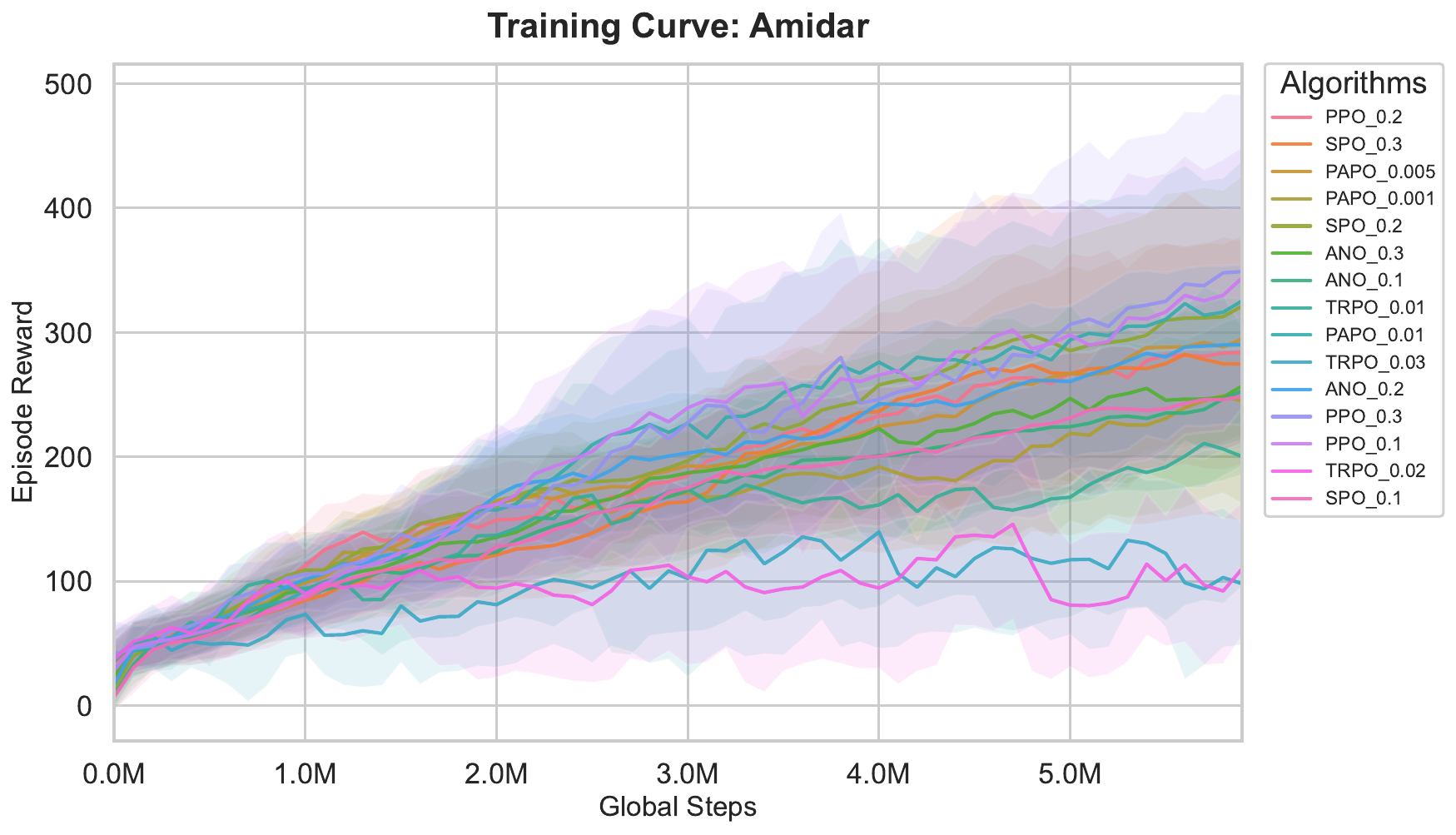}
    \includegraphics[width=0.24\textwidth]{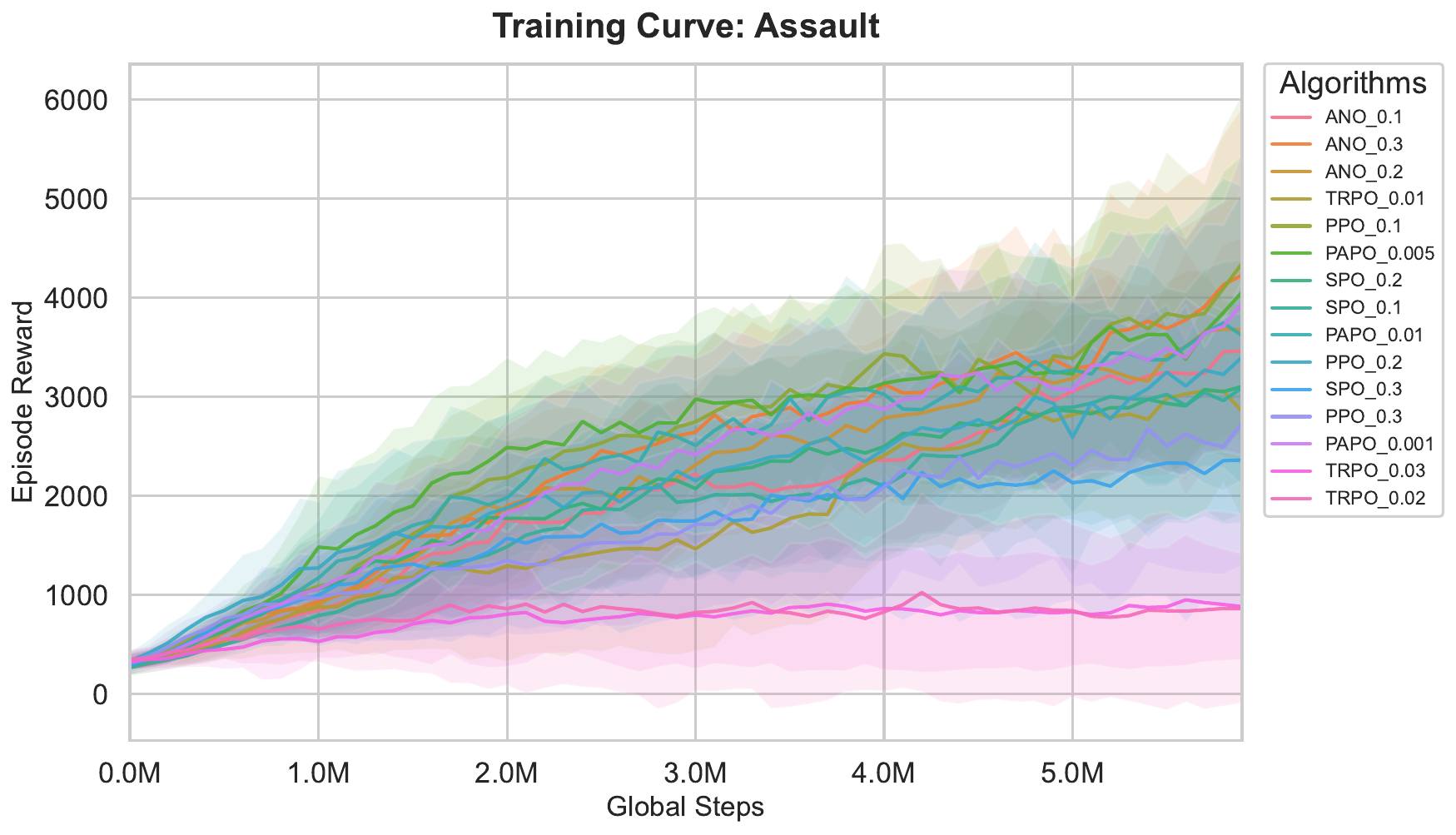}
    \includegraphics[width=0.24\textwidth]{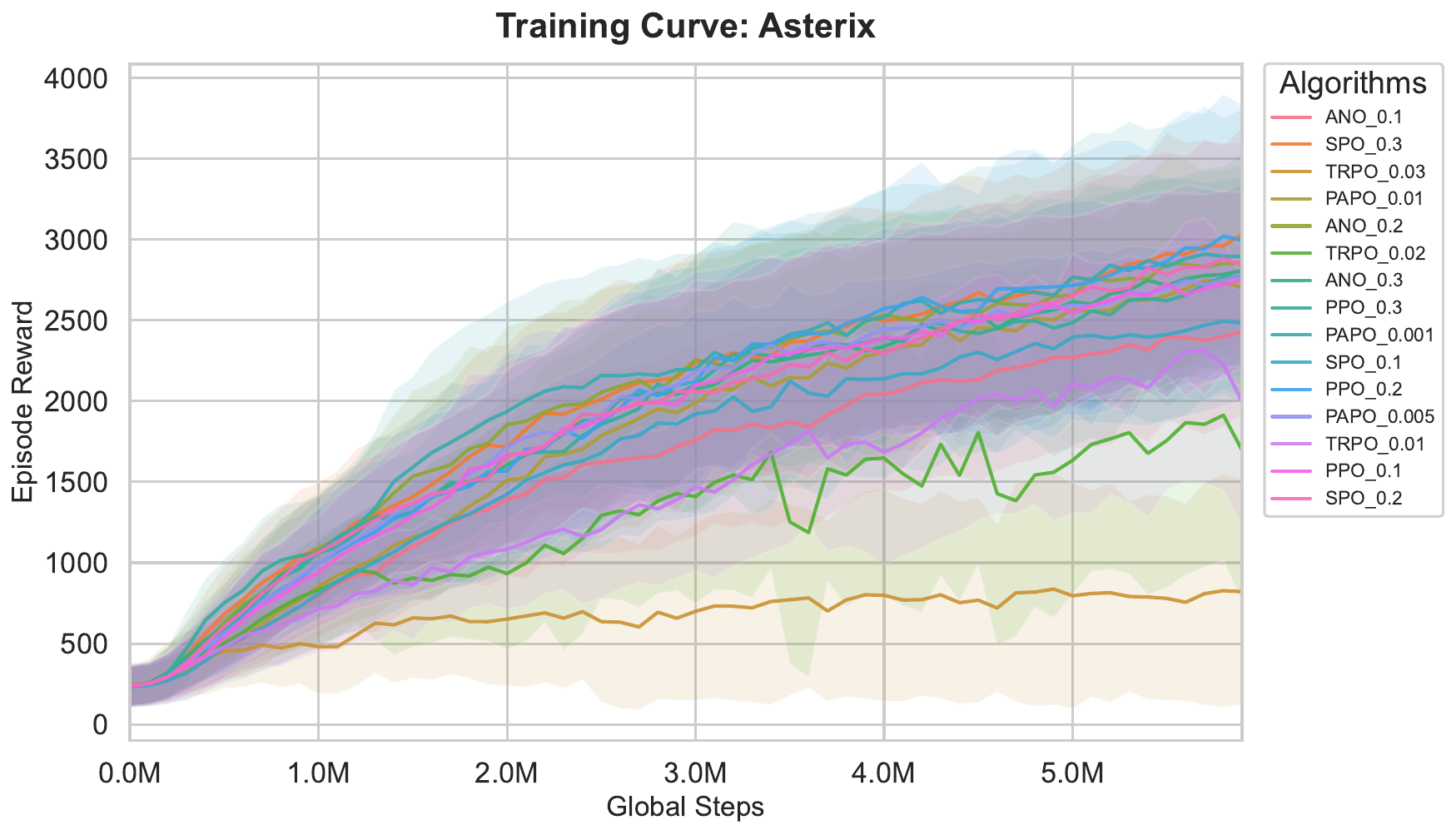}
    \includegraphics[width=0.24\textwidth]{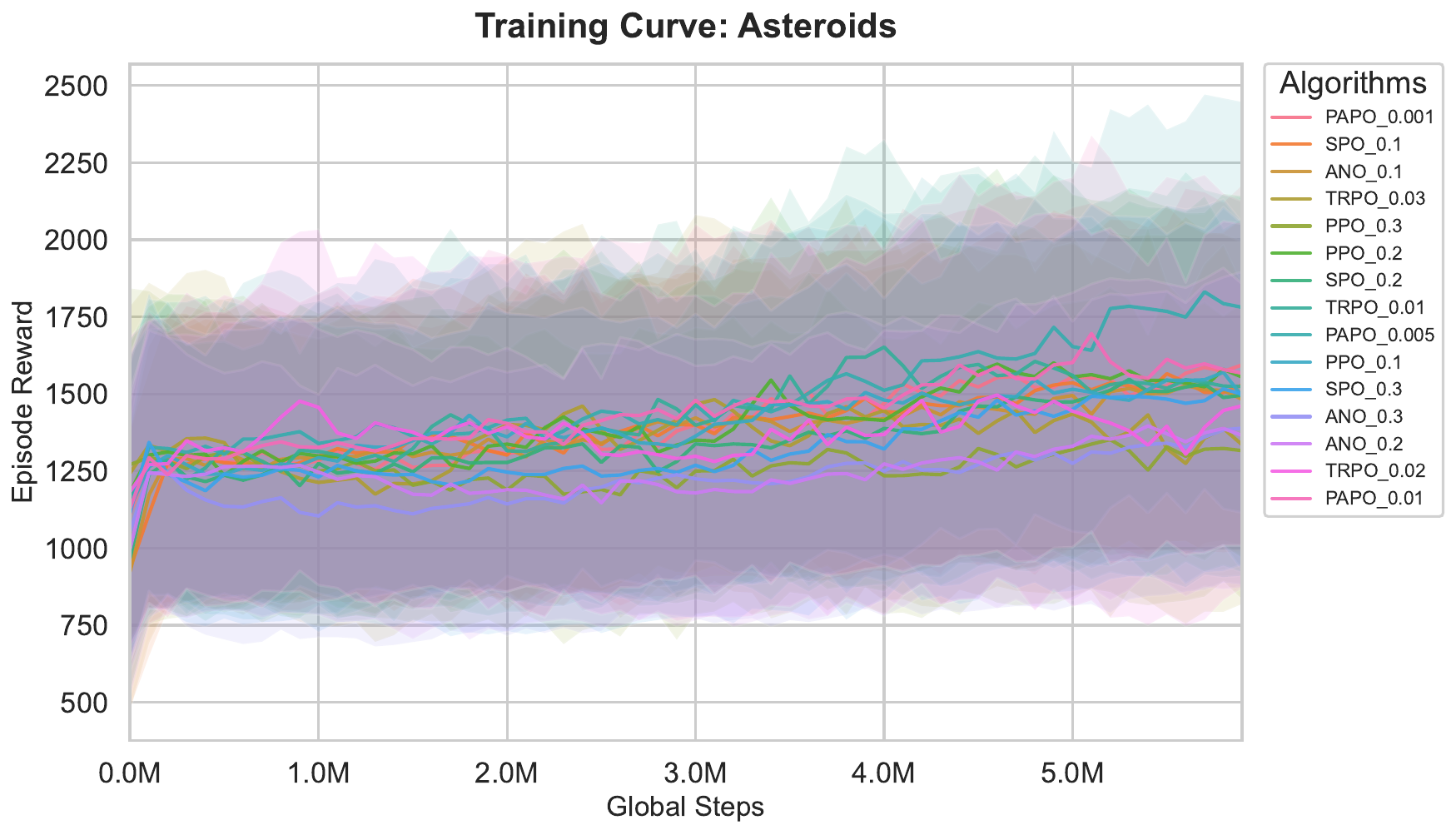}
    \includegraphics[width=0.24\textwidth]{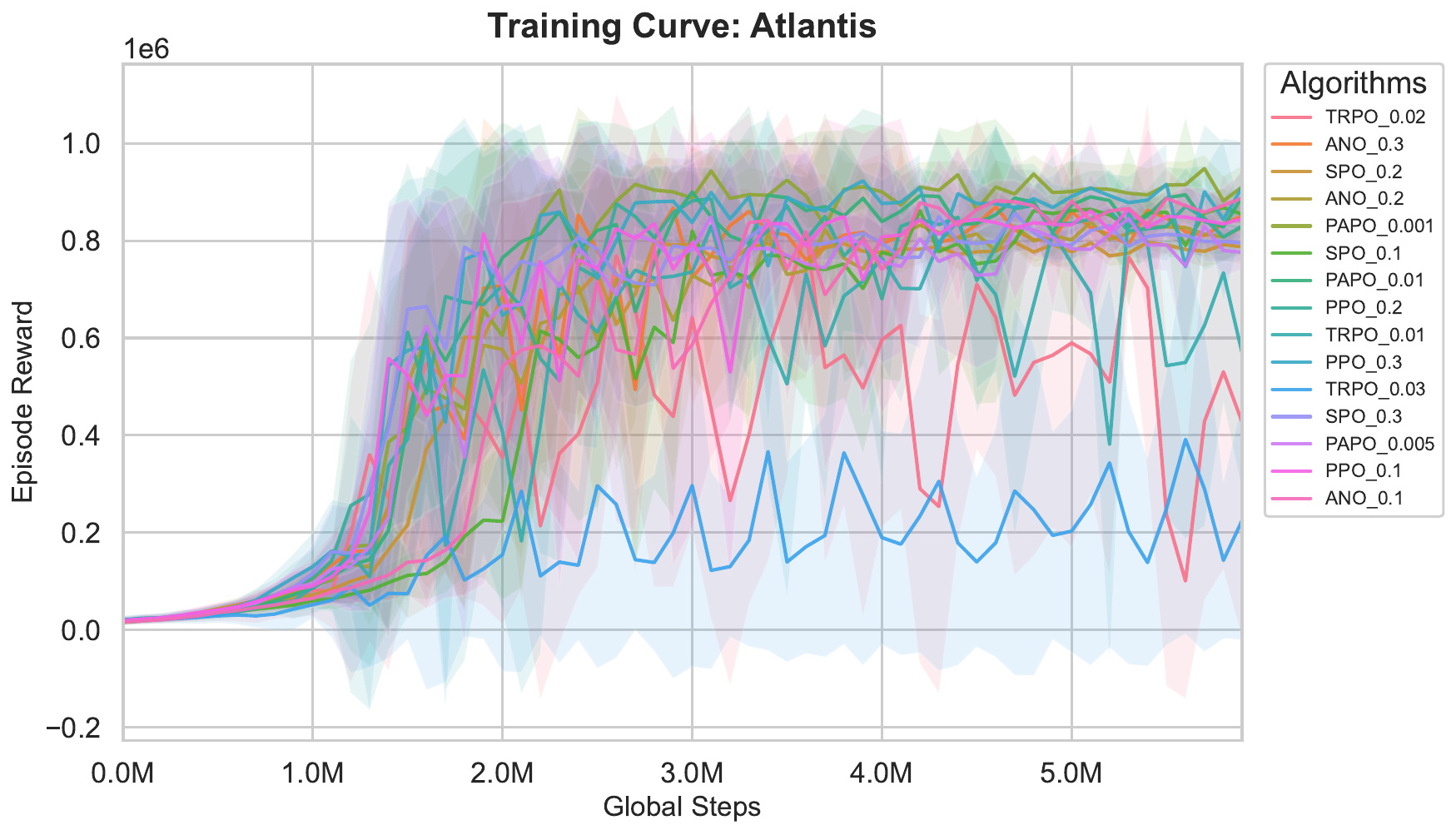}
    \includegraphics[width=0.24\textwidth]{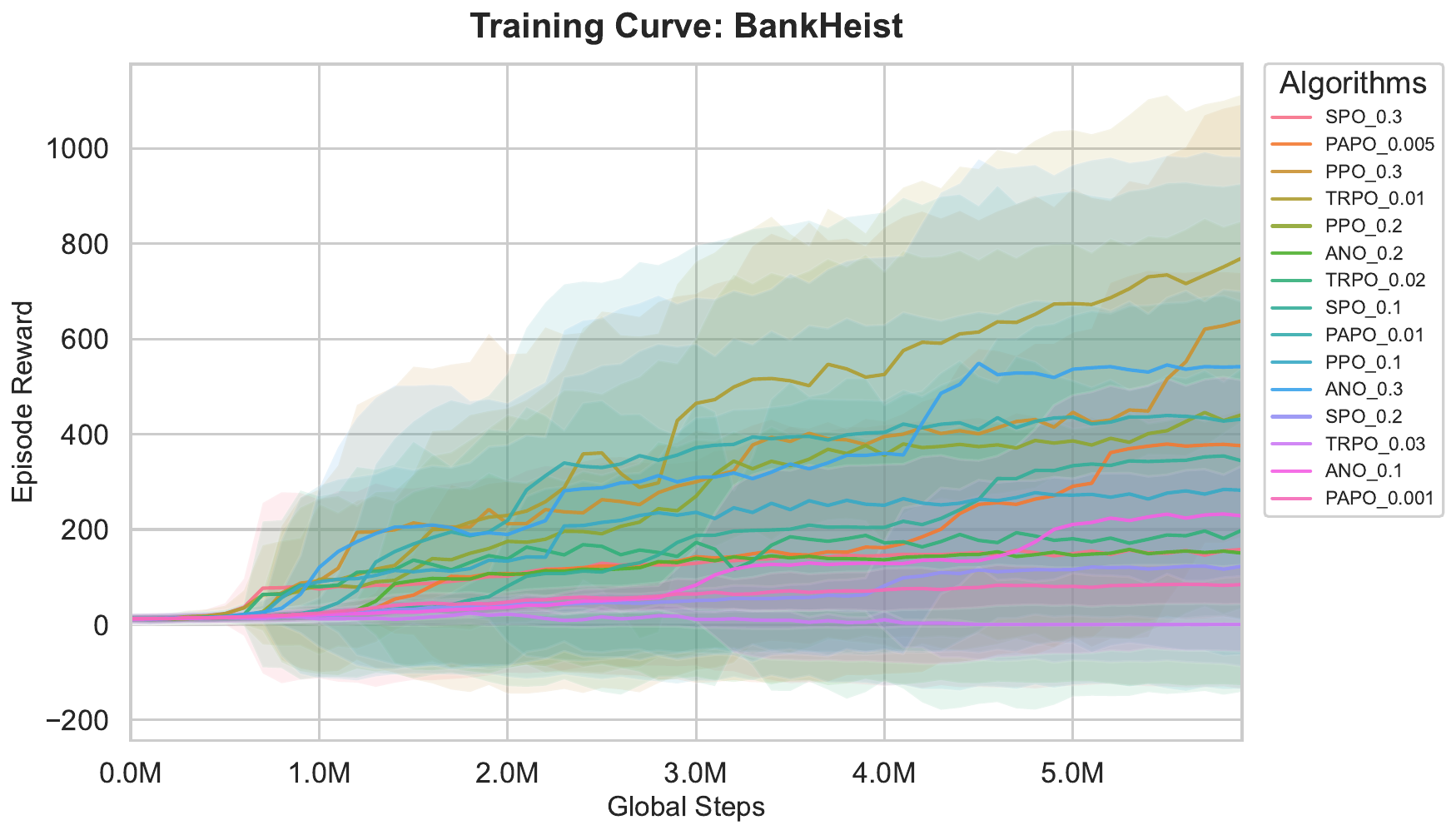}
    \includegraphics[width=0.24\textwidth]{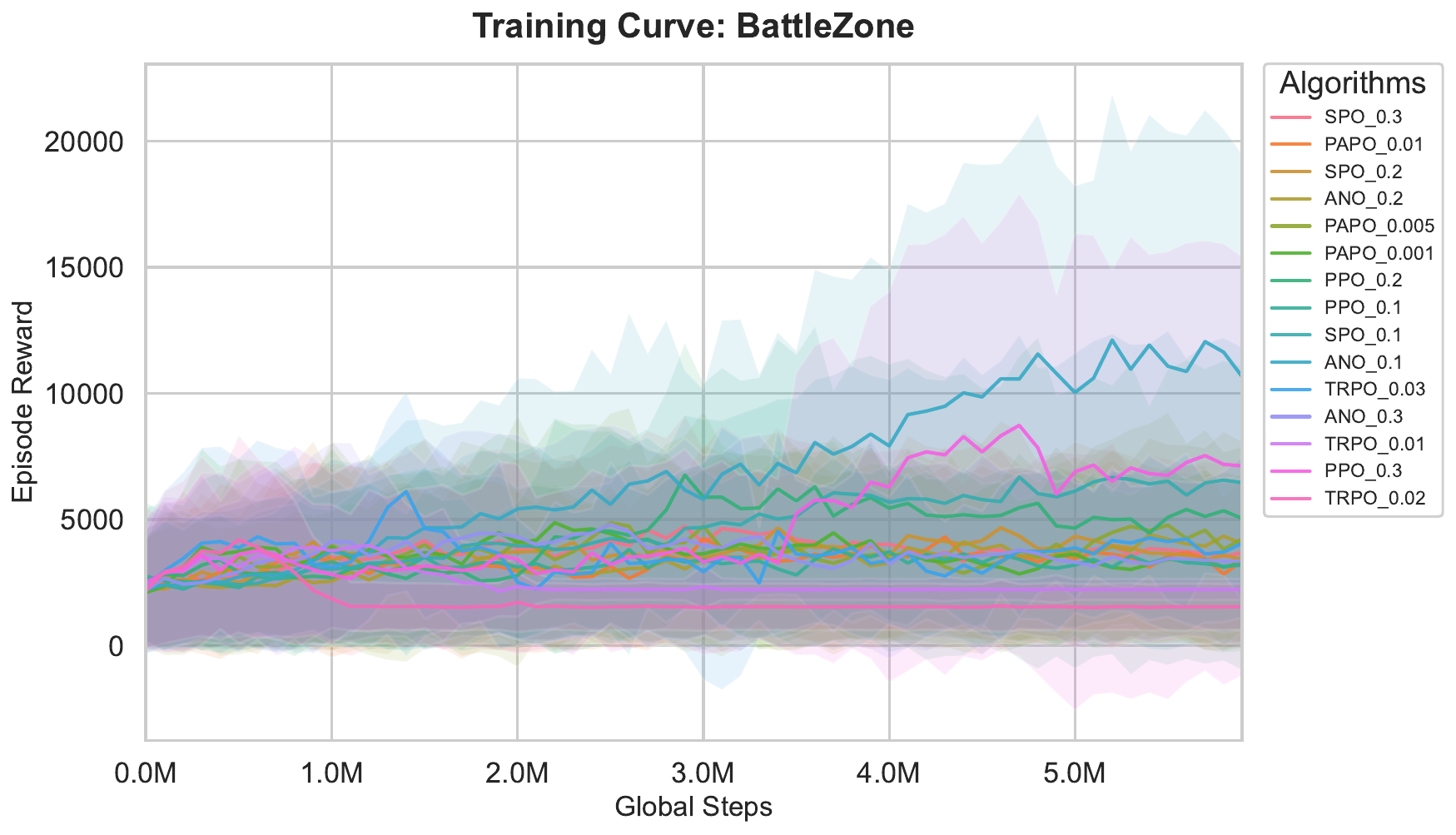}
    \includegraphics[width=0.24\textwidth]{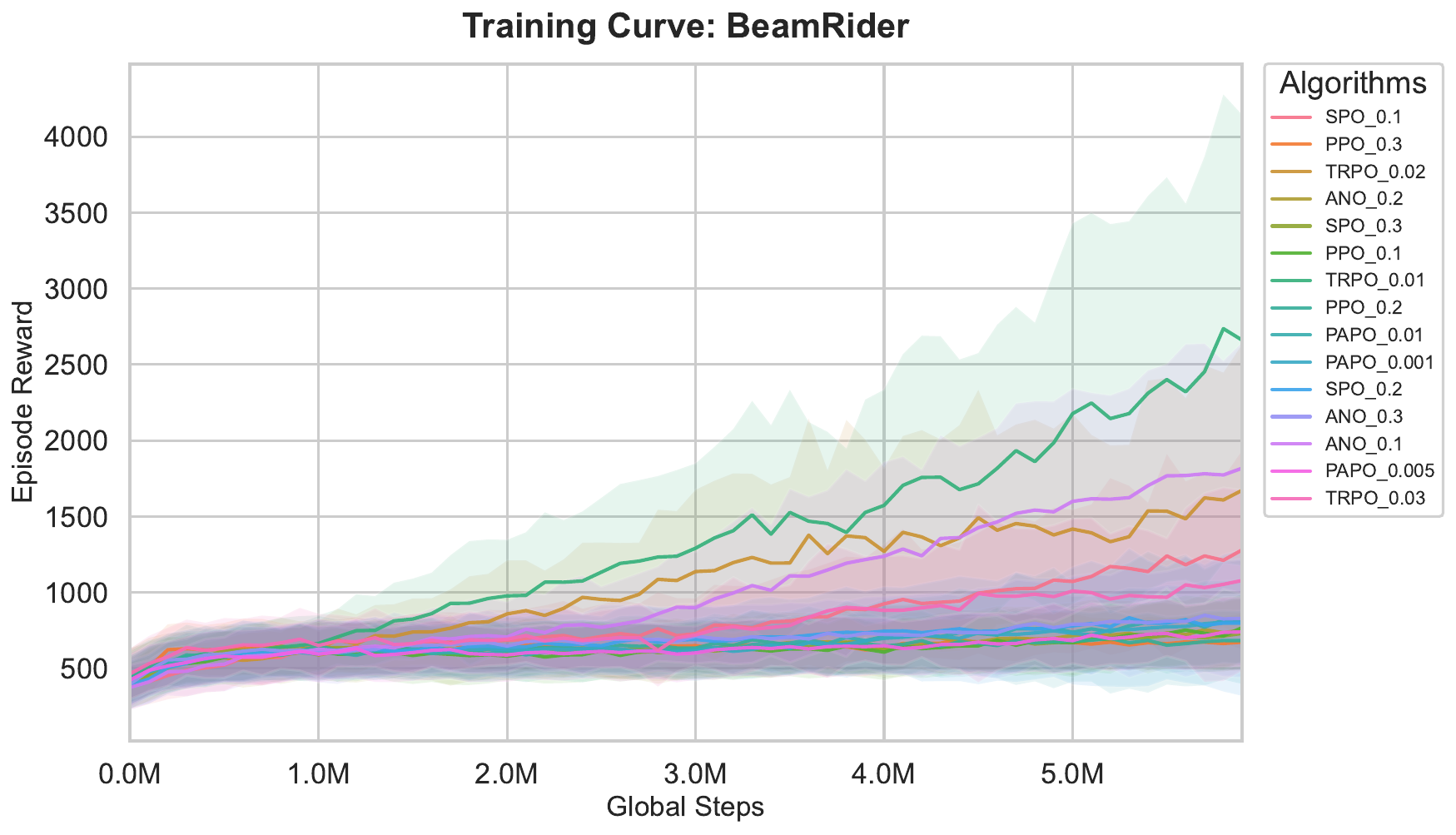}
    \includegraphics[width=0.24\textwidth]{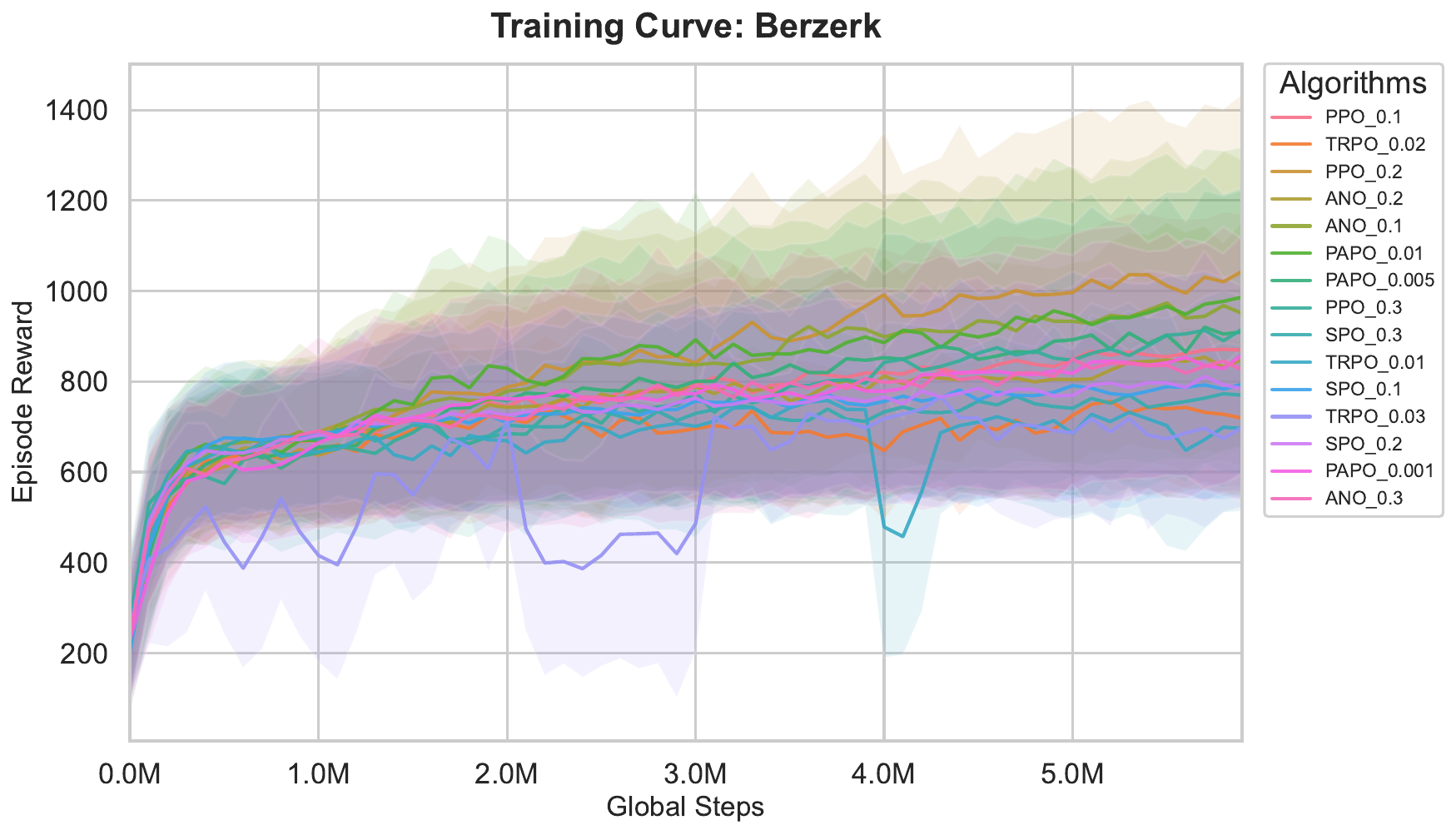}
    \includegraphics[width=0.24\textwidth]{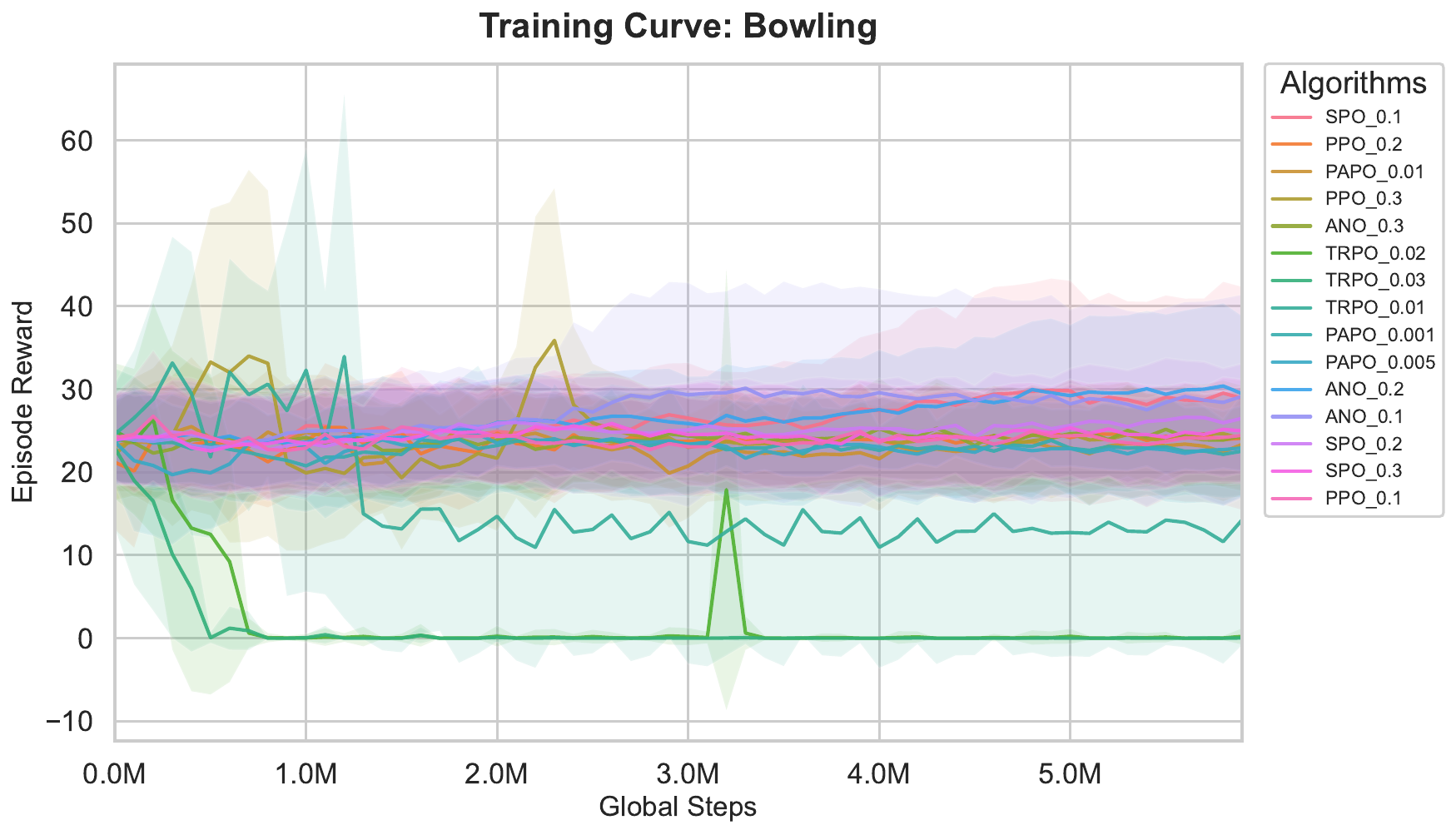}
    \includegraphics[width=0.24\textwidth]{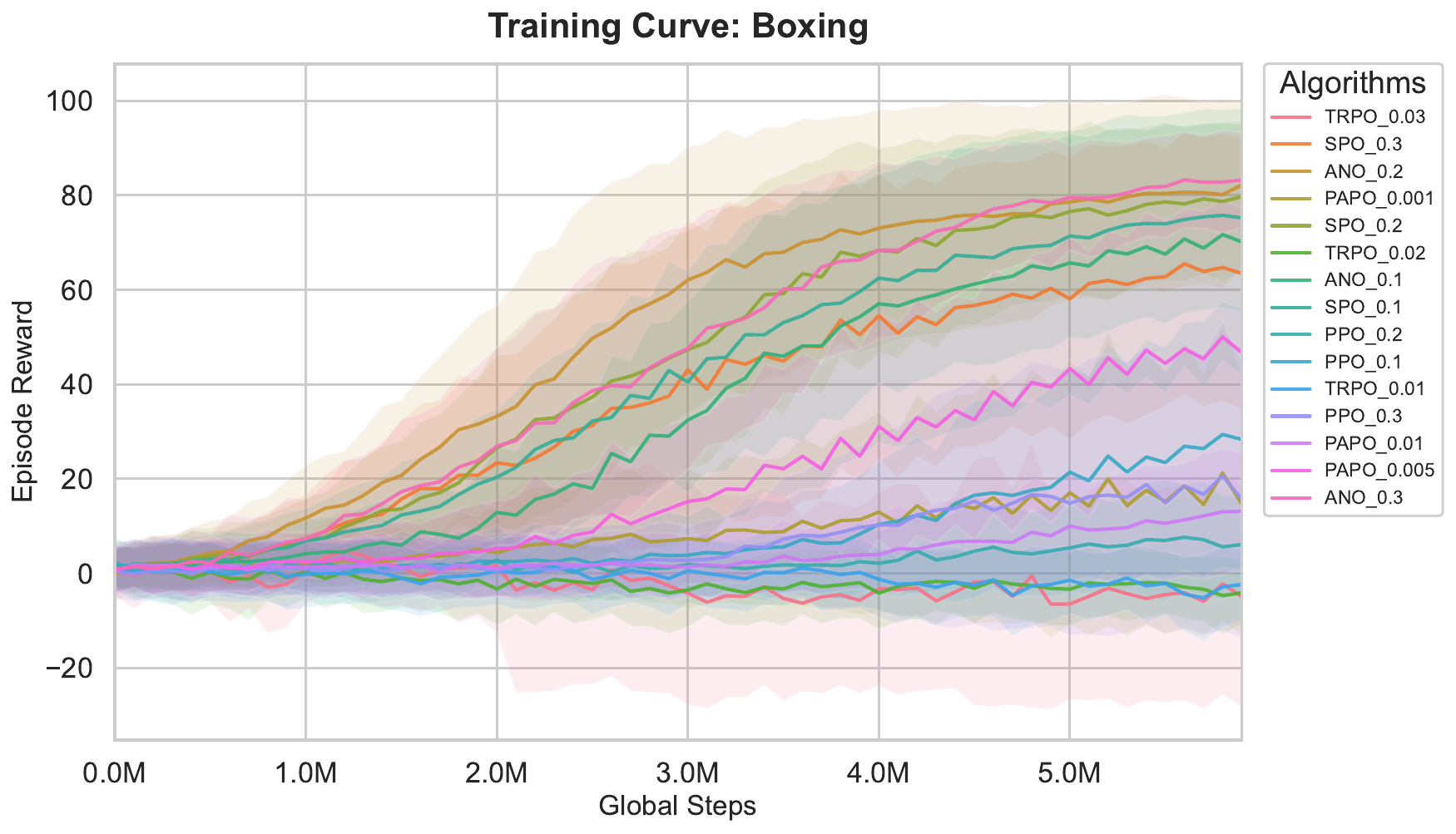}
    \includegraphics[width=0.24\textwidth]{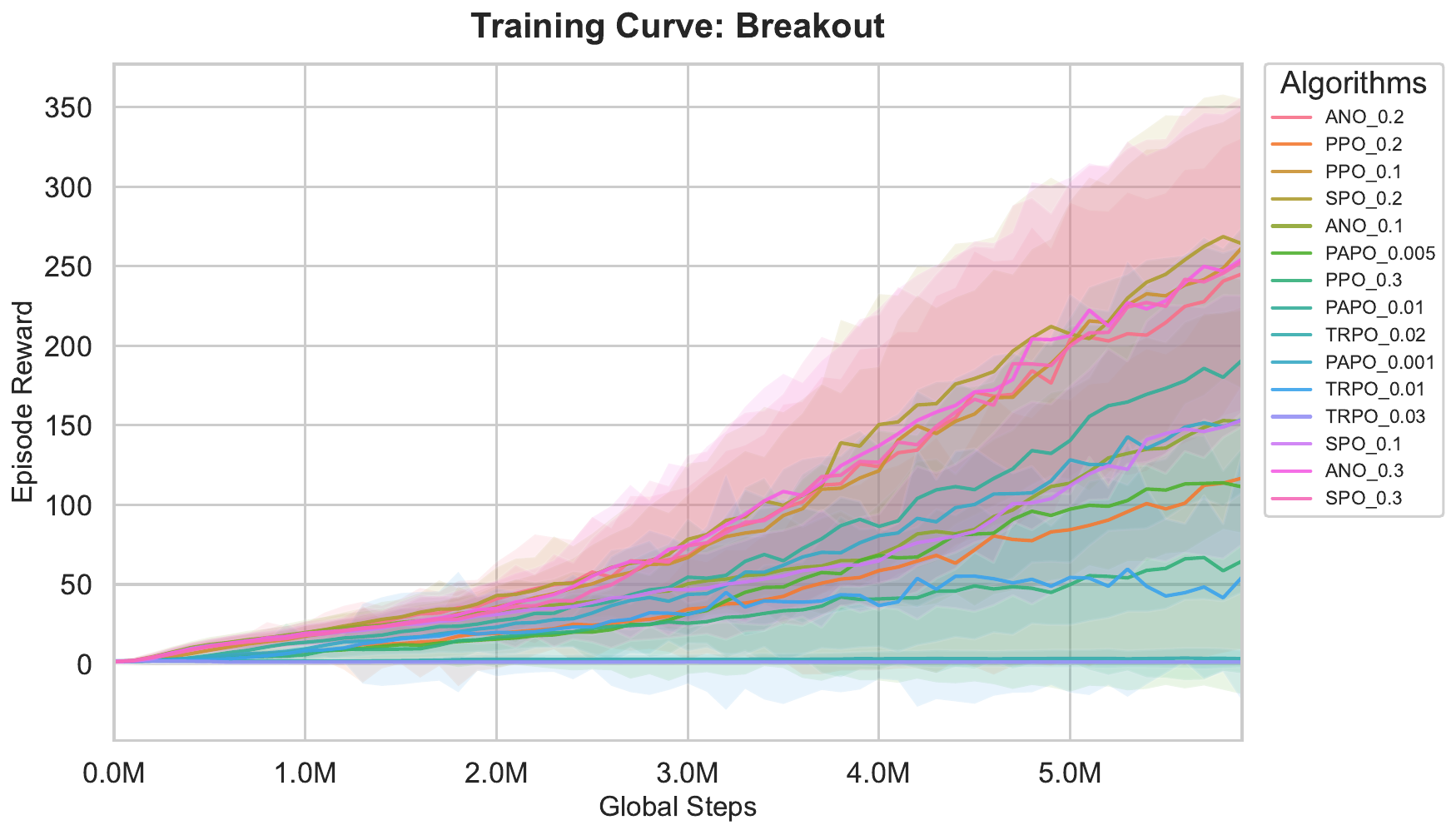}
    \includegraphics[width=0.24\textwidth]{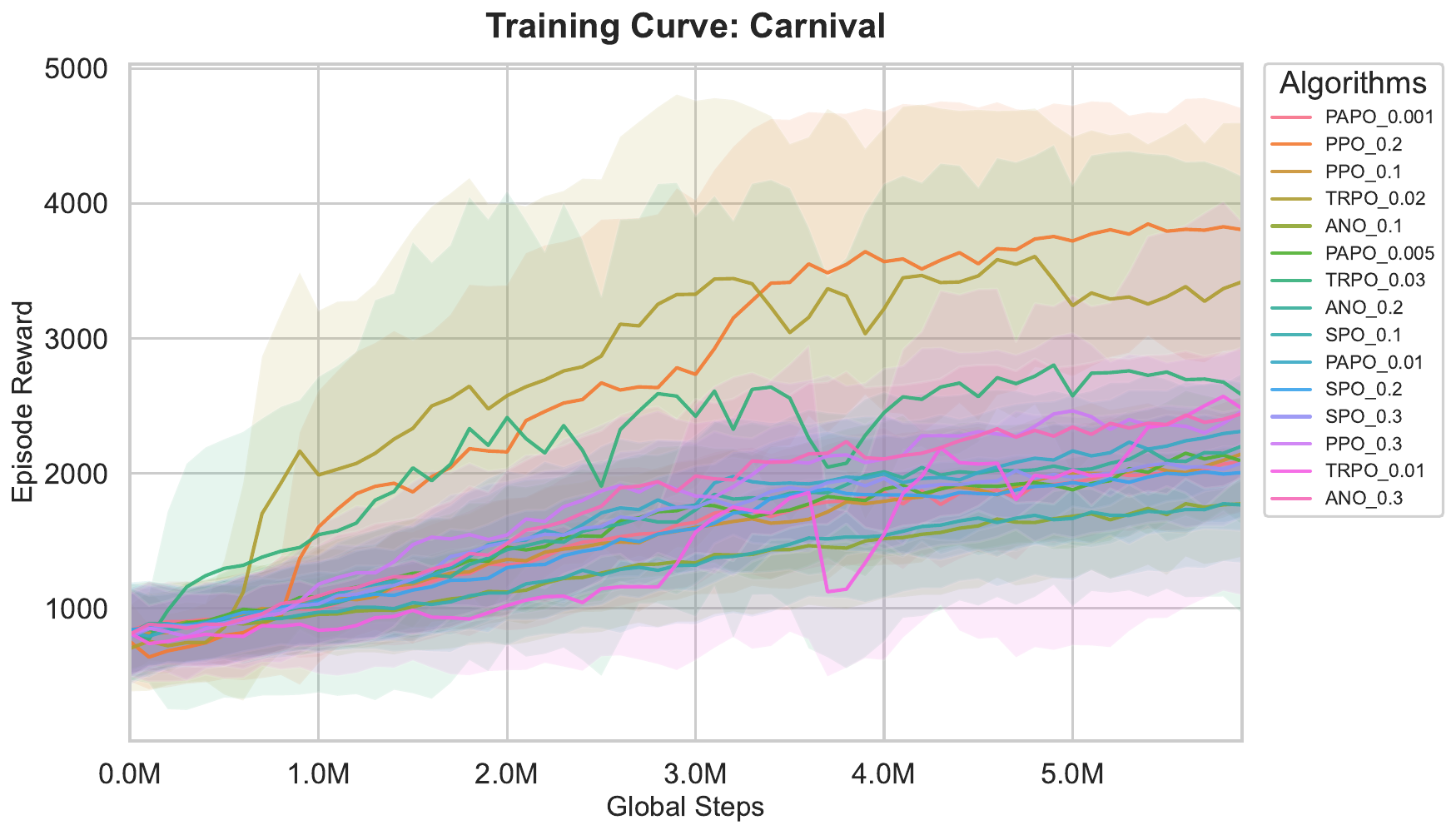}
    \includegraphics[width=0.24\textwidth]{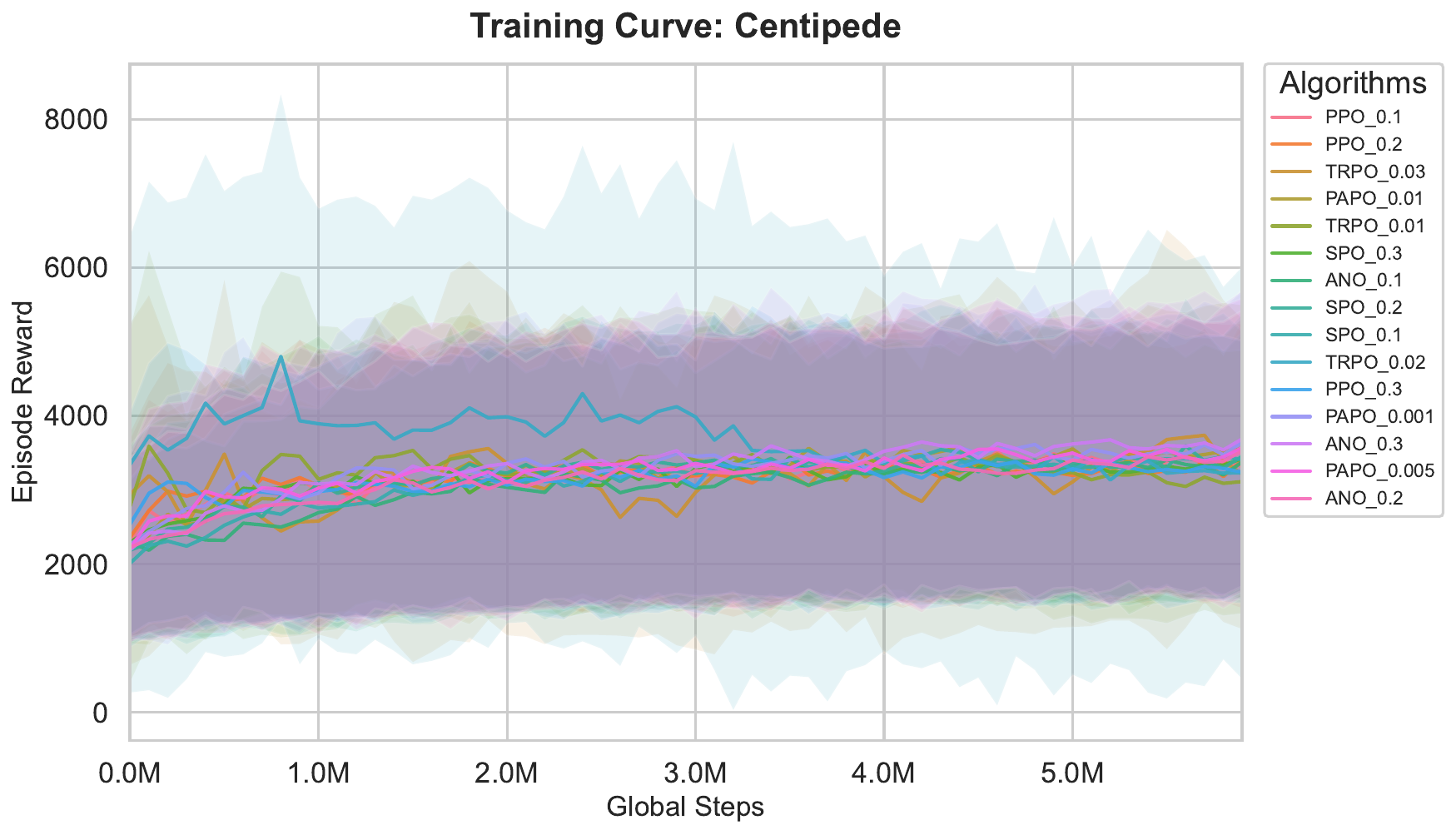}
    \includegraphics[width=0.24\textwidth]{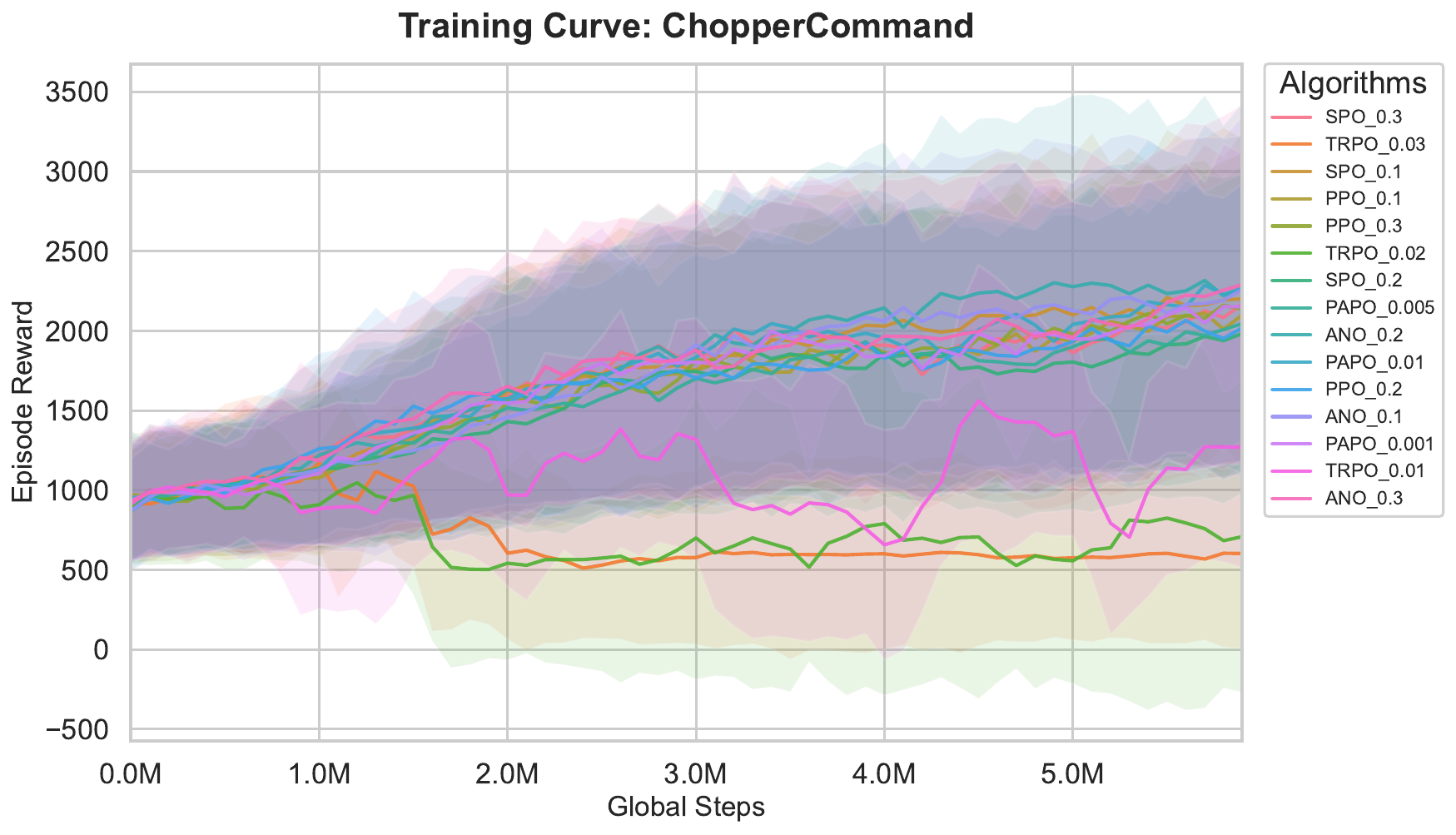}
    \includegraphics[width=0.24\textwidth]{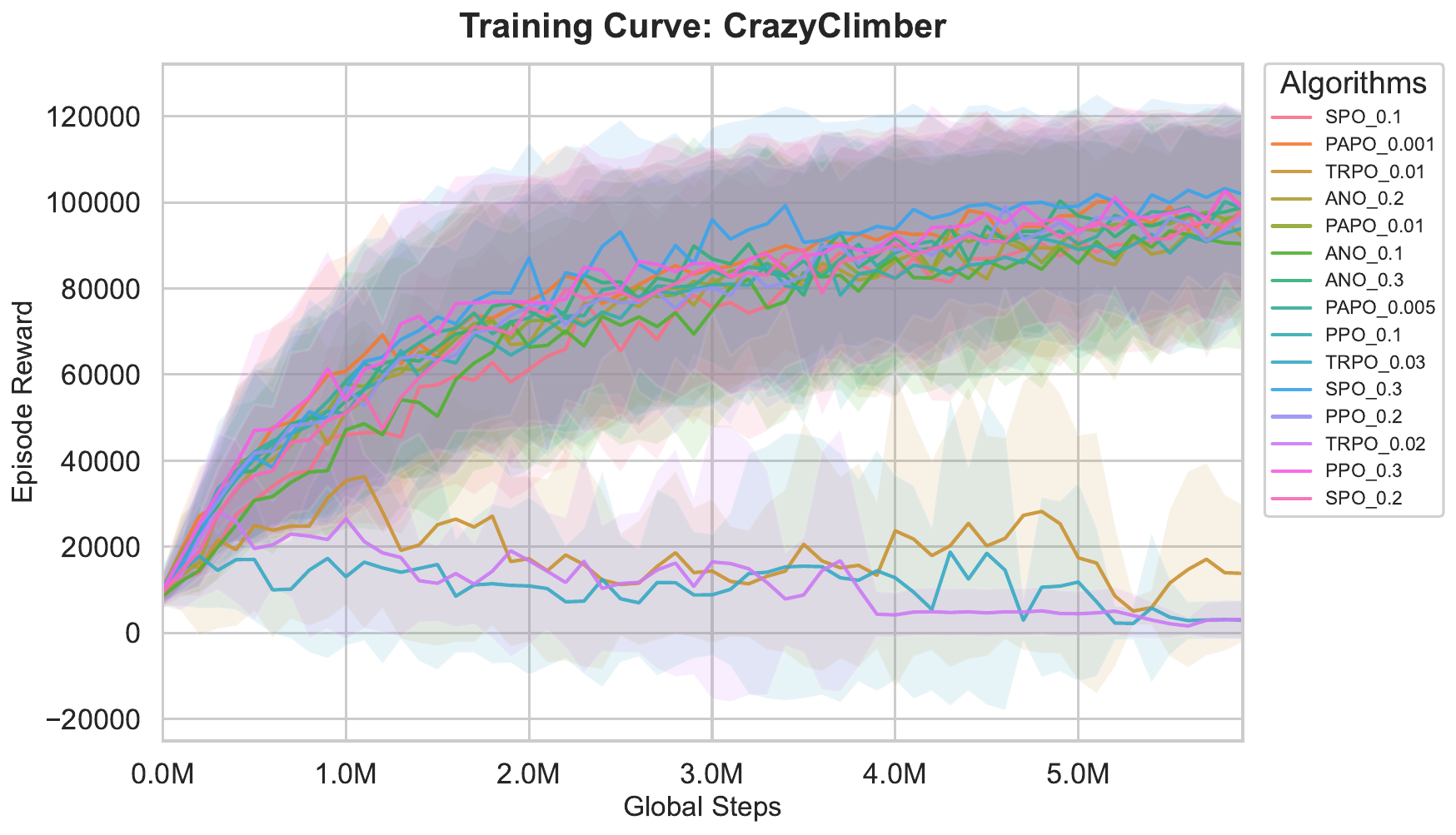}
    \includegraphics[width=0.24\textwidth]{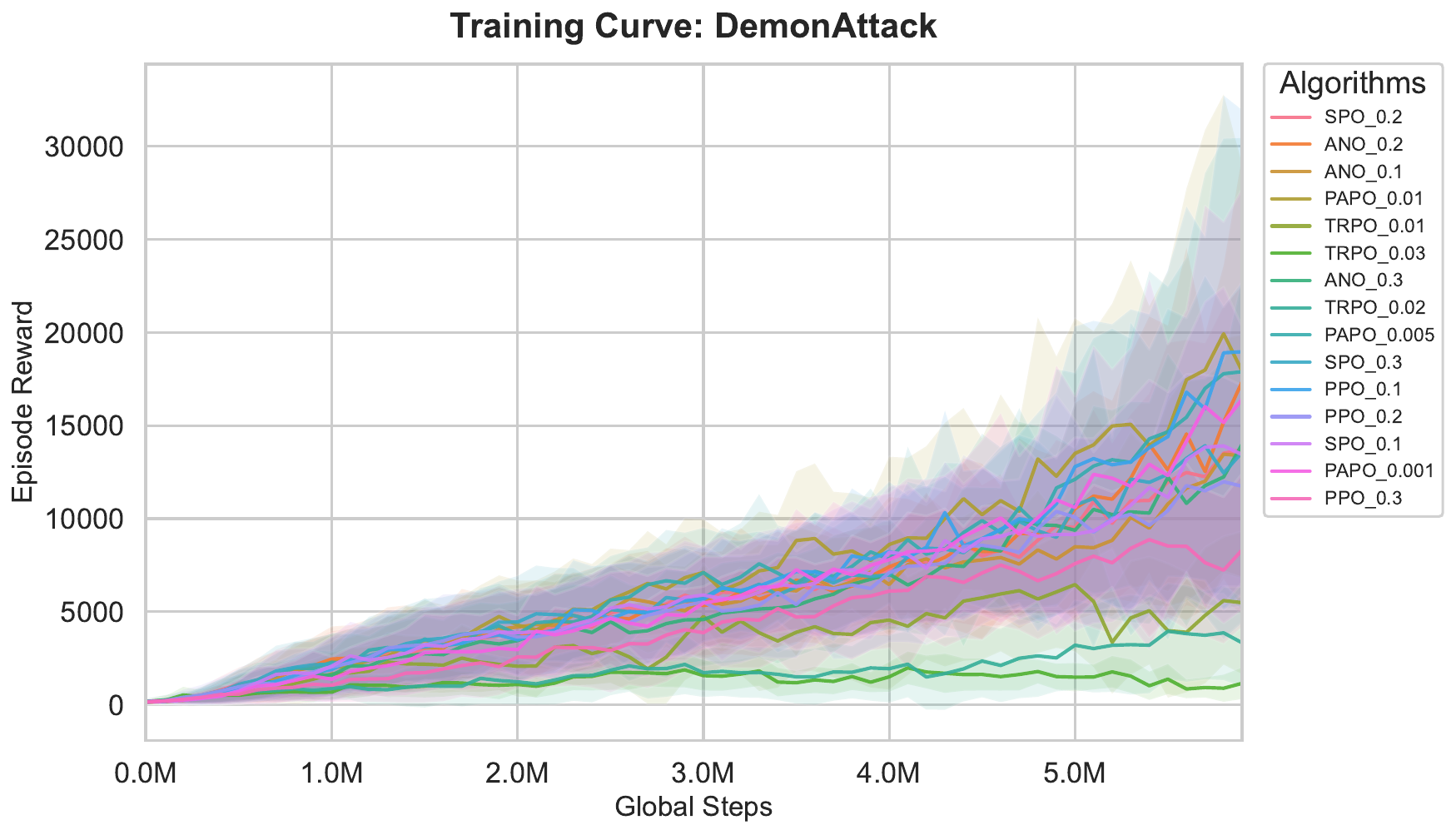}
    \includegraphics[width=0.24\textwidth]{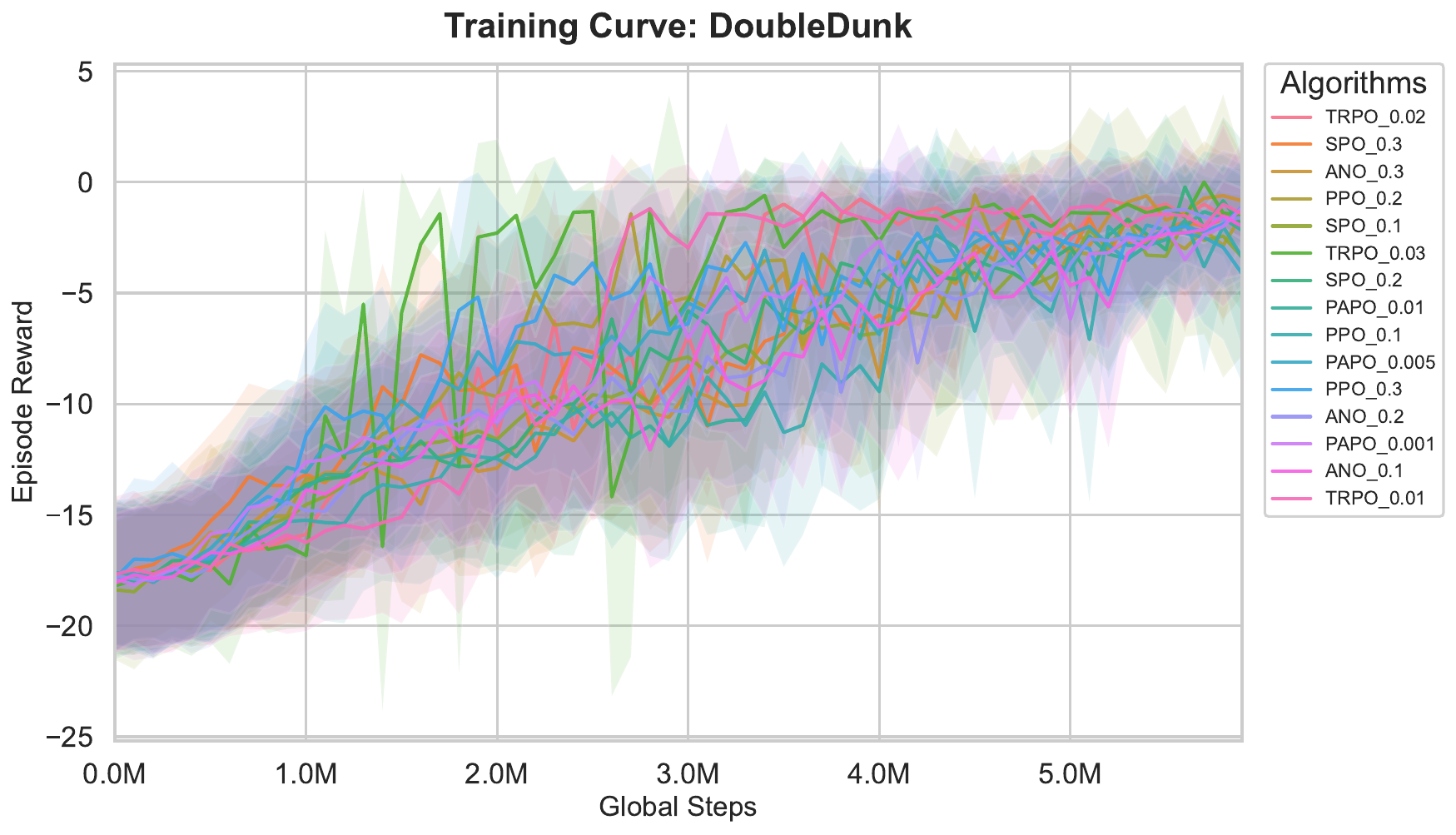}
    \includegraphics[width=0.24\textwidth]{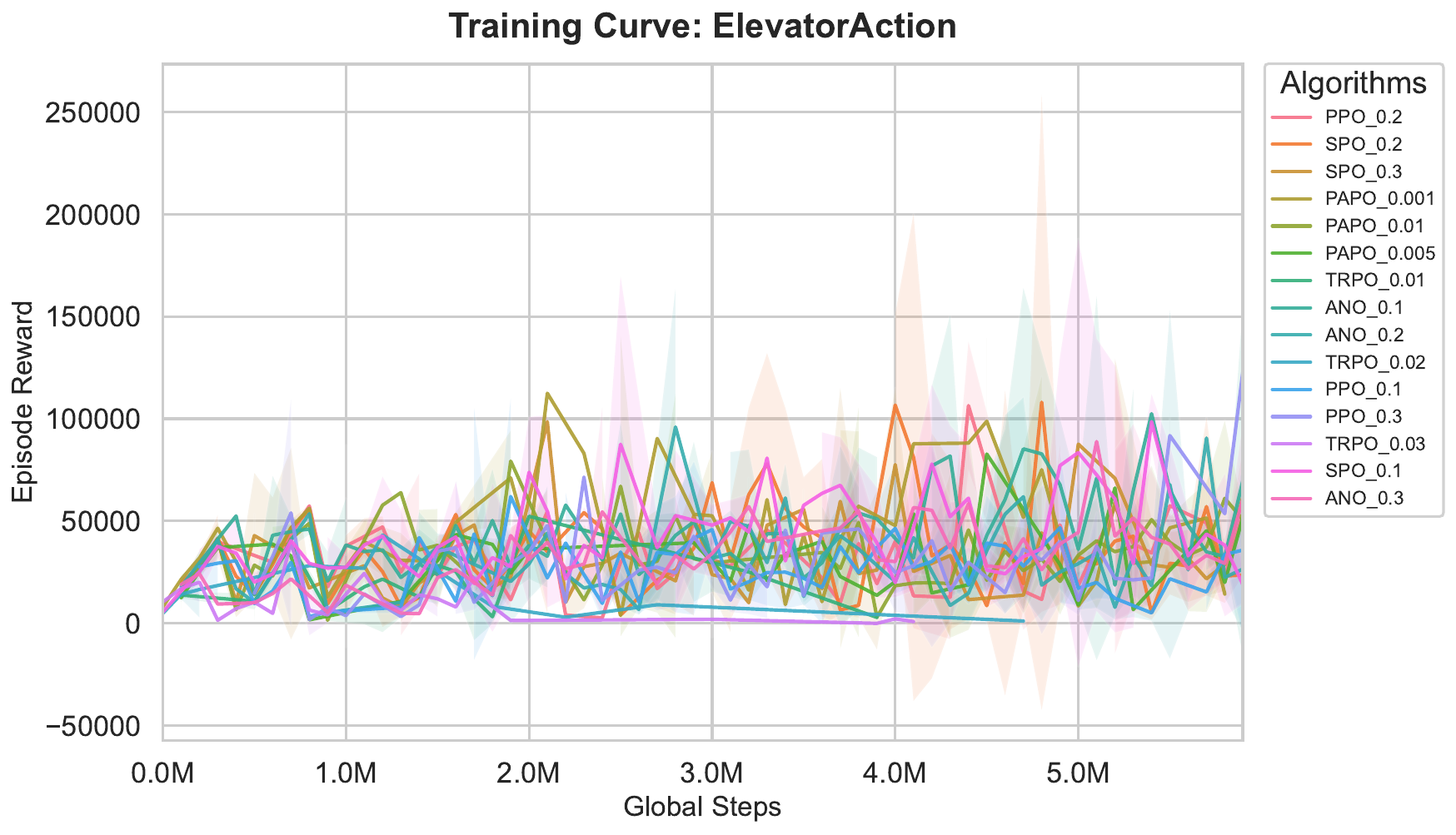}
    \includegraphics[width=0.24\textwidth]{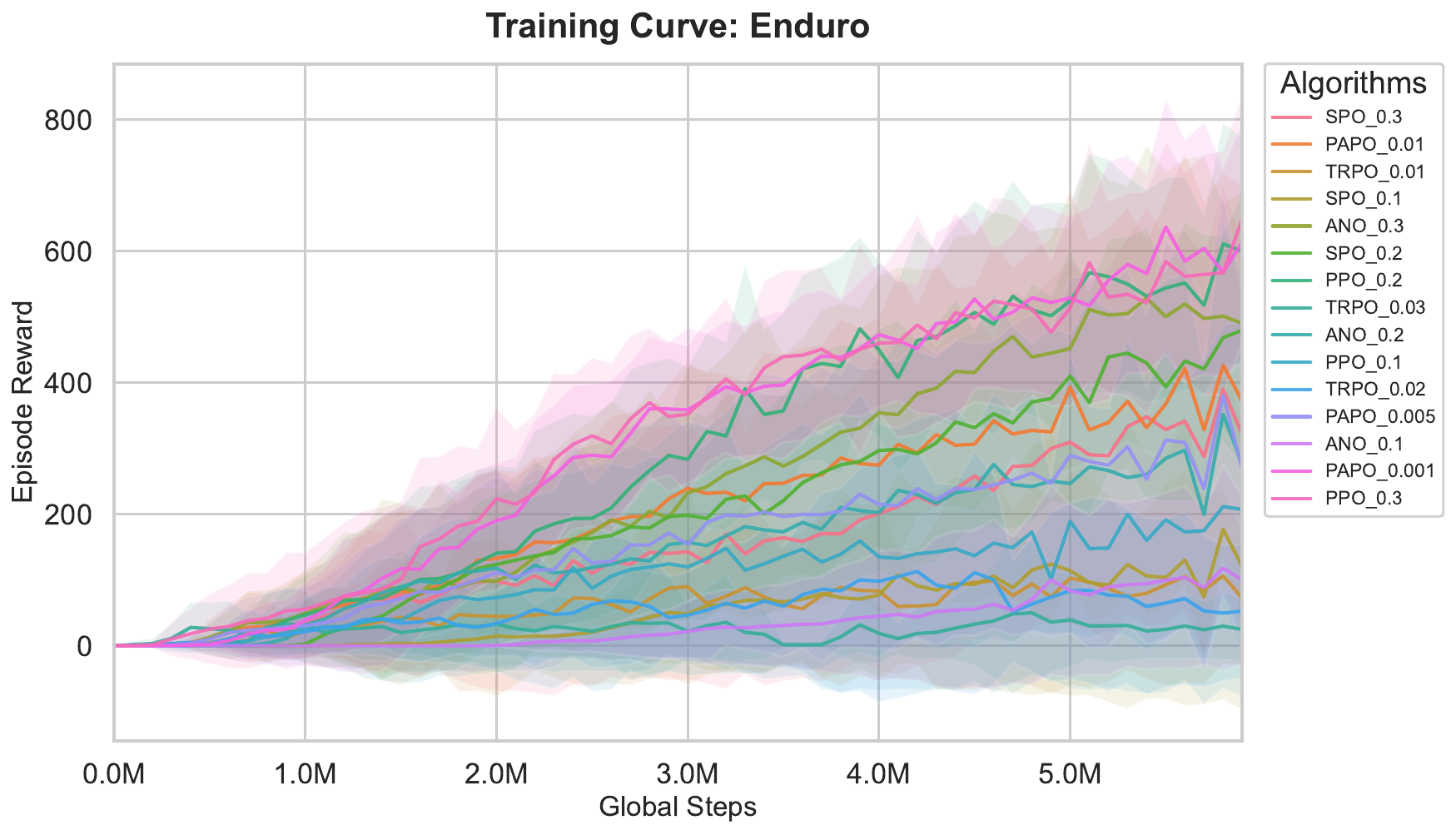}
    \includegraphics[width=0.24\textwidth]{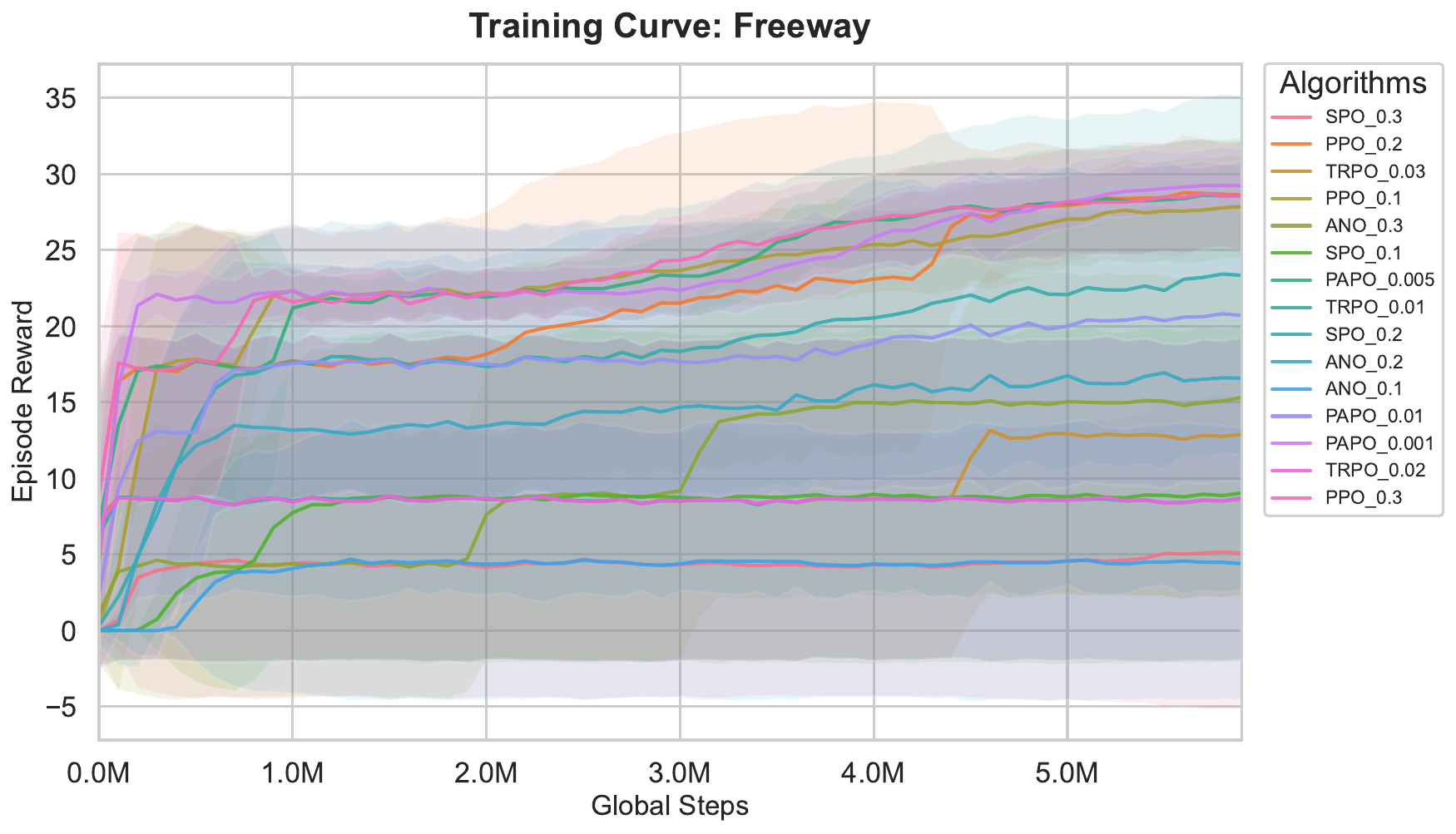}
    \includegraphics[width=0.24\textwidth]{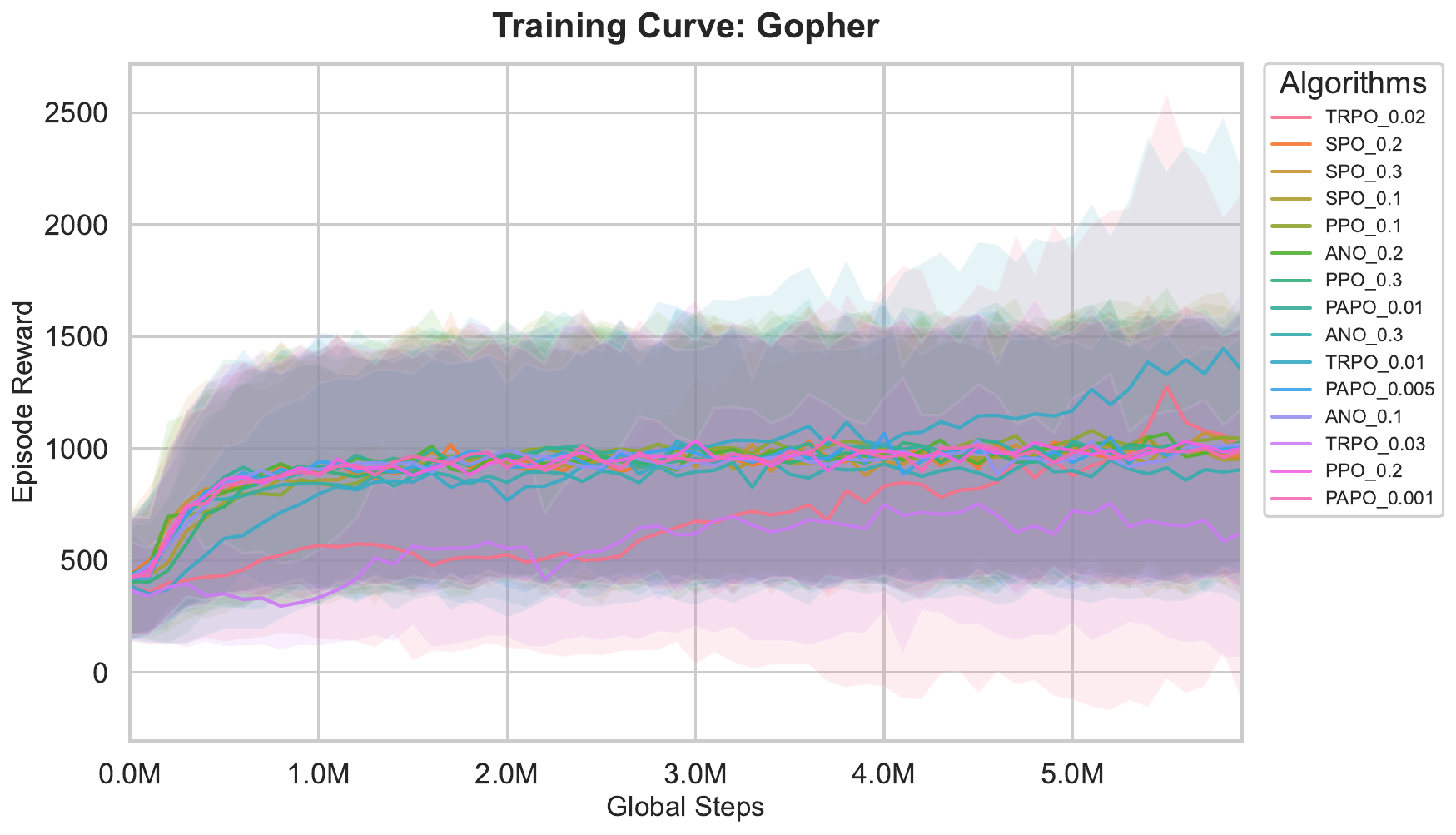}
    \includegraphics[width=0.24\textwidth]{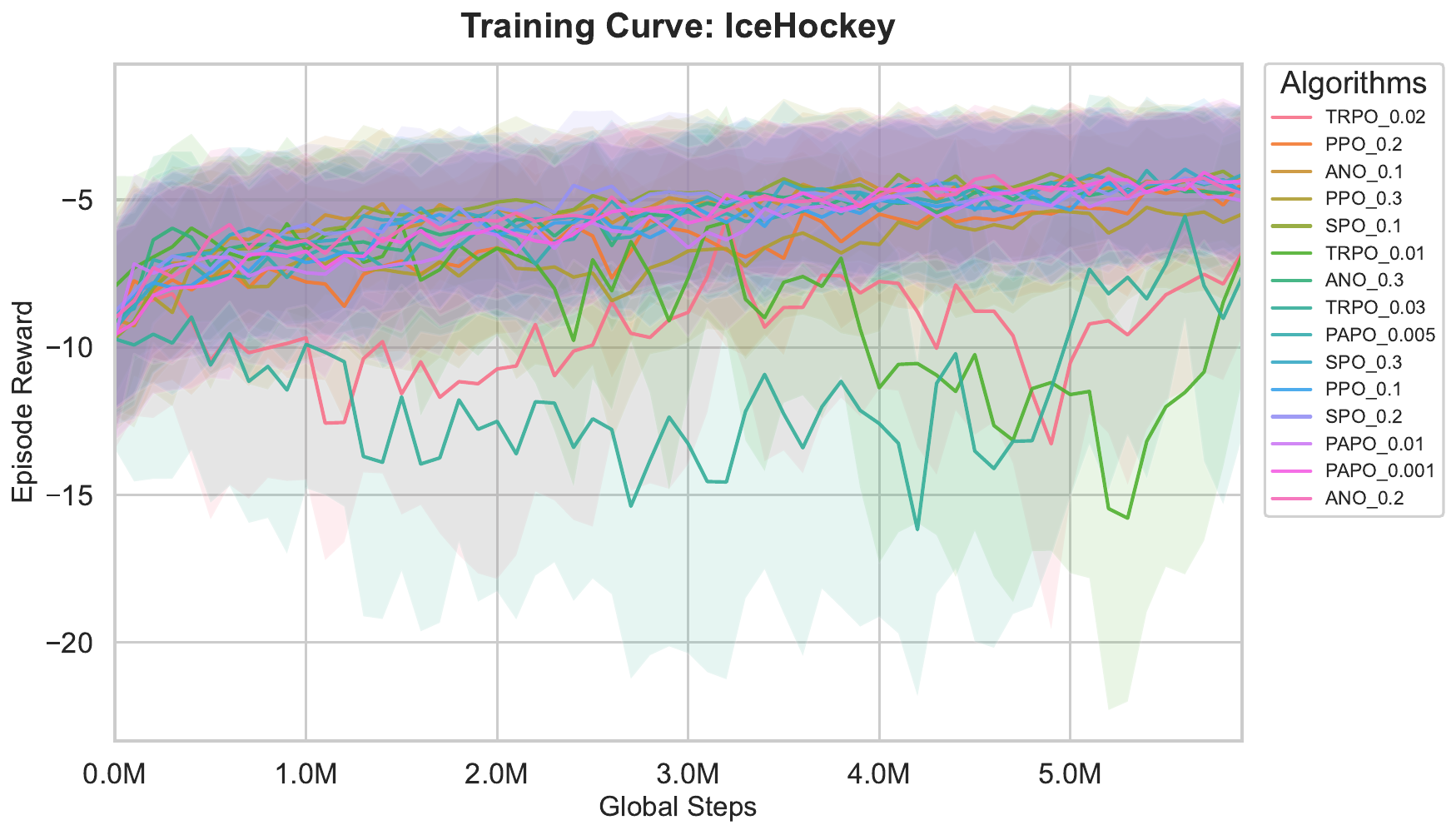}
    \includegraphics[width=0.24\textwidth]{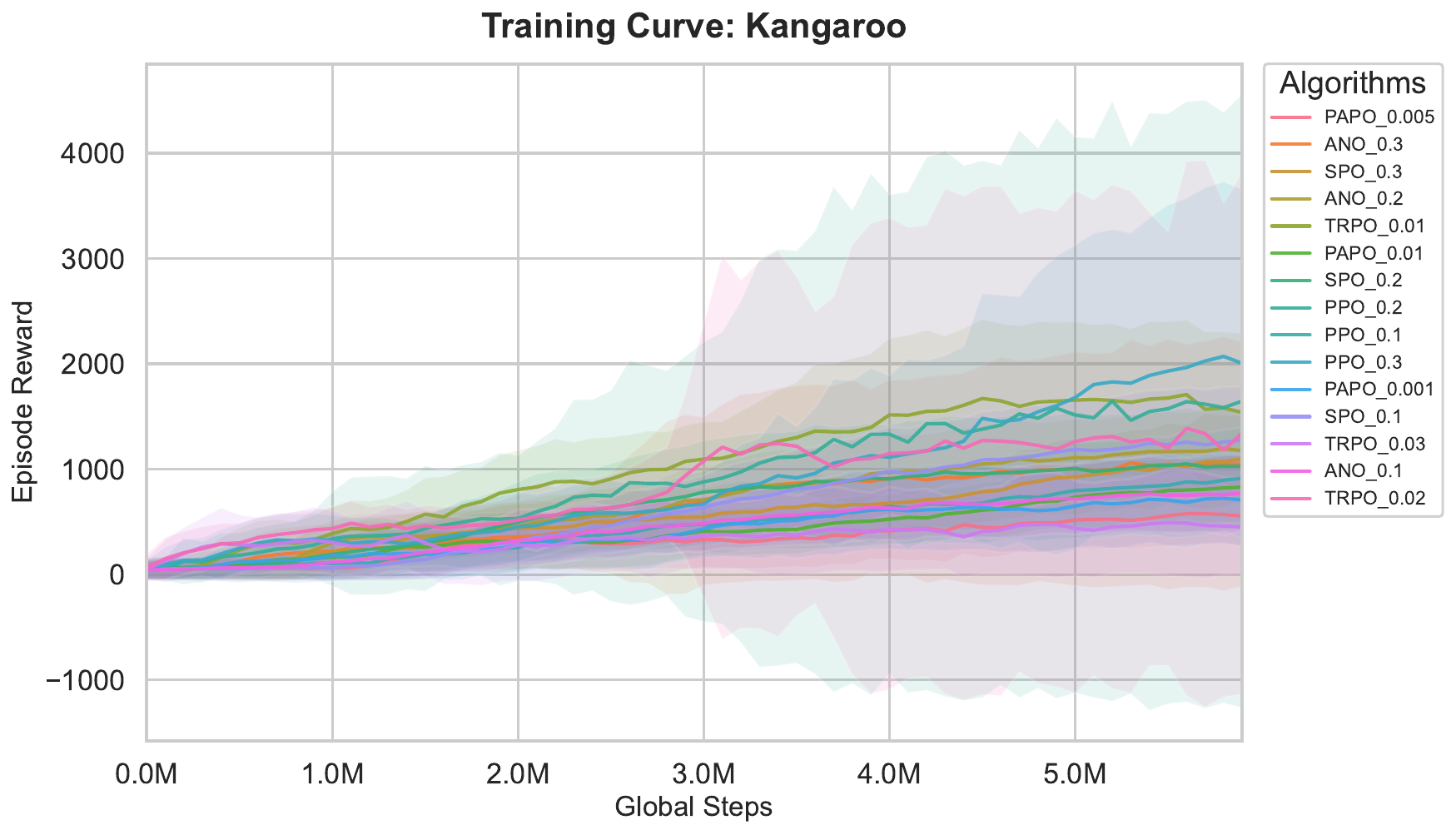}
    \includegraphics[width=0.24\textwidth]{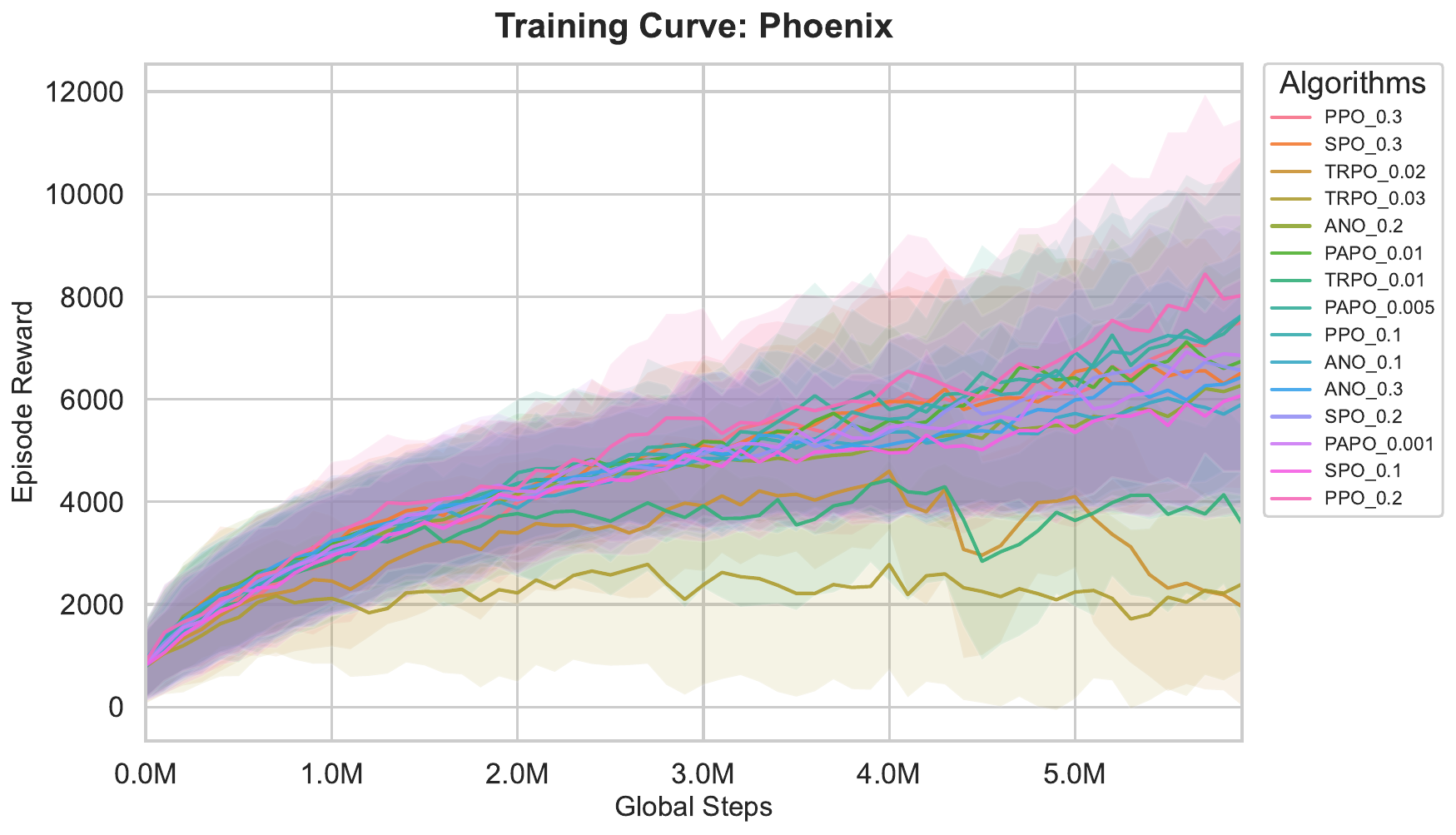}
    \includegraphics[width=0.24\textwidth]{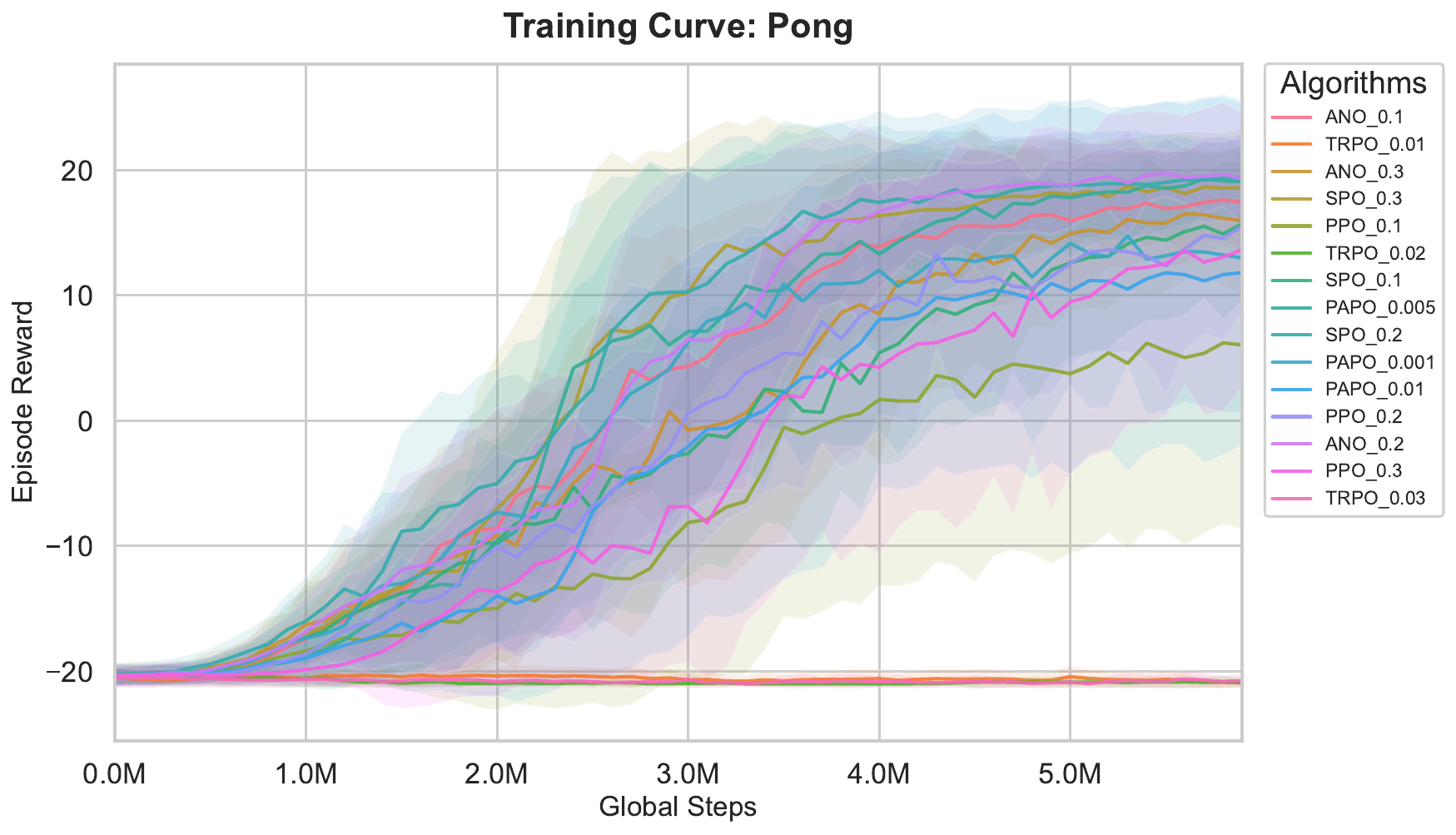}
    \includegraphics[width=0.24\textwidth]{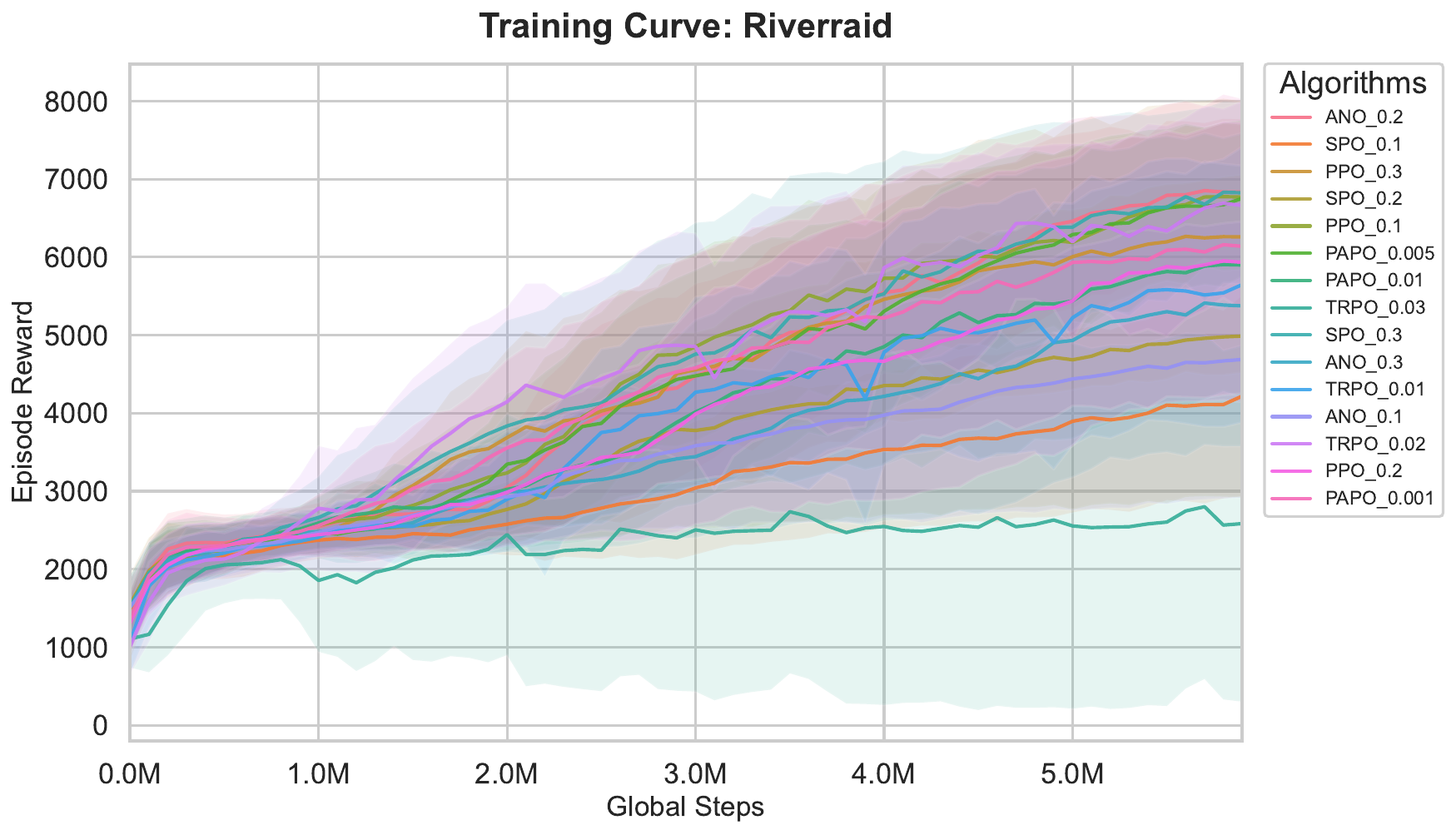}
    \includegraphics[width=0.24\textwidth]{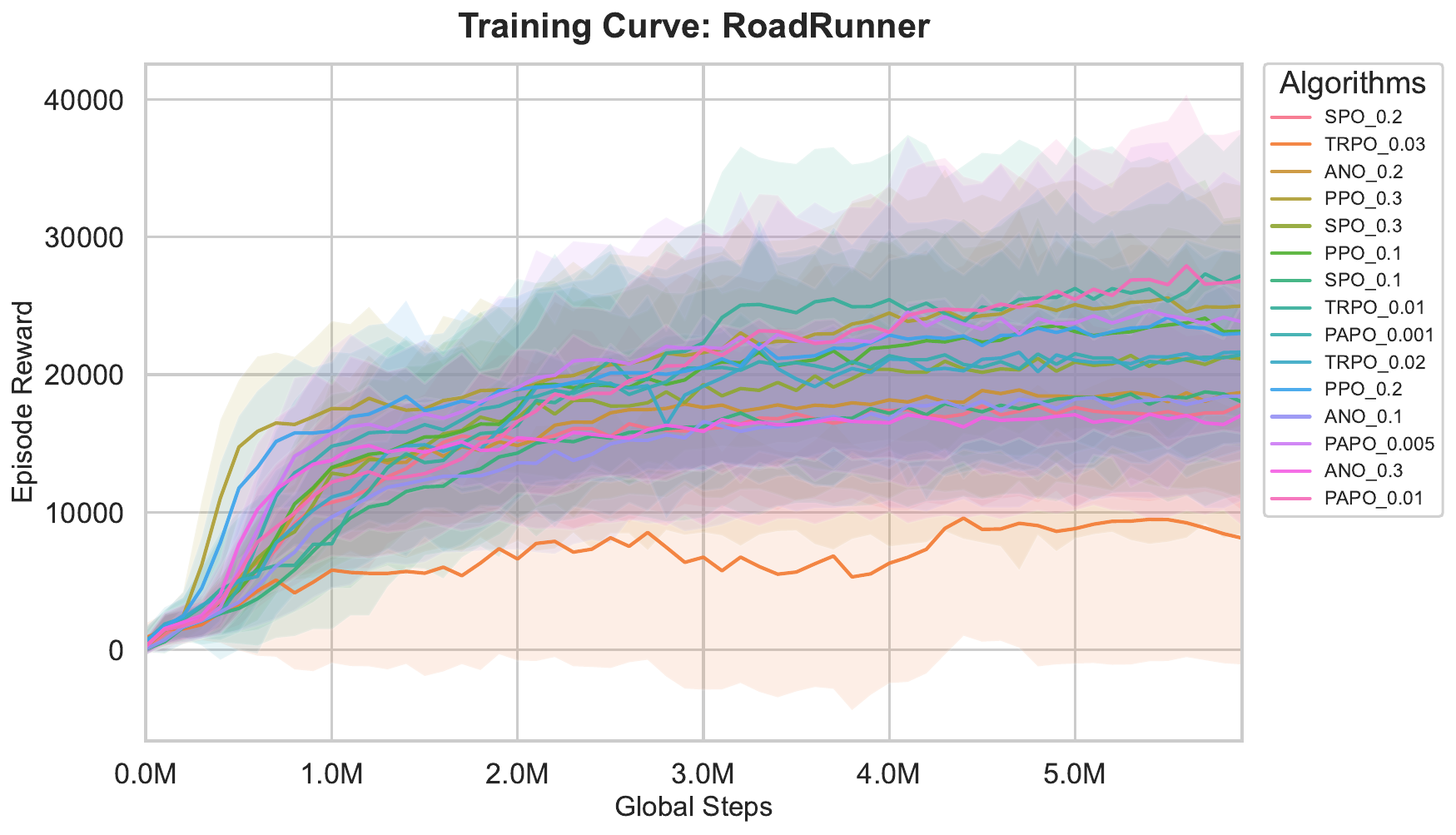}
    \includegraphics[width=0.24\textwidth]{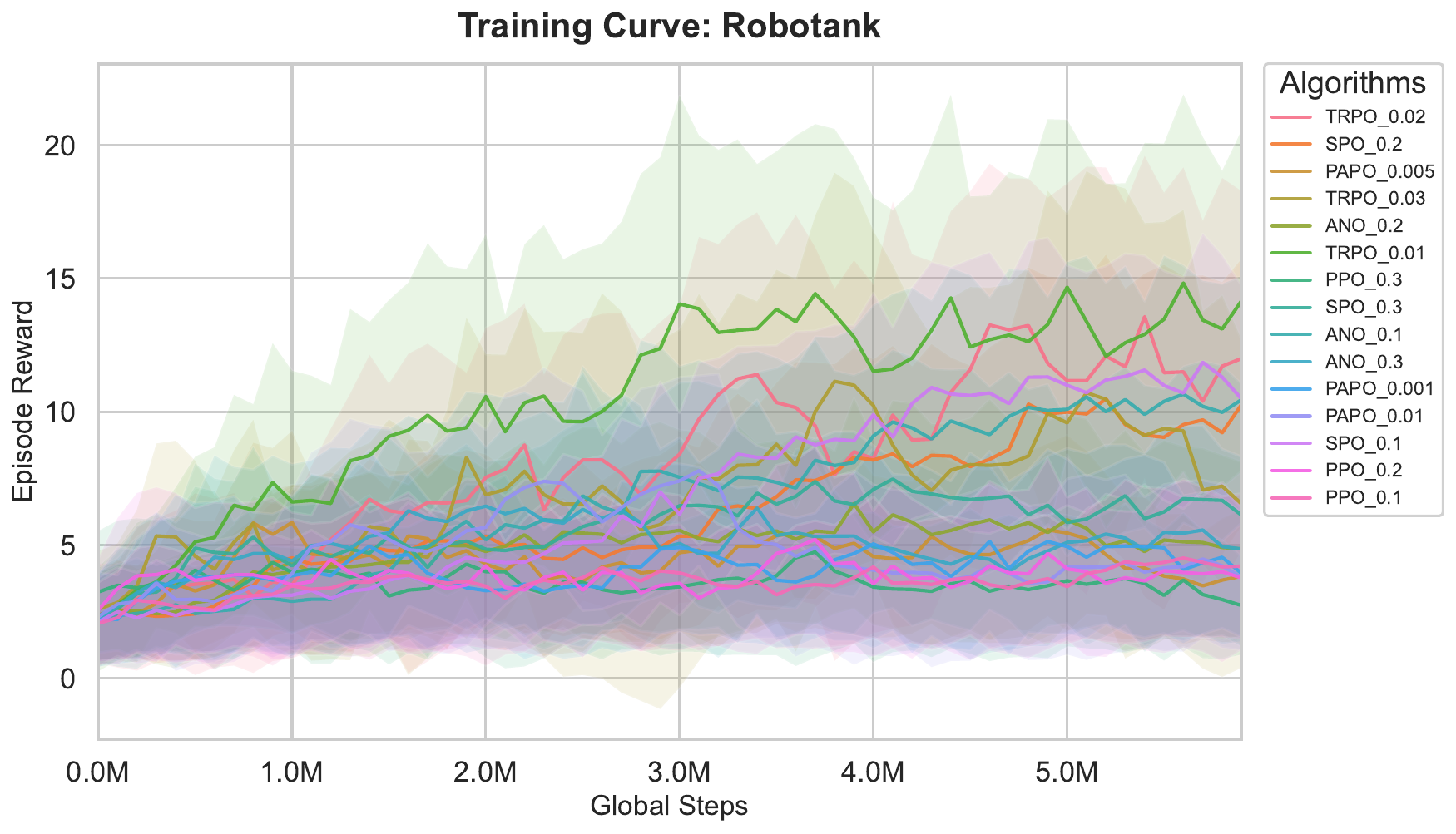}
    \includegraphics[width=0.24\textwidth]{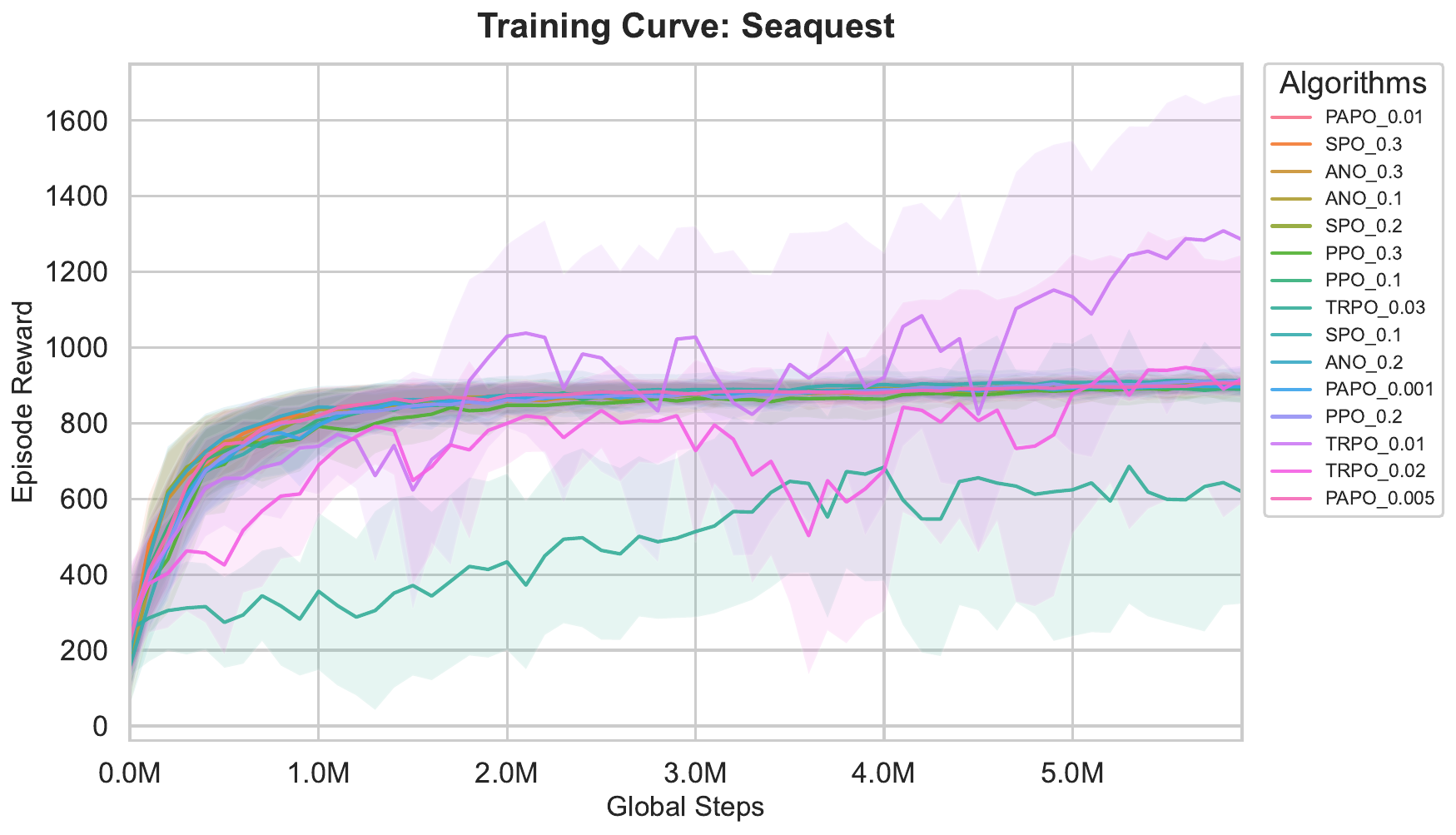}
    \includegraphics[width=0.24\textwidth]{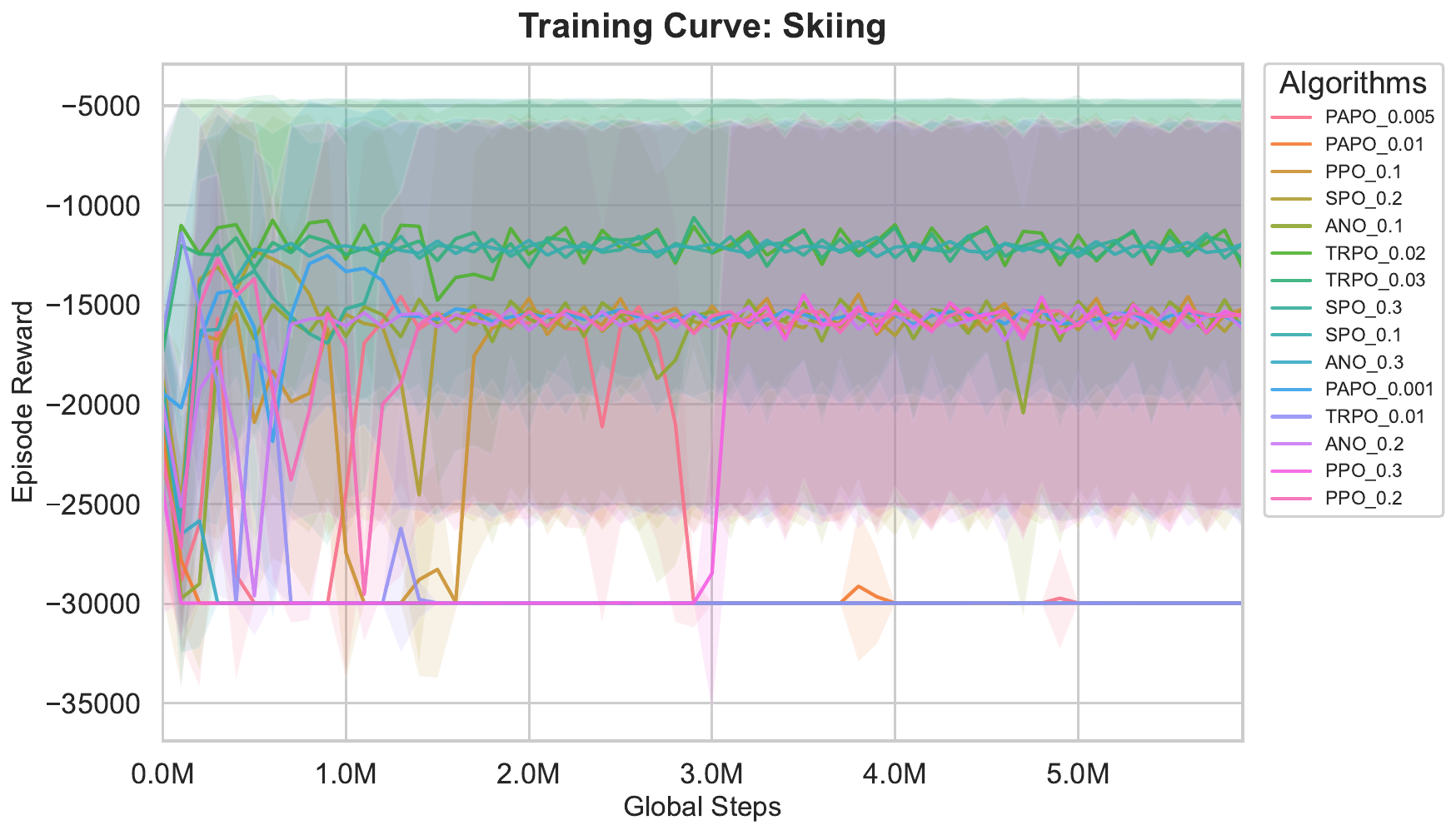}
    \includegraphics[width=0.24\textwidth]{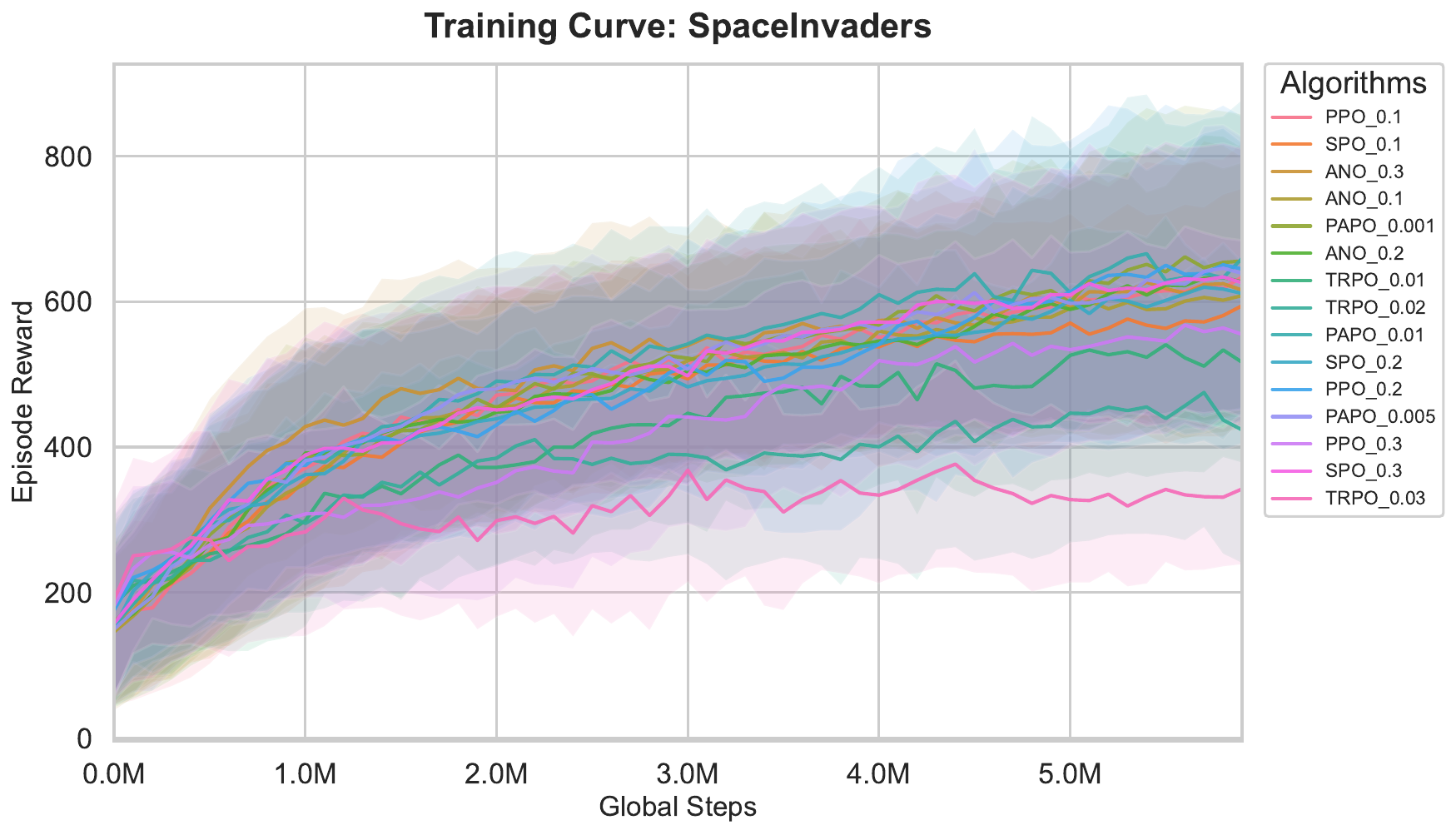}
    \includegraphics[width=0.24\textwidth]{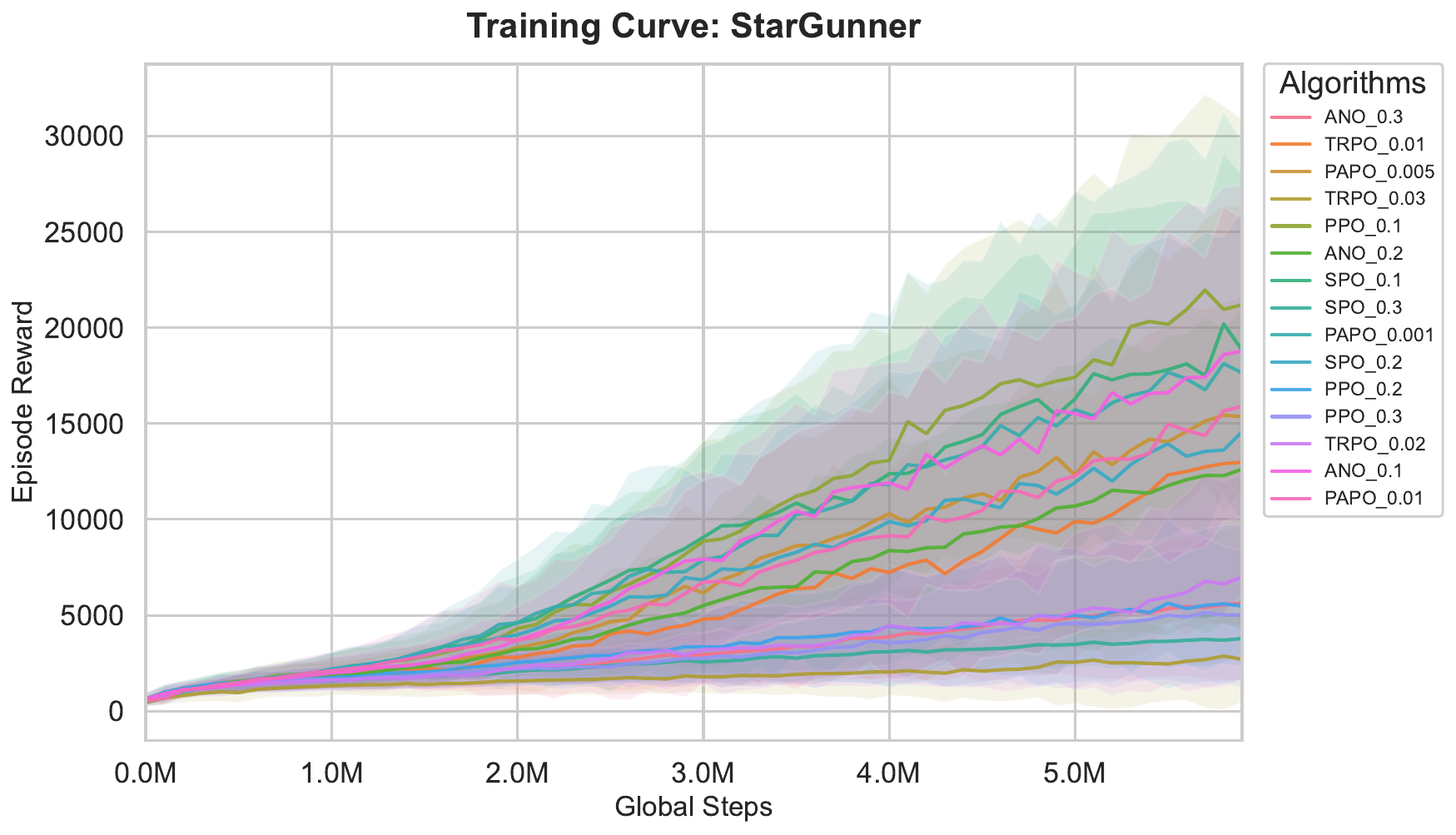}
    \includegraphics[width=0.24\textwidth]{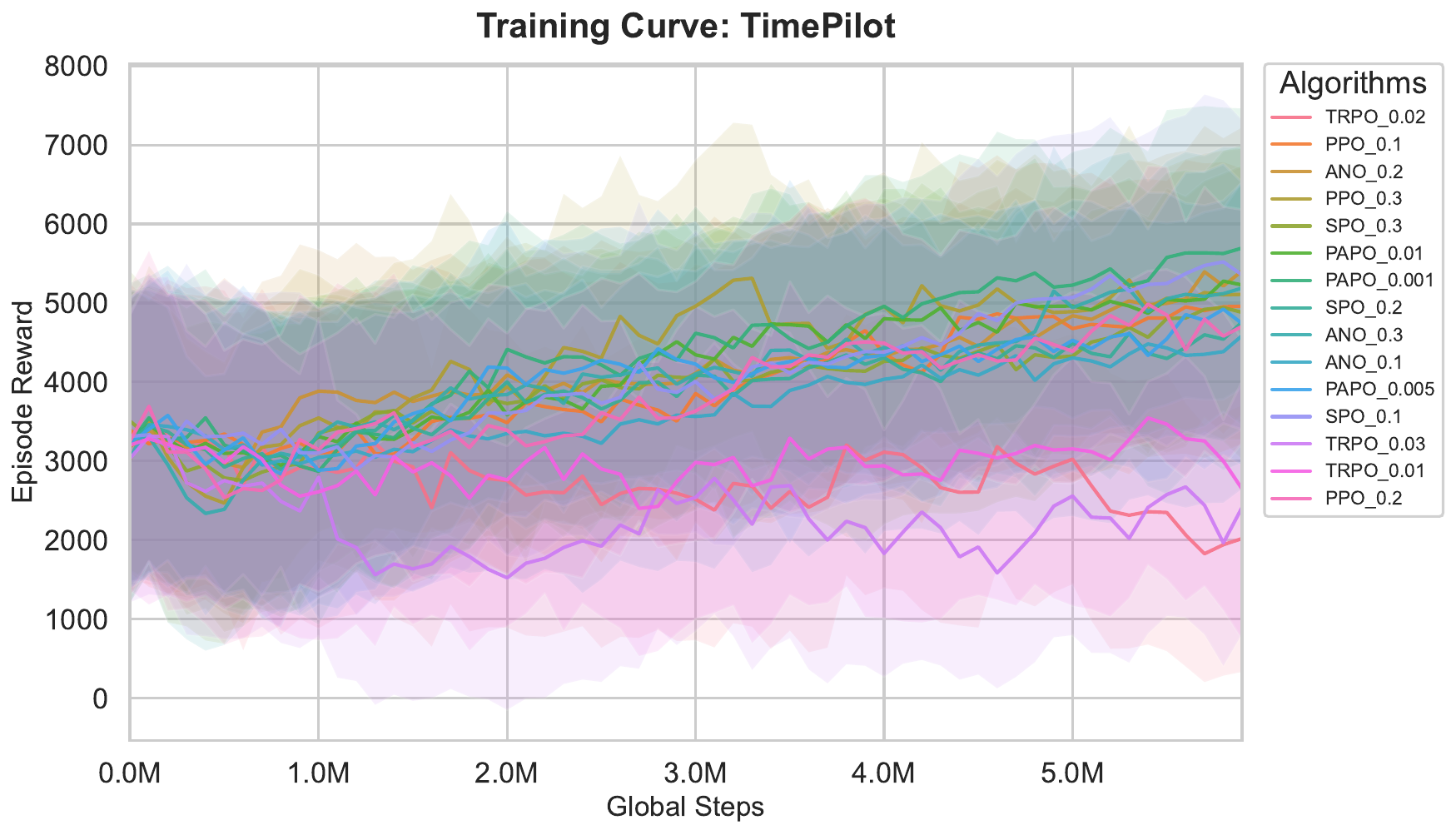}
    \includegraphics[width=0.24\textwidth]{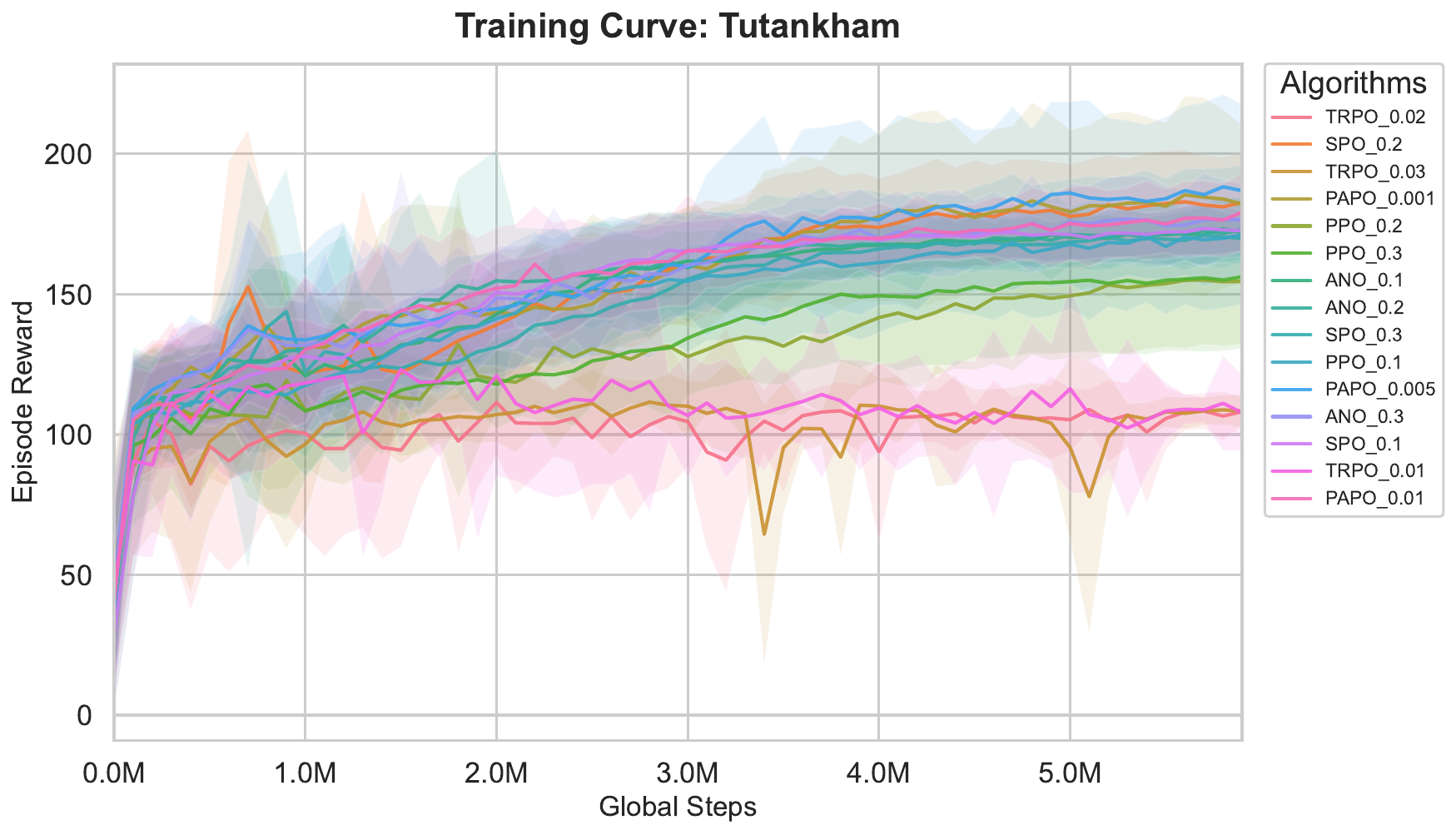}
    \includegraphics[width=0.24\textwidth]{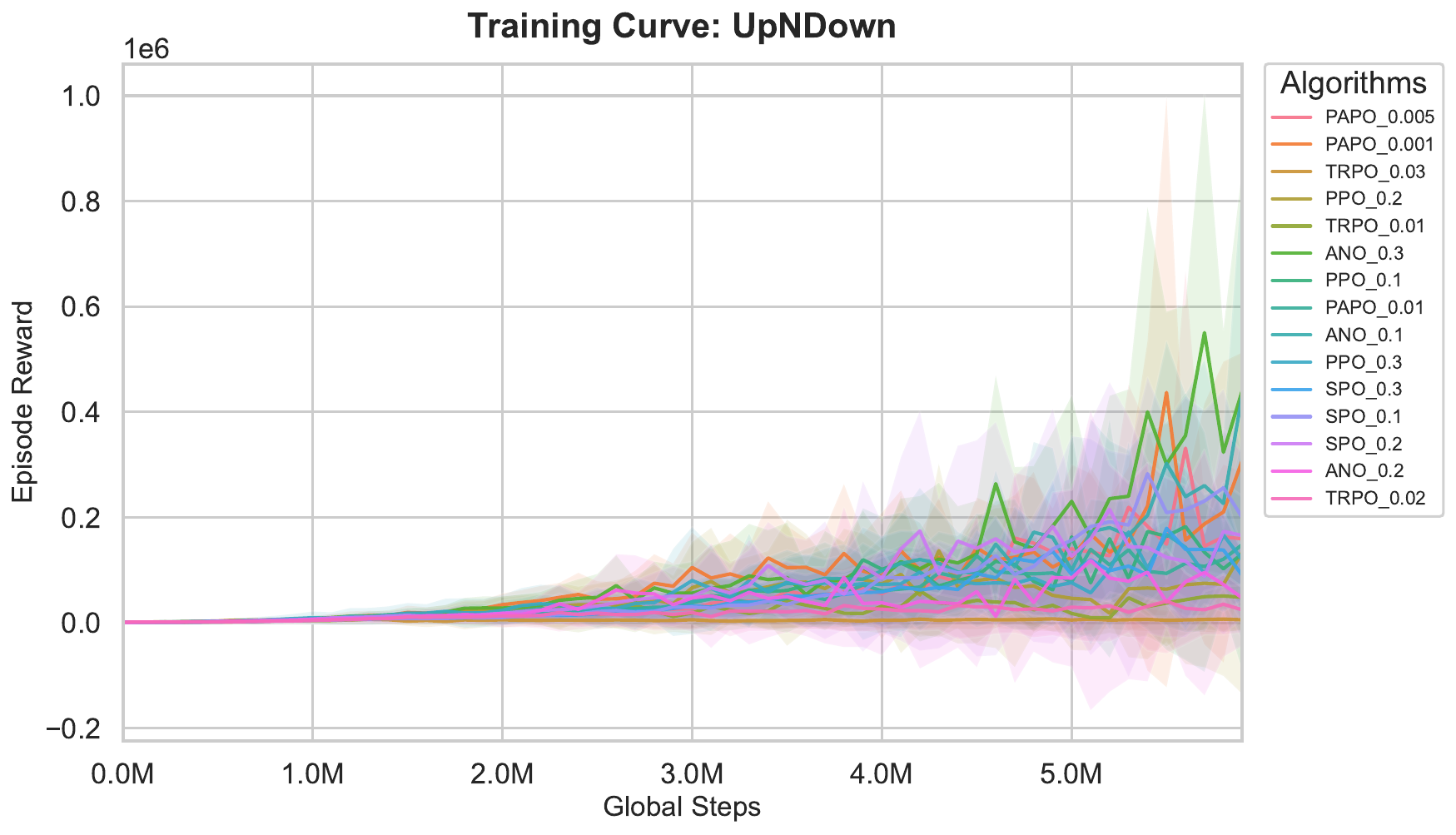}
    \includegraphics[width=0.24\textwidth]{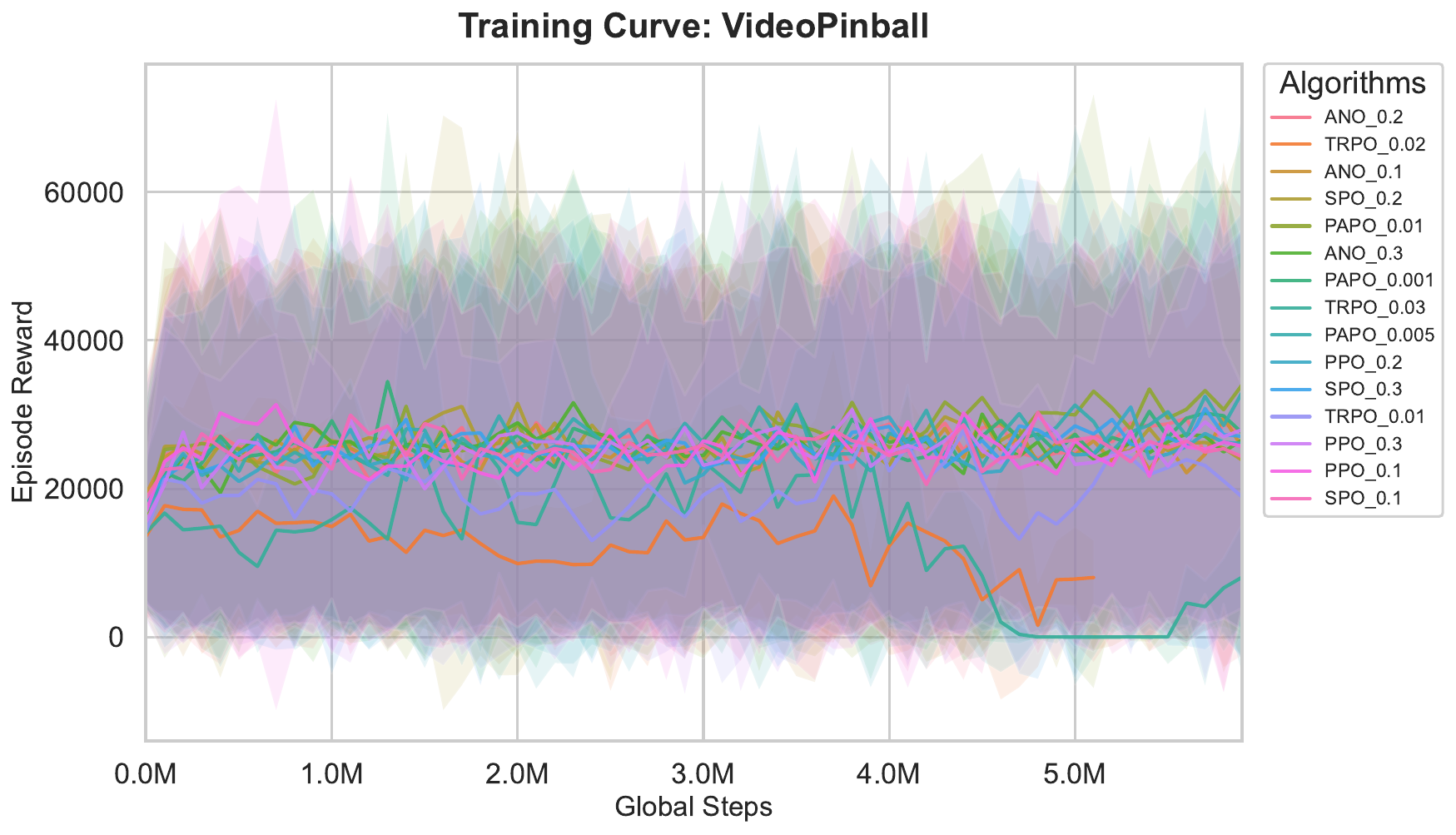}
    \includegraphics[width=0.24\textwidth]{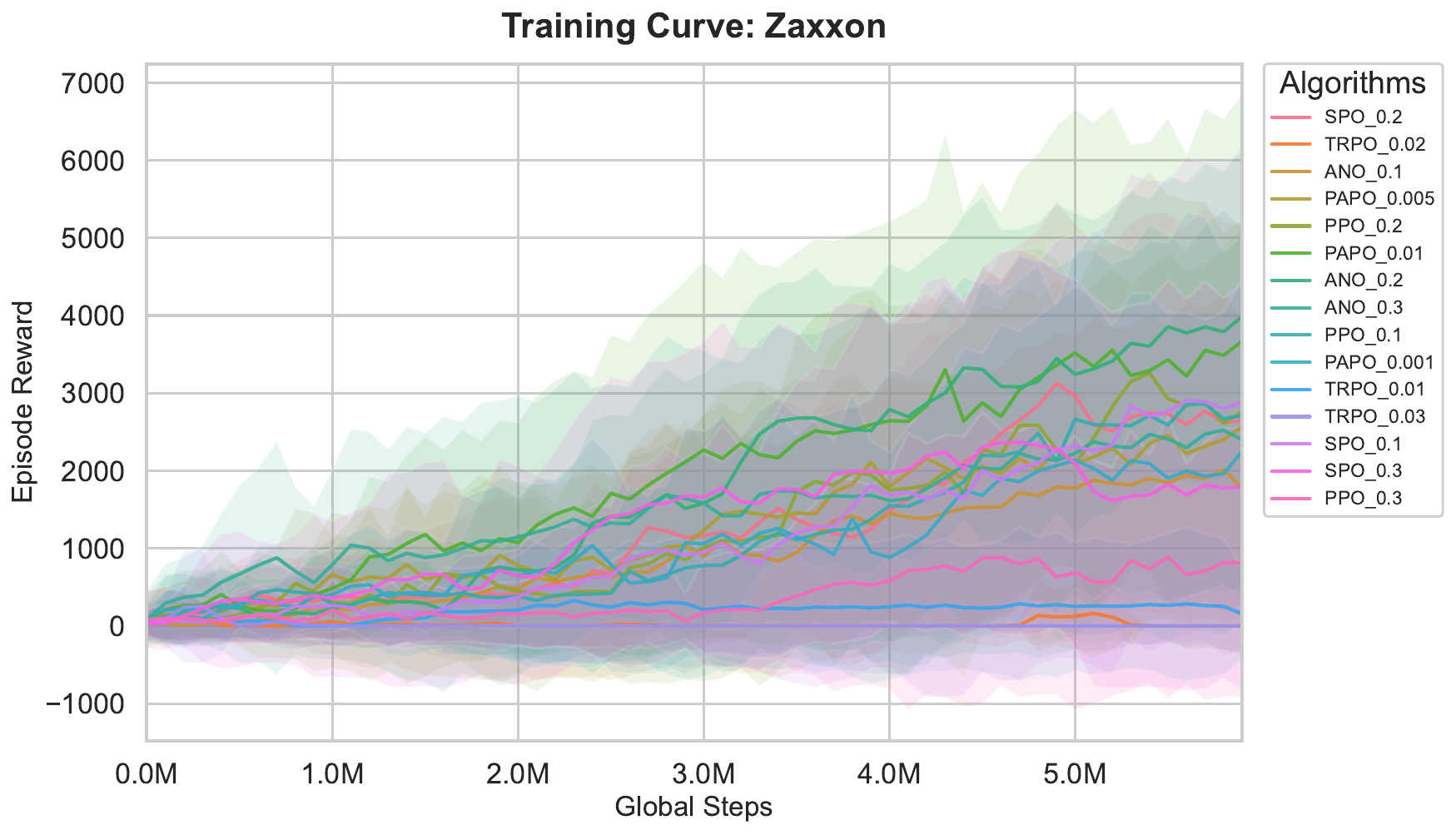}
    \caption{\textbf{Every Performance on Atari ($40$ Environments, $5$ Seeds).} }
    \label{app:fig:every_atari}
\end{figure}

\definecolor{ano}{RGB}{254,136,152}

\section{Algorithm Implementation}
\label{app:sec:pseudocode}

\vspace{-10mm}
\begin{algorithm}[ht] 
	\caption{Training Procedure for Anchored Neighborhood Optimization (ANO)}\label{algo:ano_training}
	\begin{algorithmic}[1]
		\STATE {\bfseries Require:} Initial parameters $\theta_0, \phi_0$; clipping/shaping threshold $\epsilon$; coefficients $\lambda_{val}, \lambda_{ent}$
		\STATE {\bfseries Hyperparameters:} Learning rates $\alpha_\theta, \alpha_\phi$; Discount $\gamma$, GAE parameter $\lambda$
		
		\FOR{iteration $k = 0, 1, 2, \dots$}
		    \STATE {\color{teal} \textit{\# 1. Interaction \& Data Collection}}
    		\STATE Sample trajectories $\mathcal{T} = \{(s_t, a_t, r_t)\}$ by running policy $\pi_{\theta_k}$ in the environment
    		
    		\STATE {\color{teal} \textit{\# 2. Advantage Estimation}}
    		\STATE Compute generalized advantages $\hat{A}_t$ and value targets $\hat{R}_t$ using GAE\,($\gamma, \lambda$) based on $V_{\phi_k}$
    		
    		\STATE {\color{teal} \textit{\# 3. Policy Snapshot}}
    		\STATE $\theta_{\text{old}} \leftarrow \theta_k$
    		
    		\STATE {\color{teal} \textit{\# 4. Optimization Epochs}}
    		\FOR{epoch $m = 1, \dots, M$}
    		    \STATE Resample mini-batches from $\mathcal{T}$
    		    \FOR{each mini-batch $B$}
    		        \STATE Calculate probability ratio $r_t(\theta) = \frac{\pi_\theta(a_t|s_t)}{\pi_{\theta_{\text{old}}}(a_t|s_t)}$
    		        
    		        \STATE {\color{teal} \textit{\# Compute Unified Ratio Objective Loss}}
    		        \STATE Define shaping function $f(r)$ using Eq.~\eqref{eq:f_ano} \COMMENT{Apply ANO kernel}
    		        \STATE Derive symmetric dual $g(r) \leftarrow 2 - f(2-r)$
    		        \STATE $\mathcal{L}_{\text{policy}} \leftarrow - \frac{1}{|B|} \sum_{t \in B} \min \left( g(r_t)\hat{A}_t, \; f(r_t)\hat{A}_t \right)$
    		        
    		        \STATE {\color{teal} \textit{\# Auxiliary Losses}}
    		        \STATE $\mathcal{L}_{\text{ent}} \leftarrow \frac{1}{|B|} \sum_{t \in B} \mathcal{H}(\pi_\theta(\cdot|s_t))$
    		        \STATE $\mathcal{L}_{\text{val}} \leftarrow \frac{1}{2|B|} \sum_{t \in B} (V_\phi(s_t) - \hat{R}_t)^2$
    		        
    		        \STATE {\color{teal} \textit{\# Gradient Step}}
    		        \STATE $\mathcal{L}_{\text{total}} \leftarrow \mathcal{L}_{\text{policy}} + \lambda_{\text{val}} \mathcal{L}_{\text{val}} - \lambda_{\text{ent}} \mathcal{L}_{\text{ent}}$
    		        \STATE Update $\theta, \phi$ w.r.t. $\nabla \mathcal{L}_{\text{total}}$ via optimizer
    		    \ENDFOR
    		\ENDFOR
    		\STATE $\theta_{k+1} \leftarrow \theta, \enspace \phi_{k+1} \leftarrow \phi$
		\ENDFOR
	\end{algorithmic}
\end{algorithm}
\clearpage
\allowdisplaybreaks[4]

\section{Proof of Theorem~\ref{thm:dual_bound}}
\label{app:proof_thm_3_1}
\begin{theorem}
Let $\eta(\tp)=\mathbb{E}_{\tau\sim\tp}[\sum_{t=0}^{\infty}\gamma^tr_t]$ denote the expected return of policy $\tp$, and let $S(\tp)$ be the surrogate objective defined in Eq.~\ref{eq:surrogate_obj}. The following bound holds:
\begin{align}
    \eta(\tp)\ge & S(\tp) - \frac{\beta\alpha}{2}\max_{s,a}\left[\ln\frac{\tp(a|s)}{\pi(a|s)}\right]-\frac{\beta(1-\alpha)}{2}\max_{s,a}\left[\ln\frac{\pi(a|s)}{\tp(a|s)}\right],
\end{align}
where $\alpha\in[0,1]$ is a hyperparameter.
\end{theorem}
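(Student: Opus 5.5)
The plan is to start from the total-variation form of the TRPO improvement bound and then upper-bound the squared total variation in two different ways, one for each direction of the KL divergence, before mixing them with weight $\alpha$. Concretely, I will invoke the result of \citet{schulman2015trust} that underlies Eq.~\ref{eq:kl_lower_bound}:
\begin{equation*}
\eta(\tp)\ \ge\ S(\tp) - \beta\,\bigl(D_{\mathrm{TV}}^{\max}(\pi,\tp)\bigr)^2, \qquad D_{\mathrm{TV}}^{\max}(\pi,\tp)=\max_s D_{\mathrm{TV}}\bigl(\pi(\cdot|s),\tp(\cdot|s)\bigr).
\end{equation*}
Here $\beta$ is the same advantage-dependent constant that appears in Eq.~\ref{eq:kl_lower_bound}, i.e.\ $4\gamma\max_{s,a}|A_\pi(s,a)|/(1-\gamma)^2$ up to the normalisation convention for $D_{\mathrm{TV}}$. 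The convention is fixed so that Pinsker in the form $D_{\mathrm{TV}}^2\le \tfrac12 D_{\mathrm{KL}}$ reproduces the $\tfrac12$ factors in the statement.

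The key observation is that Pinsker's inequality holds in both directions, because total variation is symmetric. For every state $s$,
\begin{equation*}
D_{\mathrm{TV}}(s)^2\le \tfrac12 D_{\mathrm{KL}}\bigl(\tp(\cdot|s)\Vert\pi(\cdot|s)\bigr)
\quad\text{and}\quad
D_{\mathrm{TV}}(s)^2\le \tfrac12 D_{\mathrm{KL}}\bigl(\pi(\cdot|s)\Vert\tp(\cdot|s)\bigr).
\end{equation*}
Taking the convex combination with weights $\alpha$ and $1-\alpha$ gives
\begin{equation*}
D_{\mathrm{TV}}(s)^2\le \tfrac{\alpha}{2}D_{\mathrm{KL}}(\tp\Vert\pi)(s)+\tfrac{1-\alpha}{2}D_{\mathrm{KL}}(\pi\Vert\tp)(s).
\end{equation*}
Next I bound each KL term by a worst-case log-ratio. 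The forward term $D_{\mathrm{KL}}(\tp\Vert\pi)(s)=\mathbb{E}_{a\sim\tp}\bigl[\ln\tfrac{\tp(a|s)}{\pi(a|s)}\bigr]$ is at most $\max_a \ln\tfrac{\tp(a|s)}{\pi(a|s)}$. Similarly, $D_{\mathrm{KL}}(\pi\Vert\tp)(s)\le\max_a\ln\tfrac{\pi(a|s)}{\tp(a|s)}$. Both maxima are nonnegative, since two probability vectors cannot have all log-ratios strictly negative, so the bounds are meaningful.

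Finally I maximise over $s$. Since $\max_s D_{\mathrm{TV}}(s)^2=(D_{\mathrm{TV}}^{\max})^2$, and the maximum of a sum is at most the sum of the maxima, we get
\begin{equation*}
(D_{\mathrm{TV}}^{\max})^2\le\tfrac{\alpha}{2}\max_{s,a}\ln\tfrac{\tp}{\pi}+\tfrac{1-\alpha}{2}\max_{s,a}\ln\tfrac{\pi}{\tp}.
\end{equation*}
Multiplying by $\beta\ge 0$ and substituting into the TV-form bound yields the claimed inequality. For the equality case $\tp=\pi$, both penalty terms vanish because every log-ratio is $0$. Moreover $S(\pi)=\eta(\pi)+\frac{1}{1-\gamma}\mathbb{E}_{a\sim\pi}[A_\pi(s,a)]=\eta(\pi)$, since the advantage has zero mean under $\pi$, so both sides equal $\eta(\pi)$.

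The main obstacle is bookkeeping rather than mathematics. The constant $\beta$ and the normalisation of $D_{\mathrm{TV}}$ (half-$L^1$ or full $L^1$) must be chosen consistently with Eq.~\ref{eq:kl_lower_bound}, so that the factors of $\tfrac12$ come out exactly as stated. I would first fix $D_{\mathrm{TV}}=\tfrac12\|\cdot\|_1$ and take $\beta$ to be the coefficient multiplying $(D_{\mathrm{TV}}^{\max})^2$ in Schulman et al.'s Theorem~1. If that makes Eq.~\ref{eq:kl_lower_bound} carry a factor $\tfrac12$, I would note that Eq.~\ref{eq:kl_lower_bound} then holds a fortiori without it, which is consistent with the statement. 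A secondary point is support: if $\tp(a|s)=0<\pi(a|s)$ for some pair, the reverse maximum is $+\infty$ and the bound holds trivially, so the argument goes through on the extended reals.
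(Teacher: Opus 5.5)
Your proposal is correct and follows essentially the same route as the paper: start from the total-variation form of Schulman et al.'s Theorem~1, split the penalty with weights $\alpha$ and $1-\alpha$, apply Pinsker's inequality $D_{\mathrm{TV}}^2\le\tfrac12 D_{\mathrm{KL}}$ in each direction using the symmetry of total variation, and bound each KL expectation by the worst-case log-ratio. The only cosmetic difference is that you form the convex combination state by state before maximizing over $s$, whereas the paper splits $[D_{\mathrm{TV}}^{\max}]^2$ first and applies Pinsker to each max term; both reach the same bound, and your check of the equality case $\tp=\pi$ is correct.
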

\begin{proof}
    From Theorem 1 of \citet{schulman2015trust}, we have 
\begin{align}
    \eta(\tilde{\pi}) \geq S(\tilde{\pi}) - \beta \left[D_{\mathrm TV}^{\mathrm max}(\pi,\tilde{\pi})\right]^2.
\end{align}
Suppose $a\in[0,1]$, then we have
\begin{align}
    \eta(\tilde{\pi}) &\geq S(\tilde{\pi}) - \alpha\beta \left[D_{\mathrm TV}^{\mathrm max}(\pi,\tilde{\pi})\right]^2- (1-\alpha)\beta \left[D_{\mathrm TV}^{\mathrm max}(\pi,\tilde{\pi})\right]^2\nonumber \\
    &= S(\tilde{\pi}) - \alpha\beta \left[D_{\mathrm TV}^{\mathrm max}(\tp,\pi)\right]^2- (1-\alpha)\beta \left[D_{\mathrm TV}^{\mathrm max}(\pi,\tilde{\pi})\right]^2\nonumber \\
    &\ge S(\tilde{\pi}) - \frac{\alpha\beta}{2} \left[D_{\mathrm KL}^{\mathrm max}(\tp,\pi)\right]- \frac{(1-\alpha)\beta}{2} \left[D_{\mathrm KL}^{\mathrm max}(\pi,\tilde{\pi})\right]\nonumber\\
    &\quad \text{(By Pinsker’s inequality)}\nonumber \\
    &= S(\tilde{\pi}) - \frac{\alpha\beta}{2} \max_s \mathbb{E}_{a\sim \tp(\cdot|s)}\left[\ln \frac{\tp(a|s)}{\pi(a|s)}\right]- \frac{(1-\alpha)\beta}{2} \max_s \mathbb{E}_{a\sim \pi(\cdot|s)}\left[\ln \frac{\pi(a|s)}{\tp(a|s)}\right]\nonumber \\
    &\ge S(\tilde{\pi}) - \frac{\alpha\beta}{2} \max_{s,a} \left[\ln \frac{\tp(a|s)}{\pi(a|s)}\right]- \frac{(1-\alpha)\beta}{2} \max_{s,a} \left[\ln \frac{\pi(a|s)}{\tp(a|s)}\right]
\end{align}
\end{proof}

\section{Proof of Theorem~\ref{thm:base_algo}}
\label{app:proof_thm_3_4}
\begin{theorem}
    For any $f$ and $g$ in Definition~\ref{def:F}, there exists non-negative $\epsilon_l$ and $\epsilon_u$ such that for
        $F(\pi^*;f_\delta,g_\delta)=\max_\tp F(\tp;f_\delta, g_\delta)$ where $f_\delta=f-\delta_{[1-\epsilon_l,1+\epsilon_u]}$ and $g_\delta=g+\delta_{[1-\epsilon_l,1+\epsilon_u]}$,
        we have $\eta(\pi^*)\ge\eta(\pi)$. The equality holds iff $F(\pi^*;f_\delta,g_\delta)=F(\pi;f_\delta,g_\delta)$. 
\end{theorem}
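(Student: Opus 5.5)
The plan is to combine the Dual-Ratio Lower Bound (Theorem~\ref{thm:dual_bound}) with two observations: the indicator-augmented objective is at least the linear surrogate, and it is nonnegative at its maximizer.

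\textbf{Step 1: compare $F$ with the linear surrogate.} For any $\tp$ with $F(\tp;f_\delta,g_\delta)>-\infty$, every ratio $r(s,a)$ in the support of $\pi$ must lie in $I=[1-\epsilon_l,1+\epsilon_u]$. Otherwise $f_\delta(r)=-\infty$ and $g_\delta(r)=+\infty$, so the min equals $-\infty$ whatever the sign of $A_\pi$.

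On $I$, the geometric enclosure $g(r)\ge r\ge f(r)$ gives, pointwise,
\[
\min\bigl(g(r)A_\pi,\,f(r)A_\pi\bigr)\le rA_\pi .
\]
Indeed, if $A_\pi\ge0$ the min is $f(r)A_\pi\le rA_\pi$, and if $A_\pi<0$ the min is $g(r)A_\pi\le rA_\pi$. Taking expectations yields
\[
F(\tp;f_\delta,g_\delta)\le \mathbb{E}\bigl[r A_\pi\bigr]=(1-\gamma)\bigl(S(\tp)-\eta(\pi)\bigr).
\]

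\textbf{Step 2: use optimality of $\pi^*$.} By Corollary~\ref{cor:F0}, $F(\pi;f_\delta,g_\delta)=0$ (note $1\in I$). Hence
\[
F(\pi^*;f_\delta,g_\delta)\ge 0,
\]
which gives $S(\pi^*)\ge\eta(\pi)+F(\pi^*)/(1-\gamma)$. In particular, $\pi^*$ is feasible, so all its ratios lie in $I$.

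\textbf{Step 3: control the penalty via the ratio bounds.} Since $r\in I$ whenever $\pi(a|s)>0$, we have
\[
\ln r\le\ln(1+\epsilon_u),\qquad \ln(1/r)\le -\ln(1-\epsilon_l).
\]
Theorem~\ref{thm:dual_bound} then gives
\[
\eta(\pi^*)\ge \eta(\pi)+\frac{F(\pi^*)}{1-\gamma}-P(\epsilon_l,\epsilon_u),
\qquad
P=\frac{\beta\alpha}{2}\ln(1+\epsilon_u)-\frac{\beta(1-\alpha)}{2}\ln(1-\epsilon_l).
\]

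\textbf{Step 4: choose $\epsilon_l,\epsilon_u$.} Monotonic improvement needs $F(\pi^*)\ge(1-\gamma)P$.

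The trivial choice $\epsilon_l=\epsilon_u=0$ always works: then $I=\{1\}$, forcing $\pi^*=\pi$ on the support, so $P=0$ and $F(\pi^*)=0$, giving $\eta(\pi^*)=\eta(\pi)$. This settles the existence claim and the equality case, since equality $F(\pi^*)=F(\pi)=0$ then holds as well.

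For the equality characterization more generally, note that if $F(\pi^*)=F(\pi)$ is strict-free, the chain of inequalities collapses. Conversely, if $\eta(\pi^*)=\eta(\pi)$, then $F(\pi^*)\le(1-\gamma)P$.

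\textbf{Main obstacle.} The main difficulty is producing nontrivial $\epsilon$'s and the ``iff'' claim. For nontrivial bounds, I would pick the interval as large as possible subject to $(1-\gamma)P(\epsilon_l,\epsilon_u)\le F(\pi^*_{\epsilon})$. This uses the fact that $P\to0$ continuously as $\epsilon\to0$, while $F(\pi^*_\epsilon)$ is nondecreasing in the interval (feasible sets are nested).

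The ``iff'' direction is genuinely delicate, because the Dual-Ratio bound is not tight. I expect to have to restrict it to the degenerate choice, or to state it as: equality in $F$ implies $\pi^*=\pi$ on the relevant support. In that case $\eta(\pi^*)=\eta(\pi)$ by the performance difference lemma.
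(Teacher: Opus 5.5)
Your Steps 1--3 are sound. Step 1 is the paper's own inequality $\min(g(r)A_\pi,f(r)A_\pi)\le rA_\pi$, fed into Theorem~\ref{thm:dual_bound}. Because the statement is purely existential, the degenerate witness $\epsilon_l=\epsilon_u=0$ does prove it as written, ``iff'' included, since both sides of that equivalence then always hold. You should justify $\eta(\pi^*)=\eta(\pi)$ directly: $\pi^*$ coincides with $\pi$ at every state $\pi$ visits, so the trajectory laws agree. Step 3 alone only gives $\ge$.

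This is a genuinely different route from the paper's, and it buys nothing: with $I=\{1\}$ the theorem is vacuous. The paper instead extracts the interval from the penalized problem. By Theorem~\ref{thm:dual_bound} and the enclosure, $\eta(\tilde\pi)\ge M_F(\tilde\pi):=\eta(\pi)+\frac{1}{1-\gamma}F(\tilde\pi;f,g)-\mathrm{pen}(\tilde\pi)$, where $\mathrm{pen}$ is the Dual-Ratio penalty and $M_F(\pi)=\eta(\pi)$. The penalty--constraint equivalence then sets $\epsilon_u=\max r^\dagger-1$ and $\epsilon_l=1-\min r^\dagger$, with $r^\dagger=\tilde\pi^\dagger/\pi$ for a maximizer $\tilde\pi^\dagger$ of $M_F$.

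This single choice is the idea missing from your Step 4, and it delivers exactly your criterion:
$(1-\gamma)P(\epsilon_l,\epsilon_u)=(1-\gamma)\mathrm{pen}(\tilde\pi^\dagger)\le F(\tilde\pi^\dagger;f,g)\le F(\pi^*;f_\delta,g_\delta)$.
The first inequality uses $M_F(\tilde\pi^\dagger)\ge M_F(\pi)$, and the second uses that $\tilde\pi^\dagger$ is feasible. Your Step 3 then gives a non-vacuous improvement, with an interval that is nontrivial whenever $\tilde\pi^\dagger\ne\pi$.

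Your ``largest interval'' sketch does not get there. Both $(1-\gamma)P$ and $F(\pi^*_\epsilon)$ vanish at $\epsilon=0$, so continuity and monotonicity produce no nonzero admissible $\epsilon$. Also, ``as large as possible'' is undefined for a pair $(\epsilon_l,\epsilon_u)$.

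On the equality clause the comparison is reversed. The paper's proof never addresses it, and for non-degenerate intervals it can fail, so your fallback to the degenerate witness is not merely convenient. Take $f(r)=\min(r,1)$ and $g(r)=\max(r,1)$, which satisfy Definition~\ref{def:F}. Each integrand of $F$ is at most $A_\pi(s,a)$, so $F\le\mathbb{E}[A_\pi]=0=F(\pi)$ on every interval. In a single-state MDP with $\pi=(\tfrac12,\tfrac12)$ and rewards $(1,0)$, the policy $\pi^*=(\tfrac{1+\epsilon}{2},\tfrac{1-\epsilon}{2})$ maximizes $F$ over $I=[1-\epsilon,1+\epsilon]$. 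It has $F(\pi^*)=F(\pi)$, yet $\eta(\pi^*)>\eta(\pi)$.

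The same example refutes your alternative formulation (``equality in $F$ implies $\pi^*=\pi$ on the support''). The sentence about the chain ``collapsing'' is not an argument and should go.

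Finally, Step 3 tacitly assumes $\rho_\pi(s)\pi(a|s)>0$ for all $(s,a)$, as does the paper's rewriting of the max-ratio constraints as $\mathbb{E}[\delta_I(r)]$. Without it, when $\epsilon_l>0$ a maximizer can shift mass onto actions with $\pi(a|s)=0$. $F$ cannot see that mass, but it makes the Dual-Ratio penalty infinite.
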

\begin{proof}
    From Theorem~\ref{thm:dual_bound}, we have 
    \begin{align}
        \eta(\tilde{\pi}) &\ge S(\tilde{\pi}) - \frac{\alpha\beta}{2} \max_{s,a} \left[\ln \frac{\tp(a|s)}{\pi(a|s)}\right]- \frac{(1-\alpha)\beta}{2} \max_{s,a} \left[\ln \frac{\pi(a|s)}{\tp(a|s)}\right]\nonumber\\
        &= \eta(\pi) + \frac{1}{1-\gamma} 
    \mathbb{E}_{\substack{s\sim{\color{MyColor1}\rho_{\pi}(\cdot)} \\ a\sim{\color{MyColor1}\pi(\cdot|s)}}}
    \Big[ \min \big( r A_{\pi}(s,a), \, r A_{\pi}(s,a) \big) \Big]- \frac{\alpha\beta}{2} \max_{s,a} \left[\ln \frac{\tp(a|s)}{\pi(a|s)}\right]- \frac{(1-\alpha)\beta}{2} \max_{s,a} \left[\ln \frac{\pi(a|s)}{\tp(a|s)}\right]\nonumber\\
        &= \eta(\pi) + \frac{1}{1-\gamma} 
    \mathbb{E}_{\substack{s\sim{\color{MyColor1}\rho_{\pi}(\cdot)} \\ a\sim{\color{MyColor1}\pi(\cdot|s)}}}
    \Big[  \1_{A_\pi(s,a)<0} r A_{\pi}(s,a)+ \1_{A_\pi(s,a)>0} r A_{\pi}(s,a) \Big]\nonumber\\
    &\quad - \frac{\alpha\beta}{2} \max_{s,a} \left[\ln \frac{\tp(a|s)}{\pi(a|s)}\right]- \frac{(1-\alpha)\beta}{2} \max_{s,a} \left[\ln \frac{\pi(a|s)}{\tp(a|s)}\right]\nonumber\\
        &\ge \eta(\pi) + \frac{1}{1-\gamma} 
    \mathbb{E}_{\substack{s\sim{\color{MyColor1}\rho_{\pi}(\cdot)} \\ a\sim{\color{MyColor1}\pi(\cdot|s)}}}
    \Big[  \1_{A_\pi(s,a)<0} g(r) A_{\pi}(s,a)+ \1_{A_\pi(s,a)>0} f(r) A_{\pi}(s,a) \Big]\nonumber\\
    &\quad - \frac{\alpha\beta}{2} \max_{s,a} \left[\ln \frac{\tp(a|s)}{\pi(a|s)}\right]- \frac{(1-\alpha)\beta}{2} \max_{s,a} \left[\ln \frac{\pi(a|s)}{\tp(a|s)}\right]\nonumber\\
    &\quad \Big(\text{Because $g(r)\ge r\ge f(r)$}\Big)\nonumber\\
        &\ge \eta(\pi) + \frac{1}{1-\gamma} 
    \mathbb{E}_{\substack{s\sim{\color{MyColor1}\rho_{\pi}(\cdot)} \\ a\sim{\color{MyColor1}\pi(\cdot|s)}}}
    \Big[ \min \big( g(r) A_{\pi}(s,a), \, f(r) A_{\pi}(s,a) \big) \Big]\nonumber\\
    &\quad - \frac{\alpha\beta}{2} \max_{s,a} \left[\ln \frac{\tp(a|s)}{\pi(a|s)}\right]- \frac{(1-\alpha)\beta}{2} \max_{s,a} \left[\ln \frac{\pi(a|s)}{\tp(a|s)}\right]\nonumber\\
        &\ge \underbrace{\eta(\pi) + \frac{1}{1-\gamma} 
    F(\tp;f,g) - \frac{\alpha\beta}{2} \max_{s,a} \left[\ln \frac{\tp(a|s)}{\pi(a|s)}\right]- \frac{(1-\alpha)\beta}{2} \max_{s,a} \left[\ln \frac{\pi(a|s)}{\tp(a|s)}\right]}_{M_F(\tp)}.
    \end{align}
According to Corollary~\ref{cor:F0}, we have that $M_F(\pi)=\eta(\pi)$.

And according to a standard result in optimization, the unconstrained problem $\max_{\tilde{\pi}} M_F(\tilde{\pi})$ is equivalent to the constrained problem:
\begin{align}
    &\max_\tp F(\tp;f,g)\\
    &\text{s.t. }
    \begin{cases}
        \max_{s,a} \left[\ln \frac{\tp(a|s)}{\pi(a|s)}\right]\le \epsilon_1,\\
        \max_{s,a} \left[\ln \frac{\pi(a|s)}{\tp(a|s)}\right]\le \epsilon_{2}.
    \end{cases}
\end{align}
Obviously, both $\epsilon_l$ and $\epsilon_{u}$ are non-negative. And when $\alpha=0$, $\epsilon_1=\infty$ while $\alpha=1$, $\epsilon_{2}=\infty$. 

And the optimization can be rewritten as follows:

\begin{align}
    &\max_\tp F(\tp;f,g)\\
    &\text{s.t. }
    \begin{cases}
        \max_{s,a}\frac{\tp(a|s)}{\pi(a|s)}\le 1+\epsilon_u,\\
        \min_{s,a}\frac{\tp(a|s)}{\pi(a|s)}\ge 1-\epsilon_{l},
    \end{cases}
\end{align}
where $\epsilon_u=e^{\epsilon_{1}}-1$ and $\epsilon_l=1-e^{-\epsilon_{2}}$.

Let $\pi^*$ be the optimal solution to this optimization problem. We have
\begin{align}
\pi^*&=
\argmaxl_{\tp} F(\tp;f,g)-\delta_{[1-\epsilon_l,1+\epsilon_u]}(\max_r r)-\delta_{[1-\epsilon_l,1+\epsilon_u]}(\min_r r)\nonumber\\
&=\argmaxl_{\tp} F(\tp;f,g)-\sum_s\sum_a\rho_\pi(s)p_\pi(a|s)\big(\delta_{[1-\epsilon_l,1+\epsilon_u]}(\max_r r)+\delta_{[1-\epsilon_l,1+\epsilon_u]}(\min_r r)\big)\nonumber\\
&\quad \Big(\text{Note that $\beta \delta_C(\cdot) \equiv \delta_C(\cdot)$ for any $\beta > 0$}\Big)\nonumber\\
&=\argmaxl_{\tp} \Bigg[\sum_s\sum_a\rho_\pi(s)p_\pi(a|s)\min \big( g(r) A_{\pi}(s,a), \, f(r) A_{\pi}(s,a) \big) -\sum_s\sum_a\rho_\pi(s)p_\pi(a|s)\big(\delta_{[1-\epsilon_l,1+\epsilon_u]}(r)\big)\Bigg]\nonumber\\
&=\argmaxl_{\tp} \Bigg[\mathbb{E}_{\substack{s\sim{\color{MyColor1}\rho_{\pi}(\cdot)} \\ a\sim{\color{MyColor1}\pi(\cdot|s)}}}\min \big( g(r) A_{\pi}(s,a), \, f(r) A_{\pi}(s,a) \big) -\mathbb{E}_{\substack{s\sim{\color{MyColor1}\rho_{\pi}(\cdot)} \\ a\sim{\color{MyColor1}\pi(\cdot|s)}}}\big(\delta_{[1-\epsilon_l,1+\epsilon_u]}(r)\big)\Bigg]\nonumber\\
&=\argmaxl_{\tp} \Bigg[\mathbb{E}_{\substack{s\sim{\color{MyColor1}\rho_{\pi}(\cdot)} \\ a\sim{\color{MyColor1}\pi(\cdot|s)}}}\min \big( g(r) A_{\pi}(s,a), \, f(r) A_{\pi}(s,a) \big) \nonumber\\
&\quad -\mathbb{E}_{\substack{s\sim{\color{MyColor1}\rho_{\pi}(\cdot)} \\ a\sim{\color{MyColor1}\pi(\cdot|s)}}}A_\pi(a|s)\1_{A_\pi(a|s)>0}\big(\delta_{[1-\epsilon_l,1+\epsilon_u]}(r)\big) +\mathbb{E}_{\substack{s\sim{\color{MyColor1}\rho_{\pi}(\cdot)} \\ a\sim{\color{MyColor1}\pi(\cdot|s)}}}A_\pi(a|s)\1_{A_\pi(a|s)<0}\big(\delta_{[1-\epsilon_l,1+\epsilon_u]}(r)\big)\Bigg]\nonumber\\
&=\argmaxl_{\tp} \Bigg[\mathbb{E}_{\substack{s\sim{\color{MyColor1}\rho_{\pi}(\cdot)} \\ a\sim{\color{MyColor1}\pi(\cdot|s)}}}\min \Big( \left(g(r)+\delta_{[1-\epsilon_l,1+\epsilon_u]}(r)\right) A_{\pi}(s,a), \, \left(f(r)-\delta_{[1-\epsilon_l,1+\epsilon_u]}(r)\right) A_{\pi}(s,a) \Big) \Bigg]\nonumber\\
&=\argmaxl_{\tp} \Bigg[\mathbb{E}_{\substack{s\sim{\color{MyColor1}\rho_{\pi}(\cdot)} \\ a\sim{\color{MyColor1}\pi(\cdot|s)}}}\min \Big( g_\delta(r) A_{\pi}(s,a), \, f_\delta(r) A_{\pi}(s,a) \Big) \Bigg]\nonumber\\
&=\argmaxl_{\tp} F(\tp;f_\delta, g_\delta).
\end{align}
Degenerate case: It is easy to show that when $f(r)=1-\delta_{\{1\}}$ and $g(r)=1+\delta_{\{1\}}$, $\epsilon_u=0$ and $\epsilon_l=0$.
\end{proof}

\section{Proof of Remark \ref{rmk:alpha_adjust}}
\label{app:proof_alpha}

We verify the calculation for the specific MDP instance: $\pi=[0.2, 0.7, 0.1]$, $A=[10, -2, -6]$, and $\beta=8$. 
We seek to find $\alpha$ such that the trust region bounds are symmetric ($\epsilon_u = \epsilon_l = \epsilon = 0.6$).
\begin{proof}
The optimization problem with the penalty term is:
\begin{equation}
    \max_{\tp} \sum_{a} \tp(a|s) A(a|s) - \frac{\beta\alpha}{2}\max_a \ln\frac{\tp(a)}{\pi(a)} - \frac{\beta(1-\alpha)}{2}\max_a \ln\frac{\pi(a)}{\tp(a)} - \lambda(\sum \tp - 1)
\end{equation}

Assuming the optimal policy $\tp$ hits the upper bound at $a_1$ and the lower bound at $a_3$, while $a_2$ remains in the interior (unconstrained relative to the trust region bounds but satisfying the probability simplex).

The first-order optimality condition (setting gradients to zero) gives:
\begin{align}
    \frac{\partial \mathcal{L}}{\partial \tp_1} &= 10 - \frac{4\alpha}{\tp_1} - \lambda = 0 \\
    \frac{\partial \mathcal{L}}{\partial \tp_2} &= -2 - 0 - \lambda = 0 \implies \mathbf{\lambda = -2} \\
    \frac{\partial \mathcal{L}}{\partial \tp_3} &= -6 + \frac{4(1-\alpha)}{\tp_3} - \lambda = 0
\end{align}

Substituting $\lambda = -2$ into the equations for $a_1$ and $a_3$:
1. For $a_1$ (Upper Bound: $\tp_1 = \pi_1(1+\epsilon)$):
   $$ 10 - \frac{4\alpha}{0.2(1+0.6)} + 2 = 0 \implies 12 = \frac{4\alpha}{0.32} \implies \alpha = 0.96 $$

2. For $a_3$ (Lower Bound: $\tp_3 = \pi_3(1-\epsilon)$):
   $$ -6 + \frac{4(1-\alpha)}{0.1(1-0.6)} + 2 = 0 \implies -4 + \frac{4(1-\alpha)}{0.04} = 0 \implies 1-\alpha = 0.04 \implies \alpha = 0.96 $$

Both conditions consistently yield $\alpha = 0.96$. Thus, the derivation is exact.
\end{proof}

\section{Proofs and Derivations for ANO}
\label{app:proof_ano}

Recall the definition of the base kernel:
\begin{equation}
    \phi(z) \coloneqq \ln(1+2^{-2z}) + \frac{4}{1+2^{-z}}.
\end{equation}
The shaping function $f_{\text{ANO}}(r)$ is defined as:
\begin{equation}
    f_{\text{ANO}}(r) = \frac{45\epsilon}{32\ln 2} \left[ \phi(-1)-\phi\left( \frac{r - 1 - \epsilon}{\epsilon} \right) \right] + 1.
\end{equation}
Here, the constant term involving $\phi(-1)$ ensures the anchoring condition $f_{\text{ANO}}(1)=1$.

\subsection{Proof of Unique Maximum Point} 
\label{app:proof_max}

\begin{proof}
Let $z(r)=\frac{r - 1 - \epsilon}{\epsilon}$. By applying the chain rule, we compute the derivative of $f_{\text{ANO}}(r)$:
\begin{align}
    f_{\text{ANO}}'(r) &= -\frac{45\epsilon}{32\ln 2} \cdot \phi'(z) \cdot \frac{dz}{dr} \nonumber\\
    &= -\frac{45\epsilon}{32\ln 2} \left[ \frac{-2\ln 2 \cdot 2^{-2z}}{1+2^{-2z}} + \frac{-4\ln 2 \cdot 2^{-z} \cdot (-1)}{(1+2^{-z})^2} \right] \cdot \frac{1}{\epsilon} \nonumber\\
    &= \frac{45}{32} \left[ \frac{2\cdot 2^{-2z}}{1+2^{-2z}} - \frac{4\cdot 2^{-z}}{(1+2^{-z})^2} \right] \label{app:eq:f_prime_simple} \\
    &= \frac{45}{16} \left[ \frac{2^{-2z}}{1+2^{-2z}} - \frac{2\cdot 2^{-z}}{(1+2^{-z})^2} \right] \nonumber\\
    &= \frac{45}{16} \left[ \frac{2^{-2z}(1+2^{-z})^2 - 2\cdot 2^{-z}(1+2^{-2z})}{(1+2^{-2z})(1+2^{-z})^2} \right] \nonumber\\
    &= \frac{45}{16} \left[ \frac{2^{-2z}(1+2\cdot 2^{-z}+2^{-2z}) - (2\cdot 2^{-z} + 2\cdot 2^{-3z})}{(1+2^{-2z})(1+2^{-z})^2} \right] \nonumber\\
    &= \frac{45}{16} \left[ \frac{2^{-2z} + 2\cdot 2^{-3z} + 2^{-4z} - 2\cdot 2^{-z} - 2\cdot 2^{-3z}}{(1+2^{-2z})(1+2^{-z})^2} \right] \nonumber\\
    &= \frac{45}{16} \left[ \frac{2^{-4z} + 2^{-2z} - 2\cdot 2^{-z}}{(1+2^{-2z})(1+2^{-z})^2} \right] \nonumber\\
    &= \frac{45}{16} \underbrace{\left[ \frac{2^{-3z}+2^{-2z}+2^{-z+1}}{(1+2^{-2z})(1+2^{-z})^2} \right]}_{>0 \text{ for all } z \in \mathbb{R}} (2^{-z}-1).
    \label{app:eq:f_prime_factored}
\end{align}
Since the fractional term in brackets consists solely of positive exponential terms, it is strictly positive. Therefore, the sign of the derivative is determined uniquely by the term $(2^{-z}-1)$.
\begin{itemize}
    \item $f_{\text{ANO}}'(r) = 0 \iff 2^{-z} = 1 \iff z=0 \iff r = 1+\epsilon$.
    \item $f_{\text{ANO}}'(r) > 0 \iff 2^{-z} > 1 \iff z < 0 \iff r < 1+\epsilon$.
    \item $f_{\text{ANO}}'(r) < 0 \iff 2^{-z} < 1 \iff z > 0 \iff r > 1+\epsilon$.
\end{itemize}
Thus, $r=1+\epsilon$ is the unique global maximum point.
\end{proof}

\subsection{Proof of the Asymptotic Behavior}
\label{app:proof_asymptotics}

Recalling the definition of ANO objective from Eq.~\ref{eq:f_ano}:
\begin{equation}
    f_\ANO(r) = C \left[ \phi(-1) - \phi(z) \right] + 1, \quad \text{where } C = \frac{45\epsilon}{32\ln 2}, \; z = \frac{r - 1 - \epsilon}{\epsilon}.
\end{equation}
The base kernel is defined as $\phi(z) = \ln(1 + 2^{-2z}) + \frac{4}{1 + 2^{-z}}$. We analyze the asymptotic behavior in two limits.

\paragraph{Right Tail ($r \to +\infty$).}
As $r \to +\infty$, we have $z \to +\infty$. In this limit, terms $2^{-z}$ and $2^{-2z}$ vanish:
\begin{align}
    \lim_{z \to +\infty} \phi(z) &= \ln(1 + 0) + \frac{4}{1 + 0} = 4.
\end{align}
Substituting this back into $f_\ANO(r)$, we obtain the \textbf{Constant Saturation Asymptote}:
\begin{align}
    \lim_{r\to +\infty} f_\ANO(r) &= C \left[ \phi(-1) - 4 \right] + 1.
    \label{app:eq:f_+infty}
\end{align}
This confirms that ANO is bounded for outliers in the maximization direction.

\paragraph{Left Tail ($r \to -\infty$).}
As $r \to -\infty$, we have $z \to -\infty$. Let $z = -u$ where $u \to +\infty$. The kernel behaves as:
\begin{align}
    \phi(z) &= \ln(1 + 2^{2u}) + \frac{4}{1 + 2^{u}} \\
            &\approx \ln(2^{2u}) + 0 \quad (\text{since } 2^u \gg 1) \\
            &= 2u \ln 2 = -2z \ln 2.
\end{align}
Substituting the approximation $\phi(z) \approx -2z \ln 2$ into $f_\ANO(r)$:
\begin{align}
    f_\ANO(r) &\approx C \phi(-1) - C(-2z \ln 2) + 1 \\
         &= C \phi(-1) + 2C \ln 2 \left( \frac{r - 1 - \epsilon}{\epsilon} \right) + 1.
\end{align}
Focusing on the slope (coefficient of $r$):
\begin{align}
    \text{Slope} &= \frac{2C \ln 2}{\epsilon} = \frac{2 \ln 2}{\epsilon} \cdot \frac{45\epsilon}{32\ln 2} = \mathbf{\frac{45}{16}}.
\end{align}
Thus, the \textbf{Linear Asymptote} as $r \to -\infty$ is given by:
\begin{align}
    f_\ANO(r) \sim \frac{45}{16}r + \left( C\phi(-1) - \frac{45(1+\epsilon)}{16} + 1 \right).
    \label{app:eq:f_-infty}
\end{align}
This explicitly proves that the restoration force saturates at a constant gradient of $45/16$.

\subsection{Proof of Unique Inflection Point}
\label{app:proof_inflection}

\begin{proof}
To analyze the convexity, we examine the roots of the second derivative. Let $x(r) = 2^{-z(r)}$. Since $r \mapsto z$ is linear and $z \mapsto 2^{-z}$ is monotonic, the mapping from $r \in \mathbb{R}$ to $x \in (0, +\infty)$ is strictly monotonic (specifically, decreasing). 

Referring to the simplified form of the gradient in Eq.~\eqref{app:eq:f_prime_simple}, we define an auxiliary function $g(x)$ proportional to $f_{\text{ANO}}'(r)$:
\begin{equation}
    g(x) = \frac{x^2}{1+x^2} - \frac{2x}{(1+x)^2}.
\end{equation}
The inflection points of $f_{\text{ANO}}$ correspond to the critical points of the gradient w.r.t $r$. Since $\frac{d f'}{dr} = \frac{dg}{dx} \frac{dx}{dr}$ and $\frac{dx}{dr} \neq 0$, it suffices to find the roots of $g'(x)$.
Differentiating $g(x)$ with respect to $x$:
\begin{align}
    g'(x) &= \frac{d}{dx}\left( \frac{x^2}{1+x^2} \right) - 2\frac{d}{dx}\left( \frac{x}{(1+x)^2} \right) \nonumber \\
    &= \frac{2x(1+x^2) - x^2(2x)}{(1+x^2)^2} - 2\left[ \frac{1\cdot(1+x)^2 - x\cdot 2(1+x)}{(1+x)^4} \right] \nonumber \\
    &= \frac{2x}{(1+x^2)^2} - 2\left[ \frac{(1+x) - 2x}{(1+x)^3} \right] \nonumber \\
    &= \frac{2x(1+x)^3 - 2(1-x)(1+x^2)^2}{(1+x^2)^2(1+x)^3} \nonumber \\
    &= \frac{2(x^5 + 5x^3 + x^2 + 2x - 1)}{(1+x^2)^2(1+x)^3}.
\end{align}
Let the numerator polynomial be $P(x) = x^5 + 5x^3 + x^2 + 2x - 1$. We analyze the roots of $P(x)$ for $x \in (0, +\infty)$:
\begin{enumerate}
    \item \textbf{Existence:} $P(0) = -1 < 0$ and $P(1) = 8 > 0$. By the Intermediate Value Theorem, there exists at least one root $x^* \in (0, 1)$.
    \item \textbf{Uniqueness:} The derivative $P'(x) = 5x^4 + 15x^2 + 2x + 2$ is strictly positive for all $x > 0$. Thus, $P(x)$ is strictly monotonically increasing on positive reals, implying the root $x^*$ is unique.
\end{enumerate}
Since the mapping $r \leftrightarrow x$ is a bijection, the unique solution $x^*$ corresponds to a unique state ratio $r^*$. Thus, $f_{\text{ANO}}(r)$ changes its convexity exactly once.
\end{proof}

\subsection{Proof of Bounded Gradients}
\label{app:proof_bounded}

\begin{proof}
We examine the asymptotic behavior of the gradient derived in Eq.~\eqref{app:eq:f_prime_simple}. Recall $z = \frac{r-1-\epsilon}{\epsilon}$.

\textbf{Case 1: $r \to +\infty$.}
This implies $z \to +\infty$. Consequently, $2^{-z} \to 0$ and $2^{-2z} \to 0$.
Substituting into Eq.~\eqref{app:eq:f_prime_simple}:
\begin{equation}
    \lim_{r\to +\infty} f_{\text{ANO}}'(r) = \frac{45}{32} \left[ \frac{0}{1+0} - \frac{0}{(1+0)^2} \right] = 0.
\end{equation}

\textbf{Case 2: $r \to -\infty$.}
This implies $z \to -\infty$, so $2^{-z} \to +\infty$. We analyze the limit of the terms in Eq.~\eqref{app:eq:f_prime_simple}:
\begin{align}
    \lim_{z \to -\infty} \frac{2^{-2z}}{1+2^{-2z}} &= \lim_{y \to \infty} \frac{y^2}{1+y^2} = 1, \\
    \lim_{z \to -\infty} \frac{2^{-z}}{(1+2^{-z})^2} &= \lim_{y \to \infty} \frac{y}{1+2y+y^2} = 0.
\end{align}
Thus:
\begin{equation}
    \lim_{r\to -\infty} f_{\text{ANO}}'(r) = \frac{45}{32} \left[ 2(1) - 4(0) \right] = \frac{45}{16}.
\end{equation}

Since $f_{\text{ANO}}'(r)$ is continuous on $\mathbb{R}$ and has finite limits at $\pm \infty$, the gradient is globally bounded.
\end{proof}

\section{Proof of Proposition~\ref{prop:convexity}}
\label{app:proof_convex}
\begin{proposition}[Necessity of Convexity Change]
Let $f(r)$ be a continuous function defined on $\mathbb{R}$. Suppose $f$ satisfies:
\begin{enumerate}
    \item \textbf{Bounded Maximization (Principle 2):} The set of maximizers is bounded above by $r^*$, and for $r > r^*$, $f(r)$ is strictly decreasing.
    \item \textbf{Asymptotic Stability (Principle 3 \& A):} The altitude of (sub)gradient decays to $0$ as $r \to +\infty$.
\end{enumerate}
Then, $f$ cannot be globally concave on the tail interval $(r^*, +\infty)$. It must exhibit \textbf{at least one} change in convexity (inflection point) in this region.
\end{proposition}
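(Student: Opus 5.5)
The plan is to argue by contradiction, using only the monotonicity of one-sided derivatives of concave functions. Suppose $f$ is concave on the tail interval $(r^*,+\infty)$. Then at every point of this open interval $f$ has finite one-sided derivatives $f'_-(r)\ge f'_+(r)$. Moreover $r\mapsto f'_+(r)$ is nonincreasing, and every element of the superdifferential at $r$ lies in $[f'_+(r),f'_-(r)]$. I read the hypothesis "the altitude of the (sub)gradient decays to $0$" as $\lim_{r\to+\infty} f'_+(r)=0$. Any reasonable reading, for instance that some selection of (super)gradients tends to $0$, gives the same conclusion below.

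\textbf{Step 1 (a strictly negative slope exists).} Because $f$ is strictly decreasing on $(r^*,+\infty)$, there is some $r_1>r^*$ with $f'_+(r_1)<0$. Otherwise $f'_+\ge 0$ on the whole interval. A continuous function with nonnegative right derivative everywhere on an interval is nondecreasing there (standard Dini-derivative monotonicity lemma), which contradicts strict decrease.

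\textbf{Step 2 (the slope cannot recover).} By concavity, $f'_+(r)\le f'_+(r_1)<0$ for all $r\ge r_1$. Every supergradient at such $r$ is at most $f'_-(r)$. For $r>r_1$ we have $f'_-(r)\le f'_+(r_1)$ as well, since the slope of any chord between $r_1$ and $r$ is at most $f'_+(r_1)$. Hence the magnitude of every supergradient satisfies $|\partial f(r)|\ge |f'_+(r_1)|>0$ for all $r>r_1$. This contradicts the decay of the gradient to $0$. Therefore $f$ is not concave on $(r^*,+\infty)$. Geometrically, a concave strictly decreasing tail must eventually lie below a line of negative slope, so it cannot flatten out.

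\textbf{Step 3 (from non-concavity to an inflection point).} This is the delicate step. Non-concavity alone does not force a \emph{change} of convexity: $f(r)=e^{-r}$ is convex, strictly decreasing, and has vanishing gradient on its tail. So I would make explicit the regularity implicit in Principles 1--2: $f$ is differentiable with a genuine peak at the maximizer, so $f$ is concave on some right-neighbourhood $(r^*,r^*+\delta)$. This holds, e.g., if $f''(r^*)<0$, or more generally if $f'$ is decreasing just to the right of $r^*$. Combining this local concavity with Step 2, which rules out concavity on all of $(r^*,+\infty)$, $f'$ cannot be nonincreasing throughout the tail. So $f'$ decreases near $r^*$ and somewhere later increases, which is exactly a switch from concave to convex: an inflection point in $(r^*,+\infty)$. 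If $f$ is $C^2$, one can locate it as a point where $f''$ changes sign, via the intermediate value theorem applied to $f''$. For $f_{\text{ANO}}$ this is consistent with the unique inflection point established in Appendix~\ref{app:proof_inflection}. I expect the main obstacle to be this final step: stating precisely what local-concavity assumption at the peak is needed, since without it the proposition only yields ``not globally concave on the tail'', not an inflection point.
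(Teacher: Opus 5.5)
Your Steps 1--2 are essentially the paper's proof. Both assume concavity on $(r^*,+\infty)$, extract a strictly negative slope from strict decrease, and use monotonicity of secant slopes to bound every later supergradient by that negative number, contradicting the decay hypothesis. The one difference is that the paper reads the negative slope directly off a secant. For $r^*<r_1<r_2$ it sets $-C=\frac{f(r_2)-f(r_1)}{r_2-r_1}<0$, and the three-chord inequality then gives $g\le -C$ for every supergradient $g$ at every $r\ge r_2$. Your Dini-derivative lemma in Step 1 is therefore correct but unnecessary: under the concavity assumption you get $f'_+(r_2)\le f'_-(r_2)\le -C<0$ at once.

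Your Step 3 concern is well founded, and the paper does not address it. Its proof ends immediately after the contradiction, asserting that $f$ ``must exhibit at least one change in convexity'' with no argument beyond non-concavity. Under the stated hypotheses that second claim is false. There are cleaner counterexamples than $e^{-r}$, which satisfies the maximizer hypothesis only vacuously because it has no maximizer.

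\begin{itemize}
\item $f(r)=e^{-r^2}$ with $r^*=1$ is $C^\infty$, has its unique maximizer $0\le r^*$, and is strictly decreasing on $(1,\infty)$ with $f'\to0$. Yet $f''(r)=(4r^2-2)e^{-r^2}>0$ on the whole tail.
\item $f(r)=e^{-|r|}$ with $r^*=0$ does the same with $r^*$ being the maximizer itself.
\item Even $f_{\text{ANO}}$ is a counterexample if $r^*$ is placed beyond its unique inflection point. That point lies at some $r>1+\epsilon$, since the root $x^*\in(0,1)$ of $P$ corresponds to $z>0$.
\end{itemize}

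Your local-concavity hypothesis does repair the statement, but it is a genuine extra assumption, not a consequence of having a differentiable peak. For example, take $r^*=0$ and $f'(r)=-r^2(2+\sin(1/r))$ for small $r>0$. Then $f''(r)=\cos(1/r)+O(r)$ changes sign in every right neighbourhood.

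A weaker repair suffices: require $r^*$ to be a maximizer with $f'_+(r^*)=0$, e.g.\ $f$ differentiable at its peak. Then the tail cannot be convex either. Convexity on $[r^*,\infty)$ would give $\frac{f(r)-f(r^*)}{r-r^*}\ge f'_+(r^*)=0$, i.e.\ $f(r)\ge f(r^*)$, contradicting strict decrease. Combined with your Step 2, for $f\in C^2$ on the tail this forces $f''$ to take both signs on $(r^*,\infty)$, and the intermediate value theorem gives the sign change. This condition is exactly what excludes the kink example $e^{-|r|}$ and the shifted-$r^*$ examples above.
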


\begin{proof}We proceed by contradiction. Assume that $f(r)$ is globally concave on the interval $(r^*, +\infty)$.

Since $f$ is strictly decreasing on $(r^*, +\infty)$, we can choose two arbitrary points $r_1, r_2$ such that $r^* < r_1 < r_2$. Let $g_2 \in \partial f(r_2)$ be any supergradient of $f$ at $r_2$. By the definition of concavity, we have:

\begin{equation}f(r_1) \le f(r_2) + g_2 (r_1 - r_2).\end{equation}

Rearranging the terms, we obtain:

\begin{equation}
g_2 (r_1 - r_2) \ge f(r_1) - f(r_2).
\end{equation}

Since $r_1 < r_2$, we have $r_1 - r_2 < 0$. Dividing both sides by $(r_1 - r_2)$ reverses the inequality:

\begin{equation}
g_2 \le \frac{f(r_1) - f(r_2)}{r_1 - r_2}.
\end{equation}
Let $C = \frac{f(r_1) - f(r_2)}{r_2 - r_1}$. Since $f$ is strictly decreasing, $f(r_1) > f(r_2)$, implying $C > 0$. The inequality becomes:

\begin{equation}
\label{eq:bound_r2}
g_2 \le -C.
\end{equation}

This inequality holds for \textit{any} $g_2 \in \partial f(r_2)$.

Now, consider an arbitrary $r > r_2$. Let $g \in \partial f(r)$ be any supergradient at $r$. Again, by the definition of concavity applied between $r$ and $r_2$:

\begin{equation}
f(r_2) \le f(r) + g(r_2 - r) \implies g(r_2 - r) \ge f(r_2) - f(r).
\end{equation}

Dividing by $(r_2 - r) < 0$:
\begin{equation}
g \le \frac{f(r_2) - f(r)}{r_2 - r}.
\end{equation}

From Eq.~\eqref{eq:bound_r2}, we know that the secant slope is non-increasing for concave functions. Thus:
\begin{equation}
g \le \frac{f(r_2) - f(r)}{r_2 - r} \le \frac{f(r_1) - f(r_2)}{r_2 - r_1} = -C.
\end{equation}
Consequently, for all $r > r_2$, every supergradient $g \in \partial f(r)$ satisfies $g \le -C$. Taking the absolute value (since $g$ is negative), we have:
\begin{equation}
|g| \ge C > 0.
\end{equation}
This implies that $\liminf_{r \to +\infty} |\partial f(r)| \ge C > 0$, which directly contradicts Principle 3 (Asymptotic Stability) requiring $\lim_{r \to +\infty} \partial f(r) = 0$.

Therefore, the assumption must be false. $f$ cannot be globally concave on $(r^*, +\infty)$ and must exhibit at least one change in convexity.\end{proof}


\end{document}